\newcommand{\E}{\mathbb{E}}
\newcommand{\bbR}{\mathbb{R}}
\newcommand{\V}{{\rm Var}}
\newcommand{\Tr}{{\rm Tr}}
\newcommand{\sgn}{{\rm sgn}}
\newcommand{\dd}{\mathop{}\!\mathrm{d}}
\newcommand{\RR}{\mathbb{R}}
\newcommand{\id}{\mathrm{id}}
\newcommand{\diag}{\mathrm{diag}}
\newcommand{\cZ}{\mathcal{Z}}
\newcommand{\cM}{\mathcal{M}}
\theoremstyle{plain}
\newtheorem{theorem}{Theorem}[section]
\newtheorem{proposition}[theorem]{Proposition}
\newtheorem{lemma}[theorem]{Lemma}
\newtheorem{corollary}[theorem]{Corollary}
\theoremstyle{definition}
\newtheorem{definition}[theorem]{Definition}
\theoremstyle{remark}
\newtheorem{remark}[theorem]{Remark}
\title{Parameter Symmetry and Noise Equilibrium\\ of Stochastic Gradient Descent}
\author{%
  Liu Ziyin\\
  \quad Massachusetts Institute of Technology\\
  NTT Research\\
  {\small \texttt{ziyinl@mit.edu}}\\
   \And
   Mingze Wang \\
   Peking University \\
   {\small \texttt{mingzewang@stu.pku.edu.cn}}
   \AND
   Hongchao Li \\
  The University of Tokyo \\
  {\small \texttt{lhc@cat.phys.s.u-tokyo.ac.jp}}
   \And
   Lei Wu \\
   Peking University \\
   {\small \texttt{leiwu@math.pku.edu.cn}}
}
\begin{document}

\maketitle

\begin{abstract}
Symmetries are prevalent in deep learning and can significantly influence the learning dynamics of neural networks. In this paper, we examine how exponential symmetries -- a broad subclass of continuous symmetries present in the model architecture or loss function -- interplay with stochastic gradient descent (SGD). We first prove that gradient noise creates a systematic motion (a ``Noether flow") of the parameters $\theta$ along the degenerate direction to a unique initialization-independent fixed point $\theta^*$. These points are referred to as the {\it noise equilibria} because, at these points, noise contributions from different directions are balanced and aligned. Then, we show that the balance and alignment of gradient noise can serve as a novel alternative mechanism for explaining important phenomena such as progressive sharpening/flattening and representation formation within neural networks and have practical implications for understanding techniques like representation normalization and warmup.

\vspace{-3mm}
\end{abstract}

\vspace{-2mm}
\section{Introduction}
\vspace{-2mm}

Stochastic gradient descent (SGD) and its variants have become the cornerstone algorithms used in deep learning. In the continuous-time limit, the algorithm can be written as \cite{JMLR:v20:17-526,hu2017diffusion,li2021validity,sirignano2020stochastic,pmlr-v134-fontaine21a}:
\begin{equation}\label{eq: sde}
    \dd{\theta}_t = -\nabla L(\theta_t)\dd t+\sqrt{2\sigma^2 \Sigma(\theta_t)}\dd W_t,
\end{equation}
where $\Sigma(\theta)$ is the covariance matrix of gradient noise (Section~\ref{sec: pre}) with the prefactor $\sigma^2=\eta/(2S)$ modeling the impact of a finite learning rate $\eta$ and batch size $S$; 
$W_t$ denotes the Brownian motion. When  $\sigma=0$, Eq.~\eqref{eq: sde} corresponds to gradient descent (GD)\footnote{``Gradient descent" and ``gradient flow" are used interchangeably as we work in the continuous-time limit.}. However, SGD and GD can exhibit significantly different behaviors, often converging to solutions with significantly different levels of performance 
\cite{Keskar2017,wu2018sgd,zhu2018anisotropic,liu2021noise, ziyin2022strength}. Notably, even 
when $\sigma^2\ll 1$, where we expect a close resemblance between SGD and GD over finite time \cite{JMLR:v20:17-526}, their long-time behaviors still differ substantially \cite{pesme2021implicit}. These observations indicate that gradient noise can bias the dynamics significantly, and revealing its underlying mechanism is thus crucial for understanding the disparities between SGD and GD. 

\vspace{-1mm}
\paragraph*{Contribution.}
In this paper, we study the how of SGD noise biases training through the lens of symmetry. Our key contributions are summarized as follows. We show that 

\begin{enumerate}[noitemsep,topsep=1pt, parsep=0pt,partopsep=0pt, leftmargin=13pt]
    \item when symmetry exists in the loss function, the dynamics of SGD can be precisely characterized and is different from GD along the degenerate direction;
    \item  the treatment of common symmetries, including the rescaling and scaling symmetry in deep learning, can be unified  in a single theoretical framework that we call the exponential symmetry;
    \item for any $\theta$, every exponential symmetry implies the existence of a unique and attractive fixed point along the degenerate direction for SGD;
    \item symmetry and balancing of noise can serve as novel mechanisms for important deep learning phenomena such as progressive sharpening/flattening and latent representation formation.
\end{enumerate}

\begin{wrapfigure}{r}{0.35\linewidth}
\vspace{-1.5em}
    \includegraphics[width=\linewidth]{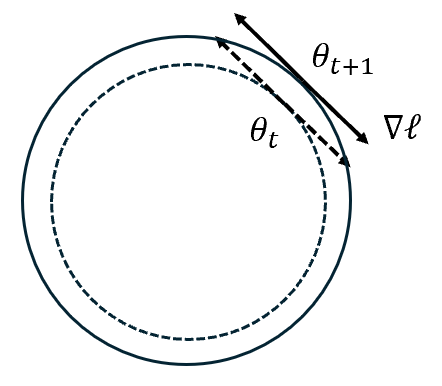}
    \caption{\small An example of a 2d loss function with scale invariance: $\ell(\theta) = \ell(\lambda \theta)$ for a scalar $\lambda$ and $\theta \in \mathbb{R}^2$. Because of the symmetry, the gradient $\nabla \ell $ must be \textit{tangential} to the circles whose center is the origin. This implies that the norm $\|\theta\|$ does not change during gradient flow training. However, when the training is stochastic or discrete-time, SGD must move outward. If the model starts at $\theta_t$, it must move to a larger circle. As an illustrative example, this loss function has a unique and attractive fixed point: $\|\theta\|=\infty$. SGD will diverge after training under scale invariance. Also, see Remark~\ref{remark1} for a discussion of the difference between discrete-time and continuous-time dynamics.}
    \label{fig:fixed point illustration}
    \vspace{-3em}
\end{wrapfigure}
See Figure~\ref{fig:fixed point illustration} for an illustration of how symmetry leads to a systematic flow of SGD. This work is organized as follows. We discuss the most relevant works in Section~\ref{sec: related works}. The main theoretical results are presented in Section~\ref{sec: theory}. We apply our theory to understand specific problems and present numerical results in Section~\ref{section: Applications}. The last section concludes this work. All the proofs are presented in the Appendix.

\vspace{-2mm}
\section{Related Works}\label{sec: related works}
\vspace{-2mm}
The dynamics of SGD in the degenerate directions of the loss landscape is a poorly understood problem. There are two closely related 
prior works. Ref.~\cite{ziyin2023law} studies the dynamics of SGD when there is a simple rescaling symmetry and applies it to derive the stationary distribution of SGD for linear networks. Our result is more general because rescaling symmetry is the simplest case of exponential symmetries\footnote{These are known as ``continuous symmetries." Prior works also studied SGD training under discrete symmetries \cite{ziyinsymmetry, chen2023stochastic}, which are different from continuous symmetries.}. Another related work is Ref.~\cite{li2020reconciling}, which studies a different special case of exponential symmetry, the scale invariance, and in the presence of weight decay. Their analysis assumes the existence of the fixed point of the dynamics, which we proved to exist. Also related is the study of conservation laws under gradient flow \cite{hidenori2021noether, kunin2020neural, marcotte2023abide, zhao2022symmetry, xu2024does}, which we will discuss more closely in Section~\ref{sec: theory}. However, these works do not take the stochasticity of training into account. Comparing with these results that assume no stochasticity, our result could suggest that SGD converges to initialization-independent solutions, whereas the GD finds solutions are strongly initialization-dependent. In addition, Section~\ref{app sec: discrete time sgd} extends our main result to discrete-time SGD.

\vspace{-2mm}
\section{Preliminaries}
\label{sec: pre}
\vspace{-2mm}

\paragraph*{Setup and Notations.}
Let $\ell: \Omega\times \cZ\mapsto\RR$ denote the per-sample loss, with $\Omega$ and $\cZ$ denoting the parameter and sample space, respectively. Here, $z\in\cZ$ includes both the input and label and accordingly. We use $\E_z = \E$ to denote the expectation over a given training set. Therefore, $L(\theta) = \E_z[\ell(\theta, z)]$ is the empirical risk function. The covariance of gradient noise is given by 
\begin{align*}
\Sigma(\theta) = \E_{z}[\nabla \ell(\theta, z) \nabla \ell(\theta, z)^\top] - \nabla L(\theta) \nabla L(\theta)^\top.
\end{align*}
Additionally,
we use $\Sigma_v(\theta):=\E_{z}[\nabla_v \ell(\theta,z)\nabla_v\ell(\theta,z)^\top]- \nabla_v L(\theta) \nabla_v L(\theta)^\top$ to denote the covariance of gradient noise impacting on the subset of parameters $v$. Denote by $(\theta_t)_{t\geq 0}$ the trajectory of SGD or GD. For any $h:\Omega\mapsto \RR$, we write $h_t = h(\theta_t)$ and $\dot{h}(\theta_t)=\frac{\dd}{\dd t}h(\theta_t)$ for brevity.  When the context is clear, we also use  $\ell(\theta)$ to denote $\ell(\theta,z)$.

\paragraph*{Symmetry.}
The per-sample loss $\ell(\cdot,\cdot)$ is said to possess the $Q$-symmetry if
\begin{equation}\label{eqn: sym}
    \ell(\theta, z) = \ell(Q_\rho(\theta), z), \forall \rho\in\RR,
\end{equation}
where $(Q_\rho)_{\rho\in\RR}$ is a set of continuous transformation parameterized by $\rho\in\RR$. Without loss of generality, we assume $Q_0=\mathrm{id}$. The most common symmetries exist within the model $f$, namely $f_\theta$ is invariant under certain transformations of $\theta$. However, our formalism is slightly more general in the sense that it is also possible for the model to be variant while the per-sample loss remains unchanged, which appears in self-supervised learning \cite{ziyin2023what}, for example.

\vspace{-2mm}
\section{Continuous Symmetry and Noise Equilibria}\label{sec: theory}
\vspace{-2mm}

Taking the derivative with respect to $\rho$ at $\rho=0$ in Eq.~\eqref{eqn: sym}, we have
\begin{equation}\label{eqn: x1}
    0  = \nabla_\theta \ell (\theta, z) \cdot J(\theta),
\end{equation}
where $J(\theta)=\frac{\dd Q_\rho(\theta)}{\dd\rho}|_{\rho=0}$.  Denote by $C$ be the antiderivative of $J$, that is, $\nabla C(\theta)=J(\theta)$. 
Then, taking the expectation over $z$ in \eqref{eqn: x1} gives 
the following conservation law for GD solutions $(\theta_t)_{t\geq 0}$:
 \vspace*{-.2em}
\begin{equation}\label{eqn: conservation-law}
    \dot{C}(\theta_t) = 0.
\end{equation}
Essentially, this is a consequence of Noether's theorem \cite{noether1918invariante}, and $C$ will be called a ``Noether charge" in analogy to theoretical physics. The conservation law \eqref{eqn: conservation-law} implies that the GD trajectory is constrained on the manifold $\{\theta: C(\theta) = C(\theta_0)\}$. We refer to Ref.~\cite{kunin2020neural} for a study of this type of conservation law under the Bregman Lagrangian \cite{jordan2018dynamical}.

\vspace{-2mm}
\subsection{Noether Flow in Degenerate Directions}
\vspace{-1mm}

In this paper, we are interested in how $C(\theta_t)$ changes, if it changes at all, under SGD. 
By  Ito's lemma, we have the following {\em Noether flow} (namely, the flow of the Noether charge):
\begin{equation}\label{eqn: ito flow}
\dot{C}(\theta_t) = \sigma^2 \Tr\left[\Sigma(\theta_t) \nabla^2C(\theta_t)\right],
\end{equation}
where $\nabla^2 C$ denotes the Hessian matrix of $C$. The derivation is deferred to Appendix~\ref{app sec: proofs}. By definition, $\Sigma(\theta_t)$ is always positive semidefinite (PSD). Thus, we immediately have: if $\nabla^2 C$ is PSD throughout training, $C(\theta_t)$ is a monotonically increasing function of time. Conversely, if $\nabla^2_\theta C$ is negative semidefinite (NPD), $C(\theta_t)$ is a monotonically decreasing function of time.

The existence of symmetry implies that (with suitable conditions of smoothness) any solution $\theta$ resides within a connected, loss-invariant manifold, defined as 
$\cM_\theta:=\{Q_\rho(\theta)\,:\, \rho\in\RR\}$. We term directions within this manifold as ``degenerate directions'' since movement along them does not change the loss value. Notably, the biased flow   \eqref{eqn: ito flow} suggests that SGD noise can drive SGD to explore within this manifold along these degenerate directions since the value of $C(\theta)$ for $\theta\in \cM_\theta$ can vary.

\vspace{-2mm}
\subsection{Exponential symmetries}
\vspace{-1mm}
Now, let us focus on a family of symmetries that is common in deep learning. Since the corresponding conserved quantities are quadratic functions of the model parameters, we will refer to this class of symmetries as {\em exponential symmetries}.

\begin{definition}
$(Q_\rho)_\rho$ is said to be a exponential symmetry if $J(\theta):=\frac{\dd}{\dd\rho} Q_\rho (\theta)|_{\rho=0} = A \theta$ for a symmetric matrix $A$.
\end{definition}
This implies when $\rho\ll 1$, $Q_\rho=\id+ \rho A + o(\rho)$. In the sequel, we also use the words ``$A$-symmetry" and ``$Q$-symmetry" interchangeably since all properties of $Q_\rho$ we need can be derived from $A$. This definition applies to the following symmetries that are common in deep learning: 
\begin{itemize}[noitemsep,topsep=1pt, parsep=0pt,partopsep=0pt, leftmargin=10pt]
\item {\em Rescaling symmetry}: $Q_\rho (a,b)=(a(\rho+1),b/(\rho+1))$, which appears in linear and ReLU networks \cite{Dinh_SharpMinima, ziyin2023law}. In this symmetry, $A=\diag(I_a,-I_b)$, where $I$ is the identity matrix with dimensions matching that of $a$ and $b$.
\item {\em Scaling symmetry}: $Q_\rho\theta = (\rho+1)\theta$, which exists whenever part of the model normalized using techniques like batch normalization \cite{ioffe2015batch}, layer normalization \cite{ba2016layer}, or weight normalization \cite{salimans2016weight}. In this case, $A=I$.
\item {\em Double rotation symmetry:}  This symmetry appears when parts of the model involve a matrix factorization problem, where for an arbitrary invertible matrix $B$ $\ell=\ell(UW) = \ell(UB B^{-1}W)$. Writing the exponential symmetry for this case is a little cumbersome. We need first to view $U$ and $W$ as a single vector, and the exponential transformation is given by a block-wise diagonal matrix ${\diag}(B,...,B,B^{-1},..., B^{-1})$. See Section~\ref{section: Matrix Factorization} for more detail.
\end{itemize} 
It is possible for only a subset of parameters to have a given symmetry. Mathematically, this corresponds to the case when $A$ is low-rank. It is also common for $\ell$ to have multiple exponential symmetries at once, often for different (but not necessarily disjoint) subsets of parameters. For example, a ReLU network has a different rescaling symmetry for every hidden neuron.

It is obvious that under this $Q$ symmetry, the Noether charge has a simple quadratic form:
\begin{equation}
    C(\theta) = \theta^{\top} A \theta.
\end{equation}
Moreover, the interplay between this symmetry and weight decay can be explicitly characterized in our framework. To this end, we need the following definition.
\begin{definition}
    For any $\gamma\in\mathbb{R}$, we say $\ell_\gamma(\theta,x) := \ell(\theta,x) + \gamma \|\theta\|^2$ has the $Q$  symmetry as long as $\ell(\theta,x)$ has the $Q$ symmetry.
\end{definition}
For the SGD dynamics that minimizes $L_\gamma(\theta)=\E_x[\ell_\gamma(\theta,x)]$, it follows from \eqref{eqn: ito flow} that 
\begin{equation}\label{eq: quadratic dyanmics}
    \dot{C}(\theta_t) = -4\gamma C(\theta_t) +  \sigma^2 \Tr[\Sigma(\theta_t) A]=: G(\theta_t).
\end{equation}
Thus, a positive  $\gamma$ always causes $|C(\theta_t)|$ to decay, and the influence of symmetry is determined by the spectrum of $A$. Denote by $A=\sum_{j}\mu_j n_jn_j^\top$ the eigendecomposition of $A$. Then,
\begin{equation*}
    \Tr[\Sigma(\theta_t) A] = \sum_{i: \mu_i > 0} \mu_i n_i^{\top} \Sigma(\theta_t) n_i  + \sum_{j: \mu_j < 0} \mu_j n_j^{\top} \Sigma(\theta_t) n_j.
\end{equation*}
This gives a clear interpretation of the interplay between SGD noise and the exponential symmetry: the noise along the positive directions of $A$ causes $C(\theta_t)$ to grow, while the noise along the negative directions causes $C(\theta_t)$ to decay. In other words, the noise-induced dynamics of $C(\theta_t)$ is determined by the competition between the noise along the positive- and negative-eigenvalue directions of $A$.

\vspace{-1mm}
\paragraph*{Time Scales.}
The above analysis implies that the dynamics of SGD can be decomposed into two parts: the dynamics that directly reduce loss, and the dynamics along the degenerate direction of the loss, which is governed by Eq~\eqref{eqn: ito flow}. These two dynamics have essentially independent time scales. The first part is independent of the $\sigma^2$ in expectation, whereas the time scale of the dynamics in the degenerate directions depends linearly on $\sigma^2$.

The first time scale $t_{\rm erm}$ is due to the dynamics of empirical risk minimization. The second time scale $t_{\rm equi}$ is the time scale for Eq.~\eqref{eqn: ito flow} to reach equilibrium, which is irrelevant to direct risk minimization. When the parameters are properly tuned, $t_{\rm erm}$ is of order $1$, whereas $t_{\rm equi}$ is proportional to $\sigma^2=\eta/(2S)$. Therefore, when $\sigma^2$ is large, the parameters will stay close to the equilibrium point early in the training, and one can expect that $\dot{C}(\theta_t)$ is approximately zero after $t_{\rm equi}$. In line with Ref.~\cite{li2020reconciling}, this can be called the fast-equilibrium phase of learning. Likewise, when $\sigma^2\ll 1$, the approach to equilibrium will be slower than the actual time scale of risk minimization, and the dynamics in the degenerate direction only take off when the model has reached a local minimum. This can be called the slow-equilibrium phase of learning. 

\vspace{-2mm}
\subsection{Noise Equilibrium and Fixed Point Theorem}
\vspace{-1mm}

It is important and practically relevant to study the stationary points of dynamics in Eq.~\eqref{eq: quadratic dyanmics}. Formally, the stationary point is reached when $- \gamma C(\theta)  + \eta \Tr[\Sigma(\theta) A] = 0$. Because we make essentially no assumption about $\ell(\theta)$ and $\Sigma(\theta)$, one might feel that it is impossible to guarantee the existence of a fixed point. Remarkably, we prove below that a fixed point exists and is unique for every connected degenerate manifold.

To start, consider the exponential maps generated by $A$:
\[
    e^{\lambda A} \theta := \lim_{\rho\to 0} (I + \rho A+o(\rho))^{\lambda/\rho} \theta,
\]
which applies the symmetry transformation to $\theta$ for $\lambda/\rho$ times. Then, it follows that if we apply $Q_\rho$ transformation to $\theta$ infinitely many times and for a perturbatively small $\rho$,
\begin{equation}
    \ell(\theta) = \ell\left(e^{\lambda A}\theta \right).
\end{equation}
Thus, the exponential symmetry implies the symmetry with respect to an exponential map, a fundamental element of Lie groups \cite{hall2013lie}. Note that exponential-map symmetry is also an exponential symmetry by definition. For the exponential map, the degenerate direction is clear: for any $\lambda$, $\theta$ connects to $e^{\lambda A} \theta$ without any loss function barrier. Therefore, the degenerate direction for any exponential symmetry is unbounded. Now, we prove the following fixed point theorem, which shows that for every exponential symmetry and every $\theta$, there is one and only one corresponding fixed point in the degenerate direction.

\begin{theorem}\label{theor: fixed point theorem}
    Let the per-sample loss satisfy the $A$-exponential symmetry and $\theta_\lambda := \exp[\lambda A] \theta$. 
    Then, for any $\theta$ and any $\gamma\geq 0$,\footnote{A similar result can be proved for the discrete-time SGD. See Section~\ref{app sec: discrete time sgd}.}
    \begin{enumerate}[noitemsep,topsep=0pt, parsep=0pt,partopsep=0pt, leftmargin=15pt]
        \item[(1)] $G(\theta_\lambda)$ (Eq.~\eqref{eq: quadratic dyanmics}) and $-C(\theta_\lambda)$ are monotonically decreasing functions of $\lambda$;
        \item[(2)] there exists a $\lambda^* \in \mathbb{R}\cup \{\pm \infty\}$ such that $G(\theta_{\lambda^*})= 0$;
        \item[(3)]  in addition, if $G(\theta_\lambda)\neq 0$, $\lambda^*$ is unique and $G(\theta_\lambda)$ is strictly monotonic;
        \item[(4)] in addition to (3), if $\Sigma(\theta)$ is differentiable, $\lambda^*(\theta)$ is a differentiable function of $\theta$.
    \end{enumerate}
\end{theorem}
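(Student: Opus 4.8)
The plan is to exploit the exponential symmetry to obtain exact transformation rules for $\nabla\ell$, $\Sigma$, and $C$ along the orbit $\theta_\lambda=\exp[\lambda A]\theta$, which collapse $G(\theta_\lambda)$ into an explicit sum of exponentials in the eigenbasis of $A$; all four claims then follow from elementary analysis of that single expression. First I would differentiate the identity $\ell(\exp[\lambda A]\theta)=\ell(\theta)$ with respect to $\theta$. Since $A$ is symmetric, $\exp[\lambda A]$ is symmetric, so the chain rule gives $\nabla\ell(\theta_\lambda)=\exp[-\lambda A]\nabla\ell(\theta)$. Taking expectations over $z$ of the outer product then yields $\Sigma(\theta_\lambda)=\exp[-\lambda A]\Sigma(\theta)\exp[-\lambda A]$, and a direct computation gives $C(\theta_\lambda)=\theta^\top A\exp[2\lambda A]\theta$. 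Writing the eigendecomposition $A=\sum_j\mu_j n_jn_j^\top$ and setting $c_i=(n_i^\top\theta)^2\ge 0$ and $s_i=n_i^\top\Sigma(\theta)n_i\ge 0$ (nonnegative because $\Sigma$ is PSD), I obtain the closed form
\[
G(\theta_\lambda) = -4\gamma\sum_i\mu_i c_i e^{2\lambda\mu_i} + \sigma^2\sum_i\mu_i s_i e^{-2\lambda\mu_i}.
\]

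For claim (1) I simply differentiate, finding $\tfrac{d}{d\lambda}G(\theta_\lambda)=-8\gamma\sum_i\mu_i^2 c_i e^{2\lambda\mu_i}-2\sigma^2\sum_i\mu_i^2 s_i e^{-2\lambda\mu_i}\le 0$, since every summand is nonnegative given $\gamma\ge 0$, $\sigma^2>0$, and $c_i,s_i\ge 0$; the companion fact $\tfrac{d}{d\lambda}(-C(\theta_\lambda))=-2\sum_i\mu_i^2 c_i e^{2\lambda\mu_i}\le 0$ is immediate. For claim (2) I examine the limits $\lambda\to\pm\infty$: grouping terms by the sign of $\mu_i$, the positive-eigenvalue noise terms and negative-eigenvalue weight-decay terms blow up to $+\infty$ as $\lambda\to-\infty$, while the complementary terms blow up to $-\infty$ as $\lambda\to+\infty$. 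Hence $G$ decreases from $+\infty$ to $-\infty$ in the generic case and the intermediate value theorem supplies a finite zero. When an entire group of coefficients vanishes (e.g. no noise along the negative directions together with $\gamma=0$), $G$ remains monotone but bounded and approaches $0$ only in the limit, which is precisely the situation covered by allowing $\lambda^*\in\{\pm\infty\}$, with $G(\theta_{\lambda^*})$ read as the corresponding limit.

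Claim (3) hinges on the observation that each summand of $\tfrac{d}{d\lambda}G$ is a strictly positive exponential times a fixed nonnegative coefficient, so $\tfrac{d}{d\lambda}G$ vanishes at one $\lambda$ iff it vanishes at every $\lambda$ iff all active coefficients vanish iff $G\equiv 0$. Therefore, whenever $G\not\equiv 0$, the flow $G(\theta_\lambda)$ is strictly decreasing, which forces $\lambda^*$ to be unique. Claim (4) then follows from the implicit function theorem applied to $\tilde G(\lambda,\theta):=G(\exp[\lambda A]\theta)$ at a finite $\lambda^*$: differentiability of $\Sigma$ makes $\tilde G$ continuously differentiable in $(\lambda,\theta)$, and $\partial_\lambda\tilde G=\tfrac{d}{d\lambda}G(\theta_\lambda)<0$ by (3), so the equation $\tilde G(\lambda^*(\theta),\theta)=0$ defines $\lambda^*(\theta)$ as a differentiable function of $\theta$.

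The main obstacle I anticipate is not any single computation but the bookkeeping in (2): one must carefully enumerate which of the four groups of terms (positive versus negative eigenvalues, crossed with the weight-decay versus noise contributions) actually survive, treat the degenerate cases $\gamma=0$ and vanishing $s_i$, and rigorously justify the extended-real-line reading of ``$G(\theta_{\lambda^*})=0$'' so that existence becomes genuinely unconditional rather than restricted to the generic blow-up scenario.
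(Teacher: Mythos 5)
Your proposal is correct and follows essentially the same route as the paper's proof: both derive the orbit transformation rules $\nabla\ell(\theta_\lambda)=e^{-\lambda A}\nabla\ell(\theta)$ and $\Sigma(\theta_\lambda)=e^{-\lambda A}\Sigma(\theta)e^{-\lambda A}$, expand $G(\theta_\lambda)$ in the eigenbasis of $A$ as a signed sum of exponentials (your two sign-groups are exactly the paper's $I_1$ and $I_2$), and conclude via monotonicity and the intermediate value theorem for existence, strict monotonicity for uniqueness, and the implicit function theorem for part (4). The only cosmetic difference is that you differentiate $G$ in $\lambda$ directly, where the paper argues via the monotone limits of $I_1$ and $I_2$.
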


\begin{remark}\label{remark1}
    It is now worthwhile to differentiate gradient flow (GF), GD, SGD, and stochastic gradient flow (SGF). Technically, one can prove that the same result holds for discrete-time GD and SGD in expectation, and GF is the only of the four algorithms that do not obey this theorem (See Section~\ref{app sec: discrete time sgd}), and so one could argue that the discrete step size is the essential cause of noise balance. Mathematically, the SGF can be seen as a model of the leading order effect of having a finite step size and thus also share this effect (remember that the Ito Lemma contains a second-order term in $d\theta$). That being said, there is a practical caveat: in practice, we find it much easier for models to reach these fixed points with SGD than with GD, and so it is fair to say that this effect is the most dominant when gradient noise is present.
\end{remark}

Part (1), together with Part (2), implies that the unique stationary point is essentially attractive. This is because $\dot{C}$ decreases with $\lambda$ while $C$ increases with it. Let $C^*=C(\theta_{\lambda^*})$. Thus, $C(\theta) -C^*$ always have the opposite sign of $\lambda^*$, while $\frac{d}{dt}C(\theta)$ will have the same sign. Conceptually, this means that $C$ will always move to reduce its distance to $C^*$. Assuming that $C^*$ is a constant in time (or close to a constant, which is often the case at the end of training), Part (1) implies that $\frac{d}{dt}(C(\theta) -C^*) \propto -\mathrm{sgn}(C(\theta) -C^*)$, signaling a convergence to $C(\theta) =C^*$. In other words, SGD will move to restore the balance if it is perturbed away from $\lambda^*=0$. If the matrix $\Sigma A$ is well-behaved, one can indeed establish the convergence to the fixed point in the relative distance even if $C^*$ is mildly divergent due to diffusion. Because this part is strongly technical and our focus is on the fixed points, we leave the formal statement and its discussion to Appendix~\ref{app sec: convergence to fixed point}.
\begin{theorem}
    (Informal) Let $C^*$ follow a drifted Brownian motion and $\Sigma A$ satisfy two well-behaved conditions. Then, either $C - C^* \to 0$ in $L_2$ or ${(C-C^*)^2}/{(C^*)^2} \to 0 $ in probability.
\end{theorem}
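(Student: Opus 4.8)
The plan is to treat the centered Noether charge $D_t := C(\theta_t) - C^*_t$ as a mean-reverting process whose target drifts stochastically, and to control it through a Gronwall estimate on $\E[D_t^2]$. First I would write the Ito dynamics of $C_t := C(\theta_t)$ under Eq.~\eqref{eq: sde}. Since $C(\theta)=\theta^\top A\theta$ gives $\nabla C = 2A\theta$ and $\nabla^2 C = 2A$, and since the conservation law \eqref{eqn: conservation-law} annihilates the gradient-flow drift along the degenerate direction, the drift of $C_t$ is exactly $G(\theta_t)$ of Eq.~\eqref{eq: quadratic dyanmics}, while the martingale part has quadratic-variation rate
\[
\dd C_t = G(\theta_t)\,\dd t + \xi_t\,\dd W_t, \qquad \xi_t^2 = 8\sigma^2\,\theta_t^\top A\,\Sigma(\theta_t)\,A\,\theta_t .
\]
The two ``well-behaved'' conditions I would make precise are: (i) a uniform contraction---by Part~(1) of Theorem~\ref{theor: fixed point theorem}, $G(\theta_\lambda)$ is strictly decreasing in $\lambda$ while $C(\theta_\lambda)$ is strictly increasing, so $G$ is a decreasing function of $C$ along each degenerate manifold, and I would require $-\,\dd G/\dd C \ge k_0 > 0$, which linearizes the drift to $G(\theta_t)\approx -k_0 D_t$; and (ii) a diffusion bound $\xi_t^2 \le \bar\xi^2$ uniformly in $t$. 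Write $\dd C^*_t = \mu\,\dd t + \beta\,\dd B_t$ for the drifted-Brownian equilibrium.

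Next I would apply Ito to $D_t^2$ and take expectations. Using the contraction bound together with Young's inequality to absorb the cross terms generated by $\mu$, $\beta$, and the correlation $\langle C,B\rangle$ into the dissipative part, I expect a one-dimensional comparison inequality of the form
\[
\frac{\dd}{\dd t}\,\E[D_t^2] \;\le\; -\,k_0\,\E[D_t^2] \;+\; R_t,
\]
where $R_t$ collects the squared drift/diffusion sources, bounded by $R_t \lesssim (\bar\xi^2 + \beta^2 + \mu^2)$. Gronwall's lemma then gives $\E[D_t^2]\le e^{-k_0 t}\E[D_0^2] + \sup_s R_s / k_0$, so $\E[D_t^2]$ relaxes exponentially at rate $k_0$ to an $O((\bar\xi^2+\beta^2+\mu^2)/k_0)$ floor, and in particular $\sup_t\E[D_t^2]<\infty$.

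The dichotomy then follows from how $C^*_t$ behaves. In the clean, non-divergent regime ($\mu=\beta=0$ and $\xi_t^2\to 0$, the fast-equilibrium phase), the floor vanishes and Gronwall yields $\E[D_t^2]\to 0$, i.e.\ $C-C^*\to 0$ in $L_2$. When instead $C^*_t$ is mildly divergent (say $\beta\neq 0$, so $C^*_t=\beta B_t$ spreads out and $(C^*_t)^2$ is typically of order $t$), absolute convergence fails but the relative error does not: since $\E[D_t^2]$ stays bounded, Markov on the numerator gives $\Pr(D_t^2 > \epsilon M) \le \E[D_t^2]/(\epsilon M)$, while the drifted-Brownian assumption gives $\Pr((C^*_t)^2 < M)\to 0$ for fixed $M$. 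Choosing a slowly growing threshold $M=M(t)\to\infty$ (e.g.\ $M=\sqrt{t}$) makes both terms in the split
\[
\Pr\!\Big(\tfrac{D_t^2}{(C^*_t)^2}>\epsilon\Big)\;\le\;\Pr\big(D_t^2>\epsilon M\big)+\Pr\big((C^*_t)^2<M\big)
\]
vanish, so $(C-C^*)^2/(C^*)^2\to 0$ in probability.

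The main obstacle I anticipate is the coupling between the bounded-but-fluctuating numerator $D_t$ and the divergent denominator $C^*_t$ in the relative regime: a direct Ito expansion of the ratio $D_t^2/(C^*_t)^2$ produces cross terms and a quadratic-variation contribution from $C^*_t$ that are awkward to sign. The cleaner route I would follow is to \emph{decouple}---prove the uniform second-moment bound $\sup_t\E[D_t^2]<\infty$ purely from the contraction estimate, separately establish the spreading of $C^*_t$ from the drifted-Brownian assumption, and combine them by the conditioning/Markov split above, never differentiating the ratio. A secondary difficulty is justifying the linearization $G\approx -k_0 D_t$ \emph{uniformly} along the whole trajectory rather than only asymptotically near $\lambda^*$; this is precisely what the two well-behaved conditions are designed to supply, so the technical heart of the argument is converting the monotonicity of $G$ from Theorem~\ref{theor: fixed point theorem} into a global lower bound on $-\,\dd G/\dd C$ and a global bound on $\xi_t^2$, turning the heuristic Ornstein--Uhlenbeck picture into a rigorous Gronwall estimate.
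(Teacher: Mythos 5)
Your proposal follows essentially the same skeleton as the paper's proof: a mean-reverting contraction of $C$ toward $C^*$, a uniform bound on $\E[(C-C^*)^2]$ obtained by an integrating-factor/Gronwall argument (your moment-level Gronwall and the paper's pathwise integrating factor plus Ito isometry are equivalent here, and yours arguably sidesteps the delicacy of pathwise SDE comparison), and finally the Markov-plus-anti-concentration split with a $\sqrt{t}$-type threshold, which is exactly the paper's last step (phrased there via the Fr\'echet inequality). Two differences deserve comment. First, the paper's ``two well-behaved conditions'' are trace-ratio bounds $\Tr[\Sigma(\theta^*)A_+^2]/\bigl((\theta^*)^\top A_+^2\theta^*\bigr)\ge c_0$ and $\Tr[\Sigma(\theta^*)A_-^2]/\bigl((\theta^*)^\top A_-^2\theta^*\bigr)\ge c_0$, where $A=A_++A_-$ splits $A$ into its nonnegative and nonpositive parts; the technical heart of the paper's proof---which you correctly identify as the hard step but then fold into your hypothesis (i)---is converting these into your contraction $-\dd G/\dd C\ge k_0$. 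The paper does this with two lemmas that use $\Sigma(\theta_\lambda)=e^{-\lambda A}\Sigma(\theta)e^{-\lambda A}$ and the matrix inequality $I+M\preceq e^M$ to bound $\dot C/\lambda^*$ from below and $-(C-C^*)/\lambda^*$ from above, whose ratio yields $\dot C/(C-C^*)\le-\sigma^2 c_0$. Postulating the contraction directly keeps your argument internally consistent, but it places the hypotheses on the composite dynamics rather than on $\Sigma$ and $A$, leaving the paper's substantive lemmas unproven. Second, your martingale term is spurious: since the symmetry holds per sample, $\nabla\ell(\theta,z)^\top A\theta=0$ for every $z$, hence $\Sigma(\theta)A\theta=0$ and the noise coefficient $\nabla C^\top\sqrt{2\sigma^2\Sigma(\theta)}$ vanishes identically---this is precisely how the paper derives Eq.~\eqref{eqn: ito flow}. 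Consequently $\xi_t\equiv 0$, your condition (ii) is vacuous, and the extra assumption $\xi_t^2\to 0$ in your $L_2$ branch is unnecessary: the only randomness in $D_t$ enters through the drifted Brownian motion of $C^*$, which is why the paper's $L_2$ branch needs nothing beyond $\mu=s=0$.
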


Parts (2) and (3) show that a unique fixed point exists. We note that it is more common than not for the conditions of uniqueness to hold because there is generally no reason for $\Tr[\Sigma(\theta) A]$ or $\Tr[\theta\theta^{\top}A]$ to vanish simultaneously, except in some very restrictive subspaces. One major (perhaps the only) reason for the first trace to vanish is when the model is located at an interpolation minimum. However, interpolation minima are irrelevant for modern large-scale problems such as large language models because the amount of available text for training far exceeds the size of the largest models. Even when the interpolation minimum exists, the unique fixed point should still exist when the training is not complete. See Figure~\ref{fig:fixed point illustration}. Part (4) means that the fixed points of the dynamics is well-behaved. If the parameter $\theta$ has a small fluctuation around a given location, $C$ will also have a small fluctuation around the fixed point solution. This justifies approximating $C$ by a constant value when $\theta$ changes slowly and with small fluctuation.

\vspace{-3mm}
\paragraph*{Fixed point as a Noise Equilibrium.} Let $\theta^*$ be a fixed point of \eqref{eq: quadratic dyanmics}. It must satisfy
\begin{equation}
    4\gamma C(\theta^*) = {\sigma^2} \Tr \left[ \Sigma(\theta^*) A\right].
\end{equation}
Hence, a large weight decay leads to a small $|C(\theta^*)|$, whereas a large gradient noise leads to a large $|C(\theta^*)|$. 
When there is no weight decay, we get a different equilibrium condition: $\Tr[\Sigma(\theta^*) A] =0$, which can be finite only when $A$ contains both positive and negative eigenvalues. This equilibrium condition is equivalent to $\sum_{i: \mu_i > 0} \mu_i n_i^{\top} \Sigma(\theta^*) n_i   = -\sum_{j: \mu_j < 0} \mu_j n_j^{\top} \Sigma(\theta^*) n_j$. 
Namely, the overall gradient fluctuation in the two different subspaces specified by the symmetry $A$ must balance. We will see that the main implication of this result is that the gradient noise between different layers of a deep neural network should be balanced at the end of training. Conceptually, Theorem~\ref{theor: fixed point theorem} suggests the existence of a special type of fixed point for SGD, which the following definition formalizes.
\begin{definition}
     $\theta$ is a \textit{noise equilibrium} for a nonconstant function $C(\theta)$ if $\dot{C}(\theta) =0$ under SGD.
\end{definition}

\vspace{-4mm}
\section{Applications}
\label{section: Applications}
\vspace{-2mm}

Now, we analyze the noise equilibria of a few important problems. These examples are prototypes of what appears frequently in deep learning practice and substantiate our arguments with numerical examples. In addition, an experiment with the scale invariance in normalized tanh networks is presented in Appendix~\ref{app sec: scale invariance}.

\vspace{-2mm}
\subsection{Generalized Matrix Factorization}
\label{section: Matrix Factorization}
\vspace{-1mm}

Exponential symmetry is also observed when the model involves a (generalized) matrix factorization. This occurs in standard matrix completion problems \cite{srebro2004maximum} or within the self-attention of transformers through the key and query matrices \cite{vaswani2017attention}. For a (generalized) matrix factorization problem, we have the following symmetry in the objective:
\begin{equation}\label{eq: generalized matrix factorization}
    \ell(U,W, \theta') = \ell(UA, A^{-1}W, \theta')
\end{equation}
for any invertible matrix $A$ and symmetry-irrelevant parameters $\theta'$. We consider matrices $A$ that are close to identity: $A = I + \rho B + O(\rho^2)$, and $A^{-1} = I - \rho B + O(\rho^2)$. Therefore, for an arbitrary symmetric $B$, we have a conserved quantity for GD: 
$
C_B(\theta) = \Tr[UBU^{\top}] - \Tr[W^{\top}BW]. 
$
This conservation law can also be written in the matrix form, which is a well-known result for GD \cite{du2018algorithmic, marcotte2023abide}: 
\begin{equation}\label{eq: mf gd conservation law}
    (W_tW^{\top}_t - U_t^{\top}U_t) = (W_0W^{\top}_0 - U^{\top}_0U_0).
\end{equation}  
For SGD, applying \eqref{eqn: ito flow} gives the following proposition.
\begin{proposition}\label{prop: sgd dynamics for mf}
Suppose the symmetry \eqref{eq: generalized matrix factorization} holds. Let
$U=(\tilde{u}_1,\cdots,\tilde{u}_{d_2})^\top\in\bbR^{d_2\times d}$, $W=(\tilde{w}_1,\cdots,\tilde{w}_{d_0})\in\bbR^{d\times d_0}$, where $\tilde{u}_i,\tilde{w}_j\in\bbR^d$. Let  $C_B(\theta) = \Tr[UBU^{\top}] - \Tr[W^{\top}BW]$ for any symmetric matrix $B\in\RR^{d\times d}$.
    Then, for SGD,  we have
    \begin{equation*}
        \dot{C}_B(\theta_t) = \sigma^2 \left(\sum_{i=1}^{d_2}\Tr[\Sigma_{\tilde{u}_i}(\theta_t) B]-\sum_{j=1}^{d_0}\Tr[\Sigma_{\tilde{w}_j}(\theta_t) B]\right).
    \end{equation*}
\end{proposition}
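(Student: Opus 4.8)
The plan is to apply the Noether flow identity \eqref{eqn: ito flow} to the charge $C_B$, so that the whole argument reduces to two ingredients: (i) confirming that $C_B$ is the conserved Noether charge associated with the one-parameter subgroup generated by $B$, so that the first-order drift term of Ito's lemma vanishes; and (ii) computing the Hessian $\nabla^2 C_B$ and contracting it against the noise covariance $\Sigma$.

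First I would fix, for each symmetric $B$, the one-parameter subgroup $A_\rho = I + \rho B + O(\rho^2)$ and view \eqref{eq: generalized matrix factorization} as a scalar $Q$-symmetry in $\rho$. Its generator acts block-wise on $(U,W,\theta')$ as $J(\theta) = (UB,\,-BW,\,0)$, which is exactly $A\theta$ for the block-diagonal matrix built from copies of $B$ and $-B$. Differentiating \eqref{eq: generalized matrix factorization} at $\rho=0$ gives $\nabla_\theta\ell\cdot J = 0$, hence $\nabla C_B\cdot\nabla L = 0$, i.e. $C_B$ is conserved under GD; this is precisely the matrix identity \eqref{eq: mf gd conservation law}, which I can invoke directly. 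With the drift term annihilated, \eqref{eqn: ito flow} applies verbatim and yields $\dot C_B = \sigma^2\,\Tr[\Sigma\,\nabla^2 C_B]$.

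Next I would compute the Hessian via the row/column decomposition. Writing $C_B = \sum_{i=1}^{d_2}\tilde u_i^\top B\tilde u_i - \sum_{j=1}^{d_0}\tilde w_j^\top B\tilde w_j$ exhibits $C_B$ as a \emph{separable} quadratic form: every summand depends on a single parameter block and contributes no cross terms. Consequently $\nabla^2 C_B$ is block-diagonal with respect to the partition $(\tilde u_1,\dots,\tilde u_{d_2},\tilde w_1,\dots,\tilde w_{d_0},\theta')$, with diagonal blocks $2B$ on each $\tilde u_i$, $-2B$ on each $\tilde w_j$, and $0$ on $\theta'$. Since contracting $\Sigma$ against a block-diagonal matrix keeps only the matching diagonal blocks of $\Sigma$, and those blocks are by definition the per-variable covariances $\Sigma_{\tilde u_i}$ and $\Sigma_{\tilde w_j}$, the trace collapses to $\Tr[\Sigma\,\nabla^2 C_B] = \sum_{i}\Tr[\Sigma_{\tilde u_i}(2B)] - \sum_{j}\Tr[\Sigma_{\tilde w_j}(2B)]$, which is the claimed formula up to the overall normalization of $C_B$ fixed by \eqref{eq: quadratic dyanmics}.

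I expect the only delicate point to be bookkeeping rather than analysis. The two things to get right are: correctly vectorizing the matrix parameters so that the matrix generator $J(\theta)=(UB,-BW,0)$ is identified with $A\theta$ for the right block-diagonal $A$; and verifying that all off-diagonal blocks of $\nabla^2 C_B$ (both the $\tilde u_i$–$\tilde u_k$ and the $\tilde u_i$–$\tilde w_j$ couplings) genuinely vanish, so that only the diagonal blocks $\Sigma_{\tilde u_i}$ and $\Sigma_{\tilde w_j}$ survive the contraction. The residual factor of $2$ and the precise normalization of the charge are a convention inherited from \eqref{eq: quadratic dyanmics}, and I would state it explicitly to avoid ambiguity.
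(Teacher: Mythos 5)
Your proposal is correct and takes essentially the same route as the paper's proof: vectorize $(U,W)$, observe that $\nabla^2 C_B$ is block-diagonal with blocks proportional to $B$ on each $\tilde u_i$, $-B$ on each $\tilde w_j$, and vanishing cross-blocks, then contract against $\Sigma$ so that only the per-row/per-column covariances $\Sigma_{\tilde u_i}$, $\Sigma_{\tilde w_j}$ survive the trace. Your closing caveat about the factor of $2$ is well placed rather than a defect: the paper's own proof writes the diagonal Hessian blocks as $\pm B$ (instead of the correct $\pm 2B$ for a quadratic form) and pairs this with $\dot C=\sigma^2\Tr[\Sigma\nabla^2 C]$, while its appendix derivation of the Noether flow carries $\tfrac{\sigma^2}{2}$ — two mutually compensating conventions that land on the stated formula, which is precisely the normalization ambiguity you flagged.
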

This dynamics is analytically solvable when $U \in \mathbb{R}^{1\times d}$ and $W \in \mathbb{R}^{d\times 1}$. In this case, taking $B=E_{k,l}+E_{l,k}$ where $E_{i,j}$ denotes the matrix with entries of $1$ at $(i,j)$ and zeros elsewhere. 
For this choice of $B$, we obtain that $C_B(\theta) = W_k W_l - U_k U_l$,
and applying the results we have derived, it is easy to show that for some random variable $r$: $\dot{C}_B(\theta_t) = -\V[r(\theta_t)] C_B(\theta_t),$ which signals an exponential decay. For common problems, $\V[r(\theta_t)]>0$ \cite{ziyin2023law}. Since the choice of $B$ is arbitrary, we have that $W_k W_l \to U_k U_l$ for all $k$ and $l$. The message is rather striking: SGD automatically converges to a solution where all neurons output the same sign ($\sgn(U_i) = \sgn(U_j)$) at an exponential rate. 

\begin{figure*}[t!]
    \centering
    \vspace{-2em}
    \includegraphics[width=0.285\linewidth]{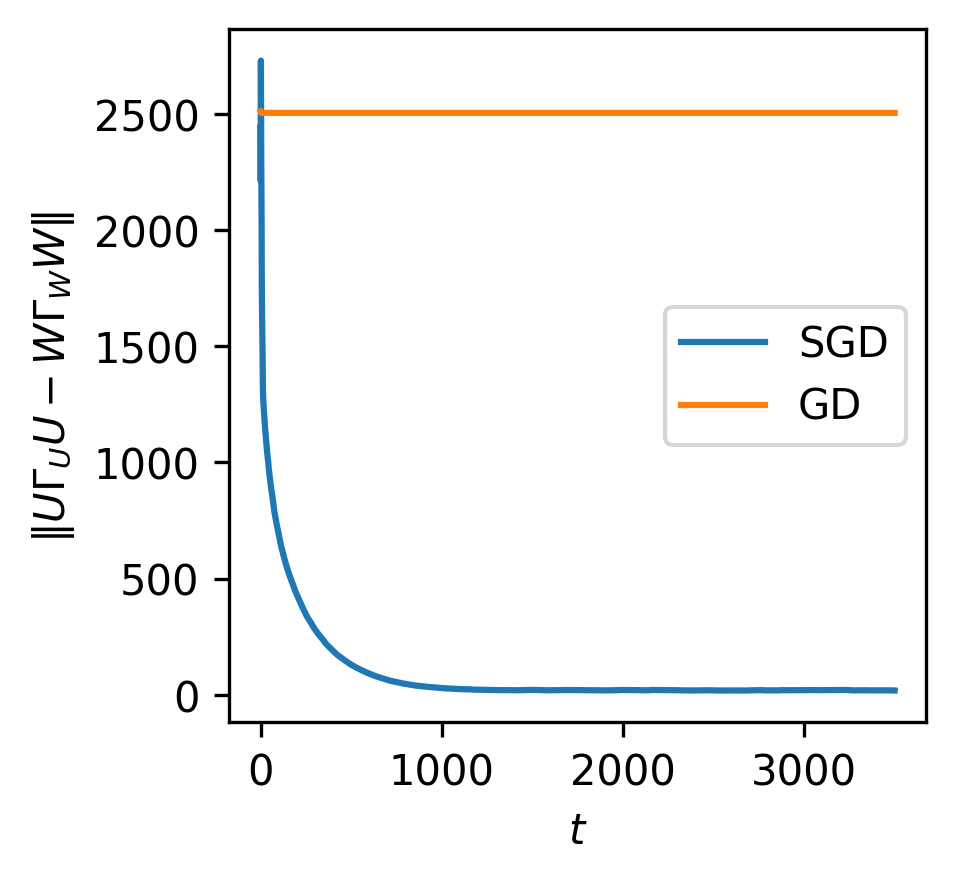}
    \includegraphics[width=0.33\linewidth]{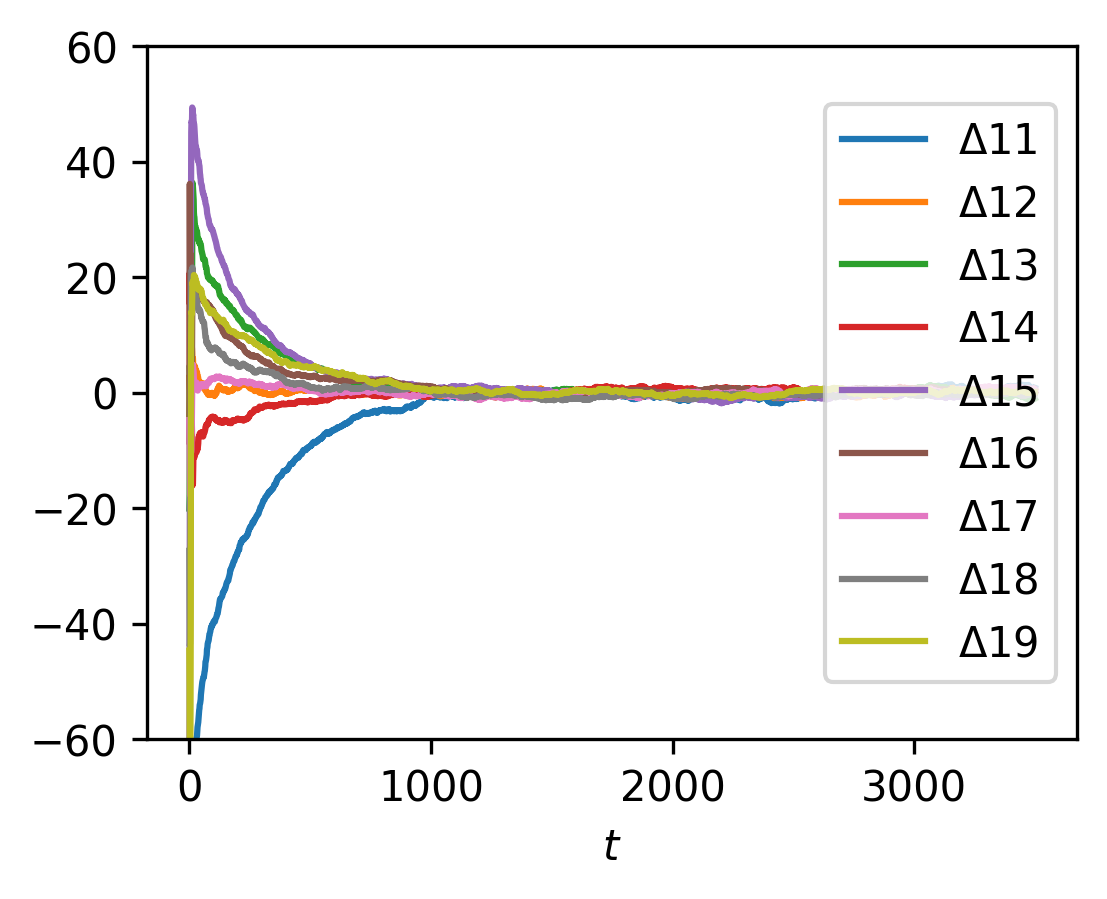}
    \includegraphics[width=0.36\linewidth]{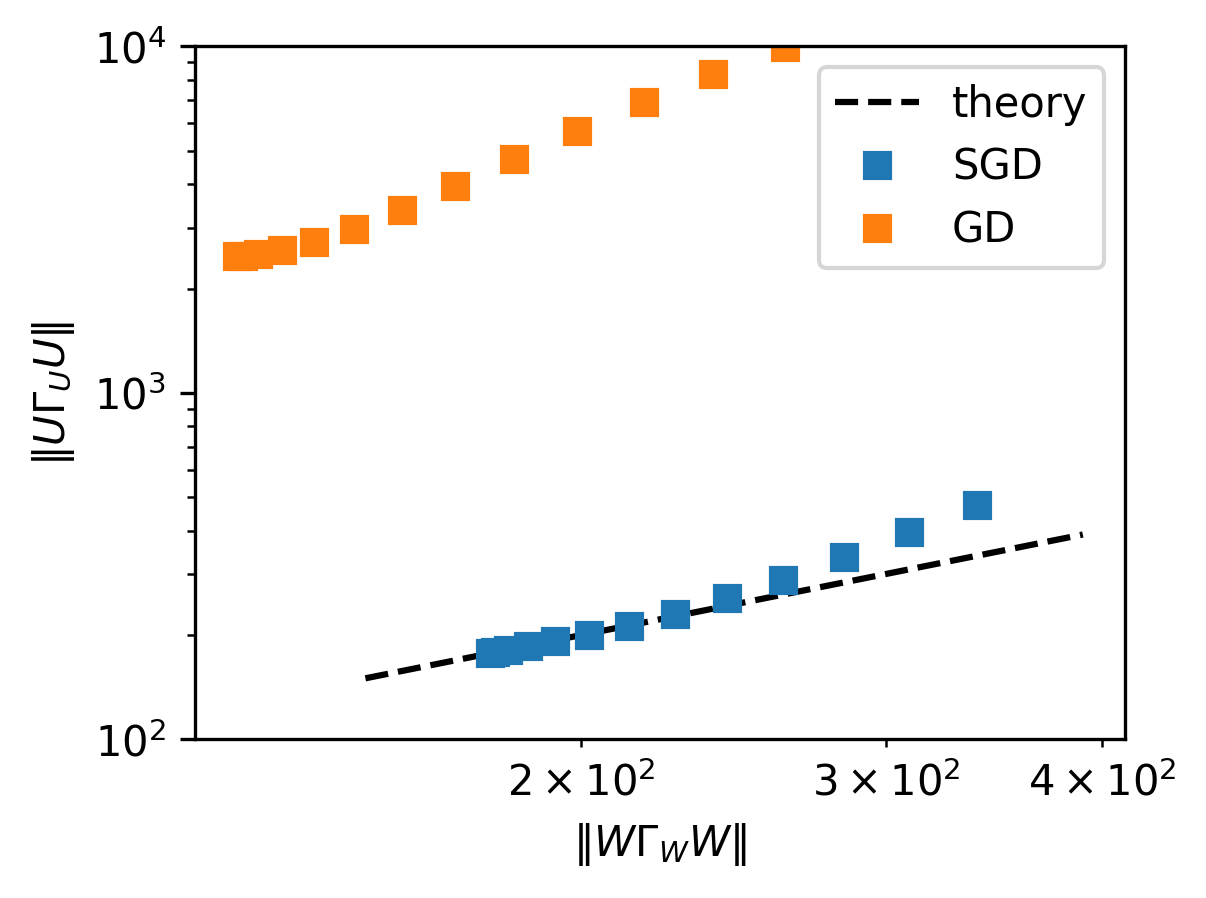}
    \vspace{-0.5em}
    \caption{\small Comparison between GD and SGD for matrix factorizations. \textbf{Left}: Example of a learning trajectory. The convergence speed is almost exponential-like in experiments. 
    \textbf{Mid}: evolution of $10$ individual elements of $\Delta_{ij} := (U^{\top} \Gamma_U U - W\Gamma_W W^{\top})_{ij}$. As the theory shows, they all move close to zero and fluctuate with a small variance. \textbf{Right}: Converged solutions of SGD agree with the prediction of Theorem~\ref{theo: fixed point of standard mf}, but are an order of magnitude away from the solution found by GD, even if they start from the same init.}
    \vspace{-1em}
    \label{fig:stability}
\end{figure*}

\begin{figure}[t!]
    \centering
    \vspace{-1em}
    \includegraphics[width=0.3\linewidth]{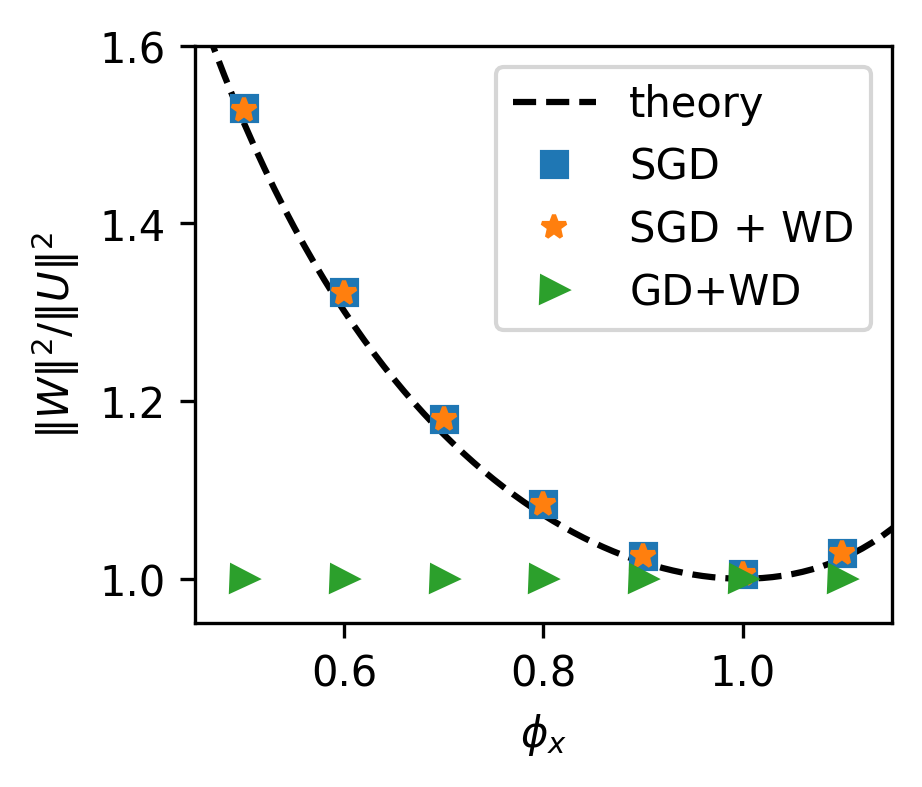}
    \includegraphics[width=0.3\linewidth]{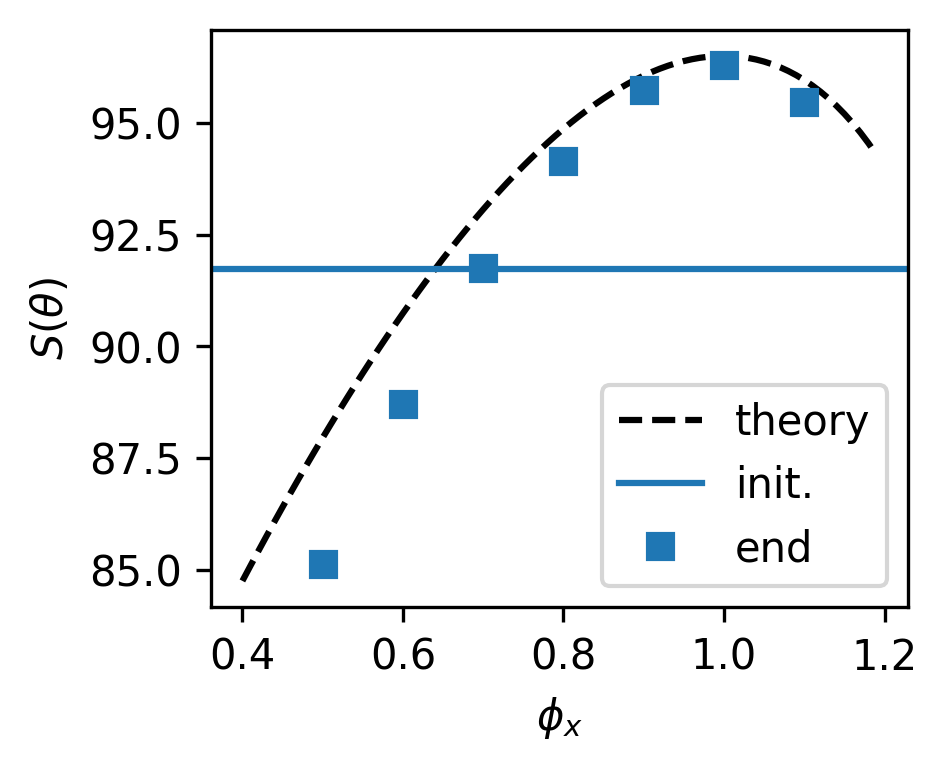}
    \vspace{-0.8em}
    \caption{\small A two-layer linear network after training. Here, the problem setting is the same as Figure~\ref{fig:mf robustness}. The theoretical prediction is computed from Theorem~\ref{theo: fixed point of standard mf}. \textbf{Left}: balance of the norm is only achieved when $\phi_x = 1$, namely, when the data has an isotropic covariance. We also test SGD with a small weight decay ($10^{-4}$), which is sufficiently small that the solution we obtained for SGD without SGD still holds approximately. In contrast, training with GD + WD always converges to a norm-balanced solution. \textbf{Right}: the sharpness of the converged model trained with SGD. We see that for some data distributions, SGD converges to a sharper solution, whereas it converges to flatter solutions for other data distributions. This flattening and sharpening effect are both due to the noise-balance effect of SGD. Here, we find that the systematic error between experiment and theory is due to the use of a finite learning rate and decreases as we decrease $\eta$.}
    \vspace{-1em}
    \label{fig:two layer nets}
\end{figure}

\vspace{-2mm}
\subsection{Balance and Stability of Matrix Factorization}
\label{section: Unbalanced Solutions in Rescaling Symmetry}
\vspace{-1mm}

As a concrete example, let us consider a two-layer linear network (this can also be seen as a variant of standard matrix factorizations): 
\begin{align}\label{eq: simple mf}
    \ell_\gamma=\|UWx-y\|^2+\gamma(\|U\|_F^2+\|W\|_F^2).
\end{align}
where $x \in \mathbb{R}^{d_x}$ is the input data, and $y=y' + \epsilon \in \mathbb{R}^{d_y}$ is a noisy version of the label. The ground truth mapping is linear and realizable: $y' = U^* W^* x$. The second moments of the input and noise are denoted as $\Sigma_x=\E[xx^{\top}]$ and $\Sigma_\epsilon = \E[\epsilon\epsilon^{\top}]$, respectively. Note that this problem is essentially identical to a matrix factorization problem, which is not only a theoretical model of neural networks but also an important algorithm frequently in use for recommender systems \cite{xue2017deep}. The following theorem gives the fixed point of Noether flow.

\begin{theorem}\label{theo: fixed point of standard mf}
    Let $r= UWx- y$ be the prediction residual. For all symmetric  $B$, $\dot{C}_B =0$ if
    \begin{equation}\label{eq: mf balance condition}
        W \Gamma_W W^{\top} = U^{\top} \Gamma_U U,
    \end{equation}
    where $\Gamma_W= \E[\|r\|^2 x x^{\top}] + 2\gamma I$, $\Gamma_U =  \E[\|x\|^2 r r^{\top}] + 2\gamma I$. 
\end{theorem}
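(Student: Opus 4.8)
The plan is to reduce the claim to a single matrix identity by writing $\dot C_B$ as $\Tr[M B]$ for an explicit symmetric matrix $M$ and then exploiting the arbitrariness of $B$. I would start from Proposition~\ref{prop: sgd dynamics for mf}, which already gives the noise part of $\dot C_B$ as $\sigma^2\big(\sum_i \Tr[\Sigma_{\tilde u_i} B] - \sum_j \Tr[\Sigma_{\tilde w_j} B]\big)$, and augment it with the weight-decay drift $-4\gamma C_B$ coming from Eq.~\eqref{eq: quadratic dyanmics} (the extra $\gamma\|\theta\|^2$ enters only through the drift of the Ito flow~\eqref{eqn: ito flow}, since it is deterministic and hence does not change $\Sigma$). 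The whole computation then amounts to evaluating these two pieces explicitly for $\ell=\|UWx-y\|^2$ and collecting them.

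For the noise part, I would compute the per-sample gradients with respect to a single row $\tilde u_i$ of $U$ and a single column $\tilde w_j$ of $W$: direct differentiation gives $\nabla_{\tilde u_i}\ell = 2r_i Wx$ and $\nabla_{\tilde w_j}\ell = 2x_j U^\top r$, where $r_i$ and $x_j$ are the $i$-th and $j$-th entries of $r$ and $x$. Forming the (uncentered) second moments yields $\E[\nabla_{\tilde u_i}\ell\,\nabla_{\tilde u_i}\ell^\top] = 4W\,\E[r_i^2 xx^\top]\,W^\top$ and $\E[\nabla_{\tilde w_j}\ell\,\nabla_{\tilde w_j}\ell^\top] = 4U^\top\,\E[x_j^2 rr^\top]\,U$. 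The key simplification is the summation: using $\sum_i r_i^2 = \|r\|^2$ and $\sum_j x_j^2 = \|x\|^2$, the sums over rows and columns collapse to $4W\,\E[\|r\|^2 xx^\top]\,W^\top$ and $4U^\top\,\E[\|x\|^2 rr^\top]\,U$, which already exhibit the $\E[\|r\|^2 xx^\top]$ and $\E[\|x\|^2 rr^\top]$ kernels of $\Gamma_W$ and $\Gamma_U$.

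For the weight-decay part, I would rewrite $C_B = \Tr[UBU^\top]-\Tr[W^\top BW] = \Tr[(U^\top U - WW^\top)B]$, so the drift contributes $4\gamma\,\Tr[(WW^\top - U^\top U)B]$. Adding this to the noise part and factoring, the $\gamma WW^\top=W(\gamma I)W^\top$ term merges into the $W(\cdot)W^\top$ block and the $\gamma U^\top U=U^\top(\gamma I)U$ term into the $U^\top(\cdot)U$ block, producing exactly the shifts $\E[\|r\|^2 xx^\top]\mapsto\Gamma_W$ and $\E[\|x\|^2 rr^\top]\mapsto\Gamma_U$ (the numerical coefficient multiplying $\gamma I$ is fixed by the paper's normalization of $\sigma^2$). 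This leaves $\dot C_B \propto \Tr[(W\Gamma_W W^\top - U^\top\Gamma_U U)B]$. Since $W\Gamma_W W^\top$ and $U^\top\Gamma_U U$ are both symmetric, the balance condition~\eqref{eq: mf balance condition} makes the bracket vanish and hence $\dot C_B=0$ for every symmetric $B$, which is the stated claim; the converse, yielding uniqueness, would follow from the fact that $\Tr[MB]=0$ for all symmetric $B$ forces the symmetric matrix $M$ to vanish.

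The step I expect to be most delicate is the treatment of the mean subtraction in $\Sigma$. Proposition~\ref{prop: sgd dynamics for mf} is phrased with the centered covariance $\Sigma_{\tilde u_i} = \E[\nabla_{\tilde u_i}\ell\,\nabla_{\tilde u_i}\ell^\top]-\nabla_{\tilde u_i}L\,\nabla_{\tilde u_i}L^\top$, whereas $\Gamma_W,\Gamma_U$ are uncentered second moments; I would argue that at the risk-minimized configuration reached on the fast ERM time scale one has $\nabla_{\tilde u_i}L=-2\gamma\tilde u_i$, so the mean contribution is $O(\gamma^2)$ and subleading to both the $O(\gamma)$ drift and the $O(\sigma^2)$ noise, which is precisely why $\Gamma_W,\Gamma_U$ appear as uncentered moments. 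Keeping careful track of this, together with the bookkeeping of the numerical prefactors, is the only genuine work; everything else is the mechanical differentiation, the two collapsing sums, and the elementary ``arbitrary symmetric $B$'' argument.
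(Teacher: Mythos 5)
Your proof follows essentially the same route as the paper's: the same per-sample gradients $\nabla_{\tilde u_i}\ell = 2r_iWx$ and $\nabla_{\tilde w_j}\ell = 2x_jU^\top r$, the same collapsing sums $\sum_i r_i^2 = \|r\|^2$ and $\sum_j x_j^2 = \|x\|^2$, and the same appeal to the arbitrariness of $B$ --- the paper merely evaluates on the basis $B^{(k,l)} = E_{k,l}+E_{l,k}$ and reassembles the scalar equations $w_k^\top\Gamma_W w_l = u_k^\top\Gamma_U u_l$ into matrix form, whereas you keep $B$ general and factor $\dot C_B \propto \Tr\left[(W\Gamma_W W^\top - U^\top\Gamma_U U)B\right]$ directly. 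Your two caveats are both warranted and, if anything, handled more carefully than in the paper itself: the paper's proof silently replaces the centered covariance $\Sigma_{\tilde u_i}$ by the uncentered moment $\E[\nabla_{\tilde u_i}\ell\,\nabla_{\tilde u_i}\ell^\top]$ without comment, and its own derivation produces $\eta\,\E[\|r\|^2xx^\top]+\gamma I$ rather than the stated $\E[\|r\|^2xx^\top]+2\gamma I$, so the prefactor on $\gamma$ is indeed fixed only up to the paper's (internally inconsistent) normalization conventions.
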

See Figure~\ref{fig:mf robustness} for the convergence of SGD to this solution under different learning rate, batch size and width. The equilibrium condition takes a more suggestive form when the model is at the global minimum, where $U^*W^*x -y=\epsilon$. Assuming that $\epsilon$ and $x$ are independent and that there is no weight decay, we have:
\begin{equation}
    W \bar{\Sigma}_x W^{\top} = U^{\top} \bar{\Sigma}_\epsilon U
\end{equation}
Here, the bar over the matrices indicates that they have been normalized by their traces: $\bar{\Sigma}= \Sigma/\Tr[\Sigma]$. The matrices $\Gamma_W$ and $\Gamma_U$ simplifies because at the global minimum, $r_i = \epsilon_i$ and so $\E[\|x\|^2 r r^{\top}] = \Tr[\Sigma_x] \Sigma_\epsilon$ and $\E[\|r\|^2 xx^{\top}] = \Tr[\Sigma_\epsilon] \Sigma_x$. This condition should be compared with the alignment condition for GD in Eq.~\eqref{eq: mf gd conservation law}, where the alignment is entirely determined by the initialization and perfect alignment is achieved only if the initialization is perfectly aligned. This condition simplifies further if both $\bar{\Sigma}_x$ and $\bar{\Sigma}_\epsilon$ are isotropic, where the equation simplifies to $WW^{\top} / d_x= U^{\top}U/d_y$. Namely, the two layers will be perfectly aligned, and the overall balance depends only on input and output dimensions. Figure~\ref{fig:stability}-Left shows an experiment that shows that the two-layer linear net is perfectly aligned after training. Here, every point corresponds to the converged solution of an independent run with the same initialization and training procedures but different values of $\Sigma_\epsilon$. In agreement with the theory, the two layers are aligned according to Theorem~\ref{theo: fixed point of standard mf} under SGD, but not under GD. In fact, GD finds solutions that are more than an order of magnitude away from SGD.

\paragraph{Noise Driven Progressive Sharpening and Flattening.} This result implies a previously unknown mechanism of progressive sharpening and flattening, where, during training, the stability of the algorithm steadily improves (during flattening) or deteriorates (during sharpening) \cite{wu2018sgd, jastrzebski2019break, cohen2021gradient}. To see this, we first derive a metric of sharpness for this model.

\begin{proposition}\label{prop: mf sharpness}
    For the per-sample loss \eqref{eq: simple mf}, let
    $S(\theta):= \Tr[\nabla^2 L(\theta)]$. Then, $S(\theta) = d_y\|W\Sigma_{x}^{1/2}\|_F^2+\|U\|_F^2\Tr[\Sigma_x]$.
\end{proposition}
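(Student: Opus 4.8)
The plan is to compute $S(\theta)=\Tr[\nabla^2 L(\theta)]$ directly as a Laplacian, that is, as the sum of the unmixed second partial derivatives over all entries of $U$ and $W$. Two structural reductions make this routine. First, $\Tr[\nabla^2(\cdot)]$ is linear, and the weight-decay term $\gamma(\|U\|_F^2+\|W\|_F^2)$ is a pure quadratic whose Hessian is a constant multiple of the identity; it therefore contributes only a $\theta$-independent constant and can be set aside, so it suffices to differentiate the data term $L_0(\theta)=\E_z\|UWx-y\|^2$. Second, the trace picks out only the diagonal of the Hessian, so the mixed $U$--$W$ blocks never enter and I only need $\partial^2/\partial U_{ab}^2$ and $\partial^2/\partial W_{bc}^2$, where $a$ indexes the $d_y$ output coordinates, $c$ indexes the $d_x$ input coordinates, and $b$ indexes the hidden coordinates.

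For the $U$-block, write the residual $r=UWx-y$ and note $\partial (UWx)_{a'}/\partial U_{ab}=\delta_{aa'}(Wx)_b$, which is independent of $U_{ab}$. Hence $\partial \ell_0/\partial U_{ab}=2r_a (Wx)_b$ and $\partial^2\ell_0/\partial U_{ab}^2=2(Wx)_b^2$. Summing over the free index $a$ (which ranges over $d_y$ values and leaves the summand unchanged) and over $b$ gives $\sum_{a,b}2(Wx)_b^2 = 2 d_y\|Wx\|^2$. Taking the expectation and using $\E[xx^\top]=\Sigma_x$ converts this into $2d_y\,\Tr[W\Sigma_x W^\top]=2d_y\|W\Sigma_x^{1/2}\|_F^2$, the first claimed term.

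For the $W$-block, $\partial(UWx)_{a'}/\partial W_{bc}=U_{a'b}x_c$, again independent of $W_{bc}$, so $\partial^2\ell_0/\partial W_{bc}^2=2x_c^2\sum_{a'}U_{a'b}^2$. Summing over $b,c$ factorizes as $2\big(\sum_c x_c^2\big)\big(\sum_{a,b}U_{ab}^2\big)=2\|x\|^2\|U\|_F^2$, and taking the expectation with $\E\|x\|^2=\Tr[\Sigma_x]$ gives $2\|U\|_F^2\,\Tr[\Sigma_x]$. Adding the two blocks yields the stated expression up to the overall factor of two from differentiating the square (a normalization convention; equivalently $S$ is the trace for the loss $\tfrac12\|UWx-y\|^2$), with the weight-decay Hessian only shifting $S$ by the constant noted above. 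There is no genuine obstacle here: the result is a short index computation, and the only points requiring care are the bookkeeping of which index is free in each block (this is what produces the prefactor $d_y$ in one term and the full $\|U\|_F^2$ in the other) and the passage from the quadratic forms $\|Wx\|^2$ and $\|x\|^2$ to traces against $\Sigma_x$.
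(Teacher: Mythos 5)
Your proof is correct and takes essentially the same route as the paper's: compute the diagonal Hessian blocks of the data term (the paper does this with the row/column vectors $\tilde{u}_i$, $\tilde{w}_j$ rather than scalar entries $U_{ab}$, $W_{bc}$), sum their traces, pass to the expectation against $\Sigma_x$, and discard the weight-decay Hessian as a $\theta$-independent constant. Your remark about the overall factor of two is also on point: a strict computation gives twice the stated expression, and the paper's own proof arrives at the stated constant only by dropping the factor of $2$ when differentiating $\|UWx-y\|^2$ (while inconsistently retaining the $2\gamma$ from the regularizer), so your flagging it as a normalization convention is the honest way to reconcile the computation with the claim.
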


The trace of the Hessian is a good metric of the local stability of the GD and SGD algorithm because the trace upper bounds the largest Hessian eigenvalue. Let us analyze the simplest case of an autoencoding task, where the model is at the global minimum. Here, $\Sigma_x \propto I_{d_x}$, $\Sigma_\epsilon \propto I_{d_y}$.   For a random Gaussian initialization with variance $\sigma_W^2$ and $\sigma_U^2$, the trace at initialization is, in expectation, $S_{\rm init} = d_y d \Tr[\Sigma_x] (\sigma_W^2 +\sigma_U^2)$. At the end of the training, the model is close to the global minimum and satisfies Proposition~\ref{prop: mf sharpness}. Here, the rank of $U$ and $W$ matters and is upper bounded by $\min(d, d_x)$, and at the global minimum, $U$ and $W$ are full-rank (equal to $\min(d, d_x)$), and all the singular values are $1$. Thus,
\begin{equation}
    \begin{cases}
        S_{\rm init} = d_x d (\sigma_U^2 + \sigma_W^2) \Tr[\Sigma_x],\\
        S_{\rm end} = 2 \min(d, d_x) \Tr[\Sigma_x].
    \end{cases}
\end{equation}
The change in the sharpness during training thus depends crucially on the initialization scheme. For Xavier init, $\sigma_U^2 = (d_y+d)^{-1}$ and $\sigma_W^2 = (d + d_x)^{-1}$, and so $S_{\rm init} \approx S_{\rm end}$ (but $S_{\rm init}$ is slightly smaller). Thus, for the Xavier init., the sharpness of loss experiences a small sharpening during training. For Kaiming init., $\sigma_U^2 = 1$ and $\sigma_W^2 = d_x^{-1}$. Therefore, it always holds that $S_{\rm init} \geq S_{\rm end}$, and so the stability improves as the training proceeds. The only case when the Kaiming init. does not experience progressive flattening is when $d=d_x=d_y$, which agrees with the common observation that training is easier if the widths of the model are balanced \cite{hanin2018start}. See Figure~\ref{fig: stability-wrapfigure} for an experiment. In previous works, the progressive sharpening happens when the model is trained with GD \cite{cohen2021gradient}; our theory suggests an alternative mechanism for it.

\begin{wrapfigure}{r}{0.40\linewidth}
    \vspace{-1em}
    \includegraphics[width=\linewidth]{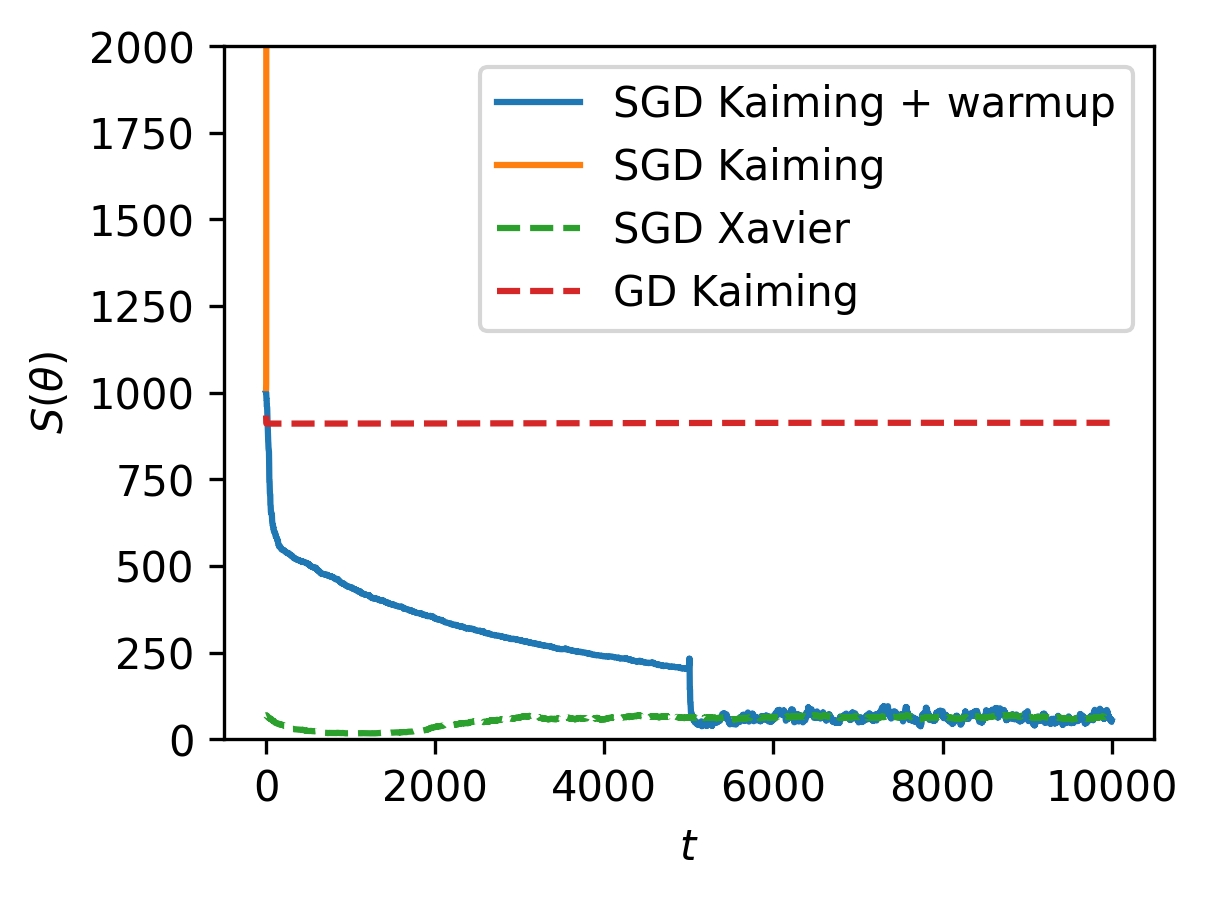}
    \vspace{-1.5em}
    \caption{\small Dynamics of the stability condition $S$ during the training of a rank-1 matrix factorization problem. The solid lines show the training of SGD with Kaiming init. When the learning rate ($\eta=0.008$) is too large, SGD diverges (orange line). However, when one starts training at a small learning rate ($0.001$) and increases $\eta$ to $0.008$ after 5000 iterations, the training remains stable. This is because SGD training improves the stability condition during training, which is in agreement with the theory. In contrast, the stability condition of GD and that of SGD with a Xavier init increases only slightly. Also, note that both Xavier and Kaiming init. under SGD converges to the same stability condition because the equilibrium is unique.}
    \label{fig: stability-wrapfigure}
    \vspace{-2.5em}
\end{wrapfigure}

A practical technique that the theory explains is using warmup to stabilize training in the early stage. This technique was first proposed in Ref.~\cite{goyal2017accurate} for training CNNs, where it was observed that the training is divergent if we start the training at a fixed large learning rate $\eta_{\max}$. However, this divergent behavior disappears if we perform a warmup training, where the learning rate is increased gradually from a minimal value to $\eta_{\max}$. Later, the same technique is found to be crucially useful for training large language models \cite{popel2018training}. Our theory shows that the gradient noise can drive Kaiming init. to a stabler status where a larger learning can be applied. 

\paragraph{Flat or Sharp?} Prior works have often argued that SGD prefers flatter solutions to sharper ones (e.g., see Ref.~\cite{yang2023stochastic}). The exact solution we found, however, implies a subtle picture: for some datasets, SGD prefers sharper solutions, while for others, SGD prefers flatter solutions. Therefore, there is no causal relationship between SGD training and the sharpness of the found solution. See Figure~\ref{fig:two layer nets} for the dependence of the flatness on the data distribution. A related question is whether SGD noise creates a similar effect as weight decay training. The answer is also negative: weight decay always prefers smaller norms and, thus, norm-balanced solutions, which are not necessarily noise-aligned solutions. Figure~\ref{fig:two layer nets} shows that SGD can also lead to unbalanced solutions, unlike weight decay.

\vspace{-2mm}
\subsection{Noise-Aligned Solution of Deep Linear Networks}
\vspace{-1mm}

Here, we apply our result to derive the exact solution of an arbitrarily deep and wide deep linear network, which has been under extensive study due to its connection in loss landscape to deep neural networks \cite{choromanska2015loss, choromanska2015open, kawaguchi2016deep, lu2017depth, ziyin2022exact, wang2022posterior}. Deep linear networks have also been a major model for understanding the implicit bias of GD \cite{arora2019implicit}. The per-sample loss for a deep linear network can be written as:
\begin{equation}
    \ell(\theta) = \|W_D... W_1x - y\|^2,
\end{equation}
where $W_i$ is an arbitrary dimensional matrix for all $i$. The global minimum is realizable: $y = Vx + \epsilon$, for i.i.d. noises $\epsilon$. Because there is a double rotation symmetry between every two neighboring matrices, the Noether charge can be defined with respect to every such pair of matrices. Let $B_i$ be a symmetric matrix; we define the charges to be $C_{B_i} = W_i^T B_i W_i$. The noise equilibrium solution is given by the following theorem.
\begin{theorem}\label{theo: deep-linear}
    Let $W_D... W_1 = V$. Let $V'=\sqrt{\Sigma_{\epsilon}}V\sqrt{\Sigma_x}$ such that $V' = LS'R$ is its SVD and $d = {\mathrm{rank}(V')}$. Then, for all $i$ and all $B_i$, a noise equilibrium for $C_{B_i}$ at the global minimum is
    \begin{equation}
    \sqrt{\Sigma_{\epsilon}}W_D=L\Sigma_DU_{D-1}^{\top},\ W_i=U_i\Sigma_iU_{i-1}^{\top},\ W_1\sqrt{\Sigma_x}=U_1\Sigma_1R,
\end{equation}
    for $i=2,\cdots,D-1$. $U_i$ are arbitrary matrices satisfying $U_i^TU_i=I_{d\times d}$, and $\Sigma_i$ are diagonal matrices such that 
    \begin{equation}\label{eq: singular}
        \Sigma_1=\Sigma_D=\left(\frac{d}{\Tr S'}\right)^{(D-2)/2D}\sqrt{S'},\ \Sigma_i=\left(\frac{\Tr S'}{d}\right)^{1/D}I_{d\times d}.
    \end{equation}
\end{theorem}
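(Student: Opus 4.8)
The plan is to reduce the $D-1$ adjacent double-rotation symmetries to a system of per-layer noise-balance equations, whiten the input and label noise so that each balance becomes a purely geometric condition on partial products of weights, and then verify that the stated closed form is a self-consistent solution of that system together with the realizability constraint $W_D\cdots W_1 = V$.

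First I would make the layered gradient structure explicit. Writing $a_0 = x$, $a_i = W_i a_{i-1}$ for the forward activations and $b_D = r := W_D\cdots W_1 x - y$, $b_i = W_{i+1}^\top b_{i+1}$ for the backpropagated signals, the per-sample gradient is $\nabla_{W_i}\ell = 2 b_i a_{i-1}^\top$. Each adjacent pair $(W_{i+1},W_i)$ carries the double-rotation symmetry of Eq.~\eqref{eq: generalized matrix factorization} with Noether charge $C_{B_i} = \Tr[W_{i+1}B_i W_{i+1}^\top] - \Tr[W_i^\top B_i W_i]$, so Proposition~\ref{prop: sgd dynamics for mf} applies with $U = W_{i+1}$, $W = W_i$. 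At the global minimum $r = -\epsilon$ is mean-zero and, assuming $\epsilon \perp x$, independent of every $a_{i-1}$; hence all mean gradients vanish and the gradient covariances reduce to second moments. Summing the row and column traces in Proposition~\ref{prop: sgd dynamics for mf} then collapses $\dot C_{B_i}=0$ (for all symmetric $B_i$) to the matrix identity
\[
\E[\|b_{i+1}\|^2\, a_i a_i^\top] = \E[\|a_{i-1}\|^2\, b_i b_i^\top], \qquad i = 1,\ldots,D-1.
\]

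Next I would whiten. Setting $M_D = \sqrt{\Sigma_\epsilon}W_D$, $M_i = W_i$ for $2 \le i \le D-1$, and $M_1 = W_1\sqrt{\Sigma_x}$ absorbs the two covariances, and the independence of $x$ and $\epsilon$ factorizes each expectation. Writing $F_i = M_i\cdots M_1$ and $G_i = M_D\cdots M_i$ for the whitened partial products, one gets $\E[a_i a_i^\top] = F_i F_i^\top$ and $\E[b_i b_i^\top] = G_{i+1}^\top G_{i+1}$, so the balance becomes $\alpha_i F_i F_i^\top = \beta_i G_{i+1}^\top G_{i+1}$ with positive scalars $\alpha_i = \E\|b_{i+1}\|^2$ and $\beta_i = \E\|a_{i-1}\|^2$, while realizability becomes $M_D\cdots M_1 = V' = L S' R$. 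I would then substitute the ansatz $M_i = U_i\Sigma_i U_{i-1}^\top$ (with $U_D = L$, $U_0 = R^\top$, $U_i^\top U_i = I_{d\times d}$, and $\Sigma_i$ diagonal). Orthonormality telescopes the product to $L(\prod_i\Sigma_i)R$, so realizability is equivalent to $\prod_i\Sigma_i = S'$, and the same cancellations collapse each balance equation to the diagonal scalar system $\alpha_i(\prod_{j\le i}\Sigma_j)^2 = \beta_i(\prod_{j>i}\Sigma_j)^2$. Verifying that the stated $\Sigma_i$ meet both $\prod_i\Sigma_i = S'$ and this scalar system is then a direct exponent computation.

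The hard part is the self-consistency of the singular-value assignment. The weights $\alpha_i,\beta_i$ are themselves traces of complementary products of the $\Sigma_j$, so the $D-1$ balance equations together with $\prod_i\Sigma_i = S'$ form a coupled nonlinear fixed-point system rather than a triangular one; showing that the closed form with isotropic interior factors and $\sqrt{S'}$-shaped boundary factors is an exact solution is precisely where the exponents $(D-2)/2D$ and $1/D$ are forced, and I expect the interior equations to reduce cleanly while the two boundary pairs ($i=1$ and $i=D-1$), whose scalar weights involve $\Tr\Sigma_x$ and $\Tr\Sigma_\epsilon$ rather than interior $\Sigma$-products, will require a separate normalization argument. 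A secondary point to handle carefully is the low-rank case $d = \mathrm{rank}(V') < \min_i n_i$, where the $U_i$ have orthonormal columns but are not square, so I would confirm that the telescoping and the $U_i^\top U_i = I_{d\times d}$ cancellations remain valid on the rank-$d$ bottleneck at every layer.
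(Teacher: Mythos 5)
Your proposal retraces the paper's proof in all essentials: your per-layer balance condition $\E[\|b_{i+1}\|^2 a_ia_i^\top]=\E[\|a_{i-1}\|^2 b_ib_i^\top]$ is precisely the intermediate result the paper proves first (Theorem~\ref{theo: E}, obtained from Proposition~\ref{prop: sgd dynamics for mf} by the same elementary choice of $B$), the whitening $W_1\sqrt{\Sigma_x}$, $\sqrt{\Sigma_\epsilon}W_D$ is the same substitution, and the orthogonal ansatz with telescoping reduces to the same diagonal scalar system (the paper's Eq.~\eqref{eq: equilibrium_2}). Your secondary low-rank worry is harmless, since only $U_i^\top U_i=I_{d\times d}$ is ever used and this holds for tall matrices with orthonormal columns.

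The boundary issue you flag, however, is a genuine gap---and it is also a gap in the paper's own proof, so your instinct that a ``separate normalization argument'' is needed is exactly right and cannot be dismissed as bookkeeping. For $i=1$ and $i=D-1$ the ``empty'' partial products do not whiten away: the correct scalar weights are $\beta_1=\E\|x\|^2=\Tr\Sigma_x$ and $\alpha_{D-1}=\E\|\epsilon\|^2=\Tr\Sigma_\epsilon$, whereas the paper's Eqs.~\eqref{eq: Sigma_1}--\eqref{eq: Sigma_2} silently replace both by $\Tr[I_{d\times d}]=d$. Carrying your verification through honestly: the interior equations ($2\le i\le D-2$) hold identically once the interior factors are isotropic and $\Sigma_1=\Sigma_D$, since both sides reduce to $c^{2(D-3)}\Tr[\Sigma_1^2]\,\Sigma_1^2$; but the two boundary equations force
\begin{equation*}
    \Tr[\Sigma_1^2]=c^2\,\Tr\Sigma_x
    \qquad\text{and}\qquad
    \Tr[\Sigma_D^2]=c^2\,\Tr\Sigma_\epsilon,
\end{equation*}
so consistency with $\Sigma_1=\Sigma_D$ requires $\Tr\Sigma_x=\Tr\Sigma_\epsilon=:T$, and then realizability $c^{D-2}\Sigma_1^2=S'$ gives $c=(\Tr S'/T)^{1/D}$ and $\Sigma_1=(T/\Tr S')^{(D-2)/2D}\sqrt{S'}$. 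The closed form in Eq.~\eqref{eq: singular} is therefore an equilibrium exactly when $\Tr\Sigma_x=\Tr\Sigma_\epsilon=d$ (e.g., whitened input and label noise of effective dimension $d$); for general covariances the statement holds only after replacing $d$ by $T$ in Eq.~\eqref{eq: singular}. Making this normalization explicit is the step missing from both your plan and the published proof.
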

This solution has quite a few striking features. Surprisingly, the norms of all intermediate layers are balanced:
\begin{align}
    \Tr[\Sigma_1^2]= \Tr[\Sigma_i^2]=(\Tr S')^{2/D}d^{1-2/D}.
\end{align}
All intermediate layers are thus rescaled orthogonal matrices aligned with the neighboring matrices and the only two matrices that process information are the first and the last layer. See Figure~\ref{fig:deep linear net} for an illustration of this effect. This explains an experimental result first observed in Ref.~\cite{saxe2013exact}, where the authors showed that the neural networks find similar solutions when the model is initialized with the standard init., where there is no alignment at the start, and with the aligned init. Thus, the balance and alignment between different layers in the neural networks can be attributed to the rescaling symmetry between each pair of matrices.

\begin{figure}[t!]
    \centering
    \vspace{-2em}
    \includegraphics[width=0.3\linewidth]{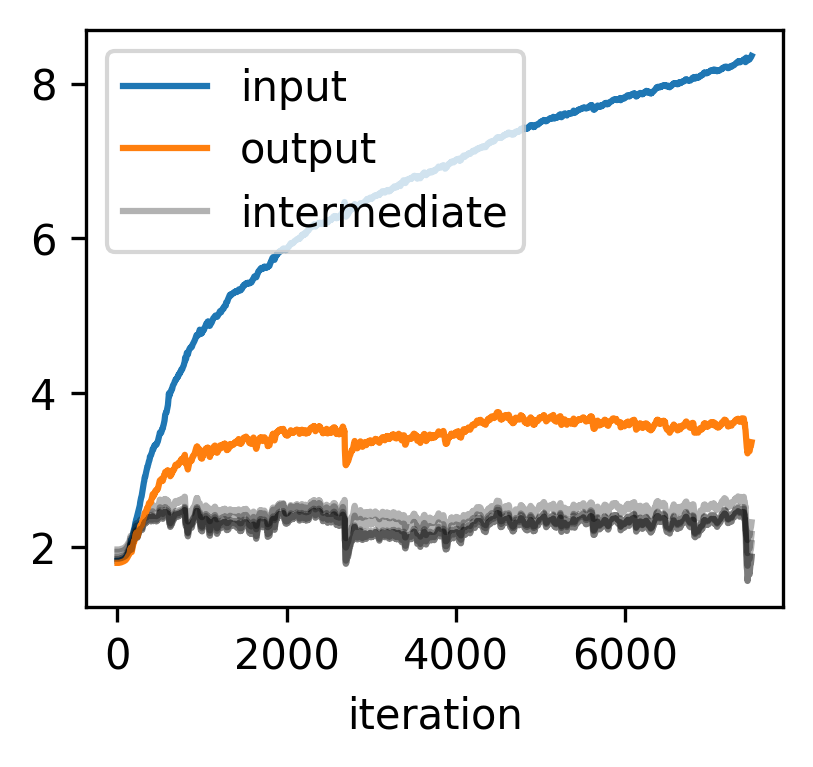}
    \includegraphics[width=0.3\linewidth]{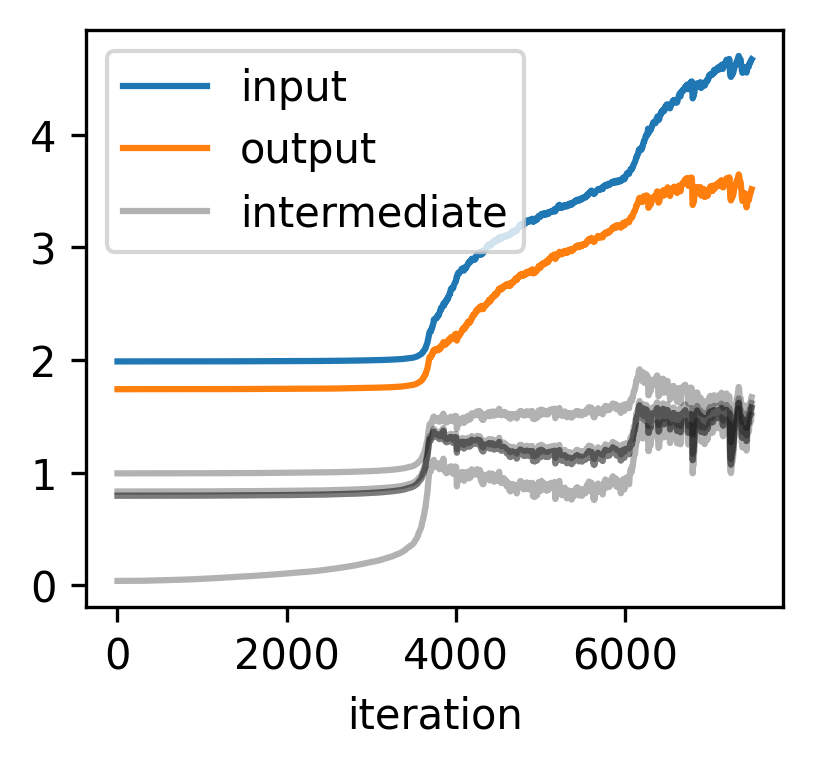}
    \vspace{-1em}
    \caption{\small Norms of weights of multilayer deep linear network during training on MNIST without weight decay. We see that the intermediate layers converge to the same norm during training, whereas the input and output layers are different because they are determined by the input and output noise. This effect is robust against different initializations. This agrees with our analysis for deep linear nets (\textbf{Theorem}~\ref{theo: deep-linear}). \textbf{Left}: initializing all layers with the same norm. \textbf{Right}: initializing all layers at randomly different norms.}
    \vspace{-1em}
    \label{fig:deep linear net}
\end{figure}
\vspace{-2mm}
\subsection{Approximate Symmetry and Bias of SGD}
\vspace{-1mm}

Lastly, let us consider what happens if the loss function only has an approximate symmetry. As a minimal model, let us consider the following loss function: $\ell = \ell_1(\theta,x) + \zeta\ell_2(\theta)$. Here, $\ell_1$ has the $A$-symmetry, whereas $\ell_2(\theta)$ has no symmetry nor randomness and so $\ell_2$ does not affect $\Sigma$ at all. $\zeta$ determines the relative strength between the two terms. In totality, $\ell$ no longer has the $A$-symmetry.

As before, let $C_A = \theta^\top A \theta$. Then, $\dot{C}_A(\theta) = -  \zeta (\nabla \ell_2)^\top A \theta^* + \sigma^2 \Tr[\Sigma(\theta) A]$, whose fixed point is 
\begin{equation}
    \zeta (\nabla \ell_2)^\top A \theta^* =  \sigma^2\Tr[\Sigma(\theta) A].
\end{equation}
This equilibrium condition thus depends strongly on how large $\zeta$ is. When $\zeta$ is small, we see that SGD still favors the fixed point given by Theorem~\ref{theor: fixed point theorem}, but with a first-order correction in $\zeta$.

Conversely, if $\zeta$ is large and $\sigma^2$ is small, we can expand around a local minimum of the loss function $\theta^*$, and so the fixed point becomes
\begin{align}
        \zeta (\theta - \theta^*)^\top H(\theta^*)  A \theta^* =  \sigma^2\Tr[\Sigma(\theta^*) A] + O(\sigma^2 \|\theta - \theta^*\| + \|\theta - \theta^*\|^2),
\end{align}
where $H$ is the Hessian of $\ell_2$. 
Certainly, this implies that SGD will stay around a point that deviates from the local minimum by an $O(\sigma^2)$ amount. This stationary point potentially has many solutions. For example, one class of solution is when $\theta -\theta^*$ is an eigenvector of $H$ with eigenvalue $h^*>0$ and eigenvector $n$, we can denote $s=(\theta -\theta^*)^{\top} n$ and obtain a direct solution of $s$: 
\begin{equation}
    s  = \frac{\sigma^2 \Tr[\Sigma(\theta^*) A]}{\zeta h^* n^{\top}A \theta^*}.
\end{equation}
This deviation disappears in the limit $\sigma^2 \to 0$. Therefore, this implicit regularization effect is only a consequence of SGD training and is not present under GD. With this condition, one can obtain a clear expression of the deviation of the quantity $C$ from its local minimum value $C^* := (\theta^*)^{\top} A \theta^*$. We have that 
\begin{align}
    C(\theta) &= C^* + 2 (\theta -\theta^*)^{\top} A \theta^*  + O(\|\theta - \theta^*\|^2)  = C^* + 2 \frac{\sigma^2}{\zeta h^*}\Tr[\Sigma A].
\end{align}
Thus, our results in the previous section still apply. The quantity $C$ will be systematically larger than the local minimum values of $C$ if the approximate symmetry matrix $A$ is PD. It is systematically smaller if $A$ is ND. When $A$ contains both positive and negative eigenvalues, the deviation of $C$ depends on the local gradient fluctuation balancing condition. When the smallest eigenvalue of $H$ is close to zero (which is true for common neural networks), the dominant factor that biases $C$ occurs in this space. Therefore, it is not bad to approximate the deviation as $C(\theta) \approx  C^* + 2 \sigma^2\Tr[\Sigma A]/h_{\min},$ where $h_{\min}$ is the smallest eigenvalue of the Hessian at the local minimum. In reality, $\zeta$ is neither too large nor too small, and one expects that the solution favored by SGD is an effective interpolation between the true local minimum and the fixed point favored by the symmetries.

\begin{figure*}[t!]
    \centering
    \vspace{-2em}
    \includegraphics[width=0.24\linewidth]{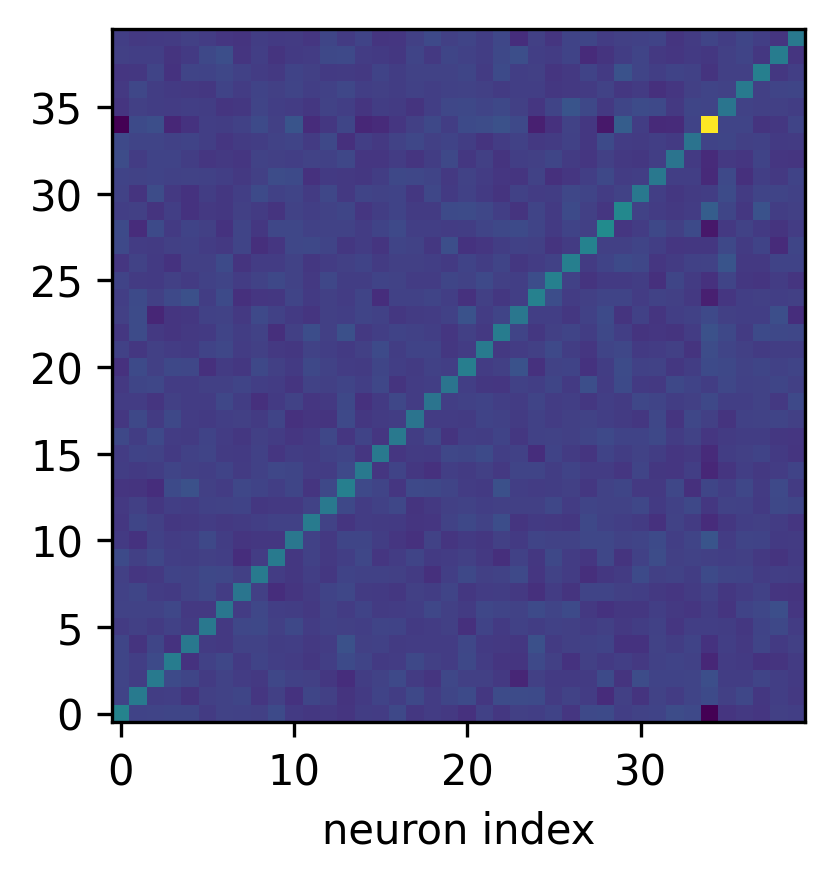}
    \includegraphics[width=0.24\linewidth]{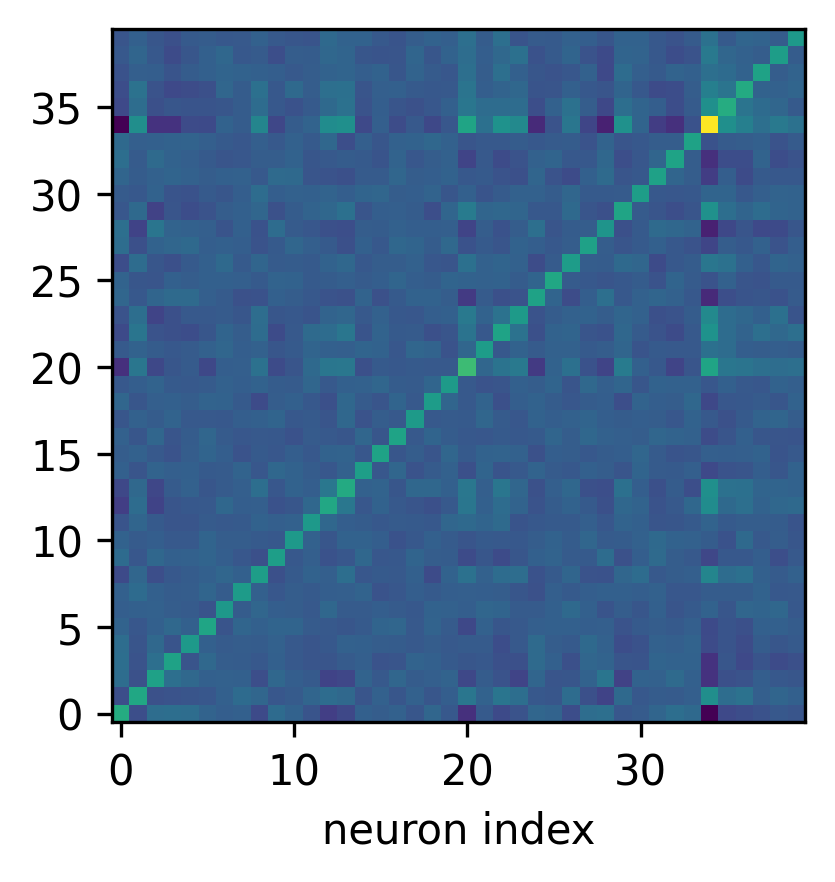}
    \includegraphics[width=0.24\linewidth]{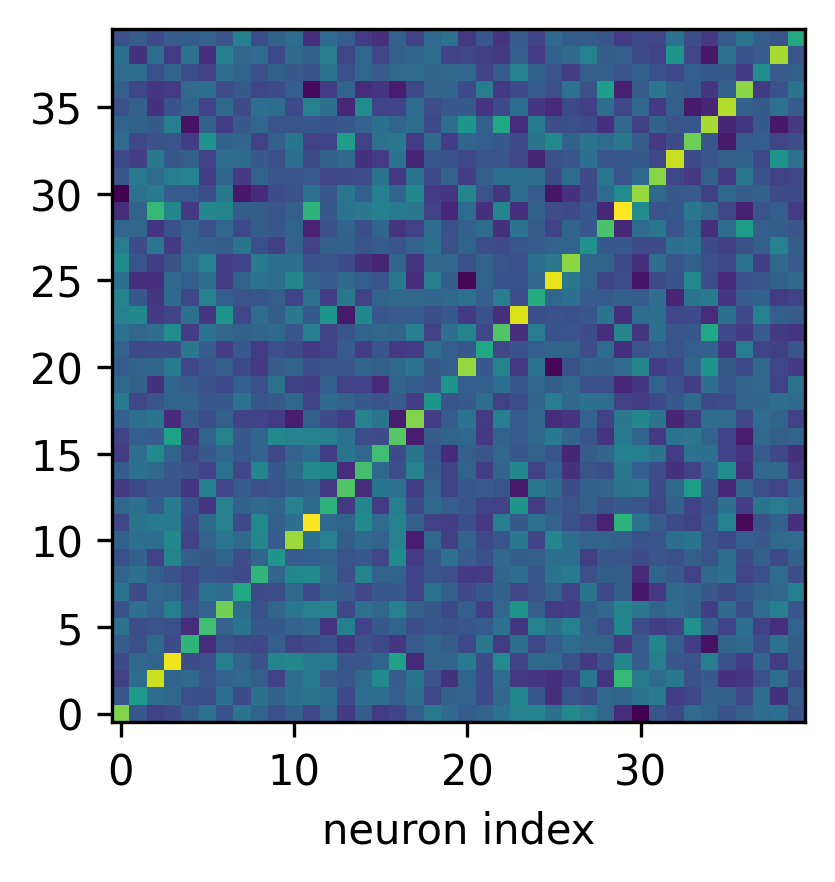}
    \includegraphics[width=0.24\linewidth]{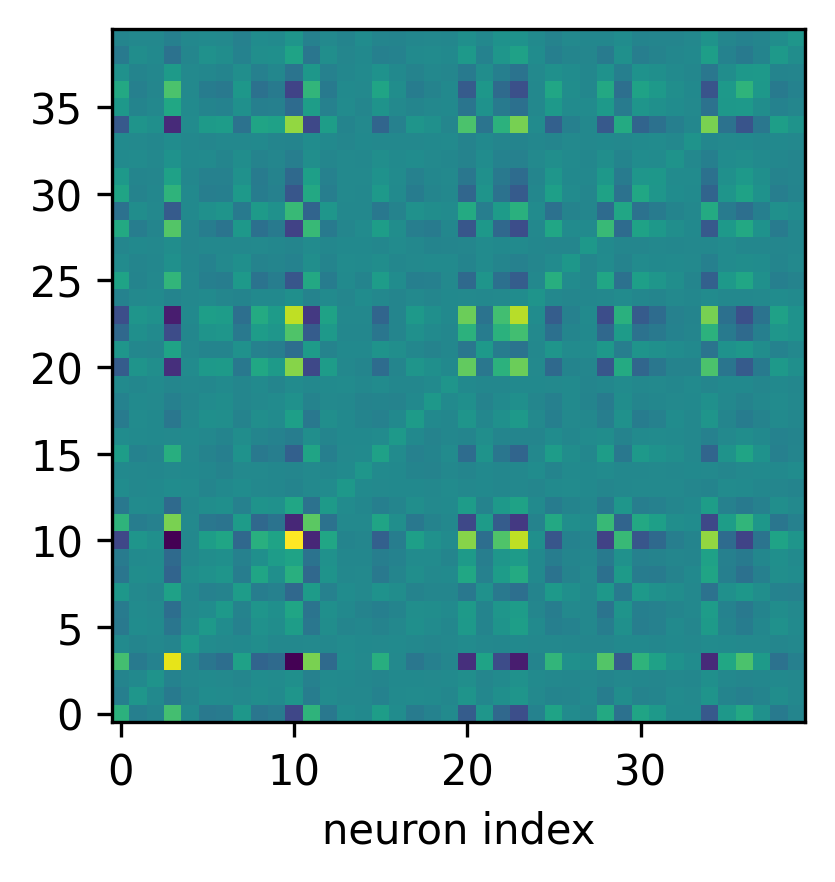}
    \vspace{-0.5em}
    \caption{\small The latent representations of a two-layer tanh net trained under SGD (\textbf{left}) are similar across different layers, in agreement with the theory. However, the learned representations are dissimilar under GD (\textbf{right}). Here, we plot the matrices $W\bar{\Sigma}_xW$ (first and third plots) and $U\bar{\Sigma}_\epsilon U$ (second and fourth plots). Note that the quantity $W\bar{\Sigma}_xW$ is equal to the covariance of the preactivation representation of the first layer. This means that SGD and GD learn qualitatively different features after training. Also, see Appendix~\ref{app sec: nonlinear nets} for other activations. This mechanism also complements the recent result in Ref.~\cite{ziyin2024formation}, which proposes a physics-inspired theory showing that gradient noise is a key factor in determining the latent representation of neural networks.}
    \label{fig: latent representation}
    \vspace{-1em}
\end{figure*}

A set of experiments is shown in Figure~\ref{fig: latent representation}, where we compare the latent representation of a two-layer tanh net with the prediction of \ref{theo: fixed point of standard mf}. This is a natural example because fully connected networks are believed to be approximated by deep linear networks because they have the same connectivity patterns. We thus compare the prediction of Theorem~\ref{theo: fixed point of standard mf} with the experimental results of nonlinear networks. Here, the task is a simple autoencoding task, where $x\in\mathbb{R}^{40}$ and  $y = x + \epsilon$. $x$ is sampled from an isotropic Gaussian, and $\epsilon$ is an independent non-isotropic (but diagonal) Gaussian noise such that $\V[\epsilon_1] = 5$ and $\V[\epsilon_i] = 1$ for $i\neq 1$. We train with SGD or GD for $10^4$ iterations. The experimental results show that if trained with SGD, the learned representation agrees with the prediction of Theorem~\ref{theo: fixed point of standard mf} well, whereas under GD, the model learned a completely different representation. This suggests that our result may be greatly useful for understanding the structures of latent representations of trained neural networks because the quantity $W\bar{\Sigma}_x W$ has a clean interpretation as the normalized covariance matrix of pre-activation hidden representation. Also, this result is not a special feature of tanh networks. Appendix~\ref{app sec: nonlinear nets} also shows that the same phenomenon can be observed for swish \cite{ramachandran2017searching}, ReLU, and leaky-ReLU nets.

\vspace{-3mm}
\section{Conclusion}
\vspace{-2mm}

In this work, we have studied how continuous symmetries affect the learning dynamics and fixed points of SGD. The result implies that SGD converges to initialization-independent solutions at the end of training, in sharp contrast to GD, which converges to strongly initialization-dependent solutions. We constructed the theoretical framework of exponential symmetries to study the special tendency of SGD to stay close to a special fixed point along the constant directions of the loss landscape. We proved that every exponential symmetry leads to a mapping of every parameter to a unique and essentially attractive fixed point. This point also has a clean interpretation: it is the point where the gradient noises of SGD in different subspaces \textit{balance} and \textit{align}. Because of this property, we termed these fixed points the ``noise equilibria." The advantage of our result is that it only relies on the existence of symmetries and is independent of the particular definitions of model architecture or data distribution. A limitation of our work is that we only focus on the problems that exponential symmetries can describe. It would be important to extend the result to other types of symmetries in the future. Another interesting future direction is to study these noise equilibria of more advanced models, which may deepen both our understanding of deep learning and neuroscience.

\section*{Acknowledgement}
\vspace{-.5em}
Lei Wu is supported by the National Key R\&D Program of China (No.~2022YFA1008200) and National Natural Science Foundation of China (No.~2288101). Hongchao Li is supported by Forefront Physics and Mathematics Program to Drive Transformation (FoPM), a World-leading Innovative Graduate Study (WINGS) Program, the University of
Tokyo. Mingze Wang is supported in part by the National Key Basic Research Program of China (No.~2015CB856000).  
We thank anonymous reviewers for their valuable comments.


\newpage
\appendix
\onecolumn

\section{Additional Experiments}

\vspace{-1mm}
\subsection{Scale Invariance}\label{app sec: scale invariance}
\vspace{-1mm}

\begin{figure*}[t!]
    \centering
    \includegraphics[width=0.3\linewidth]{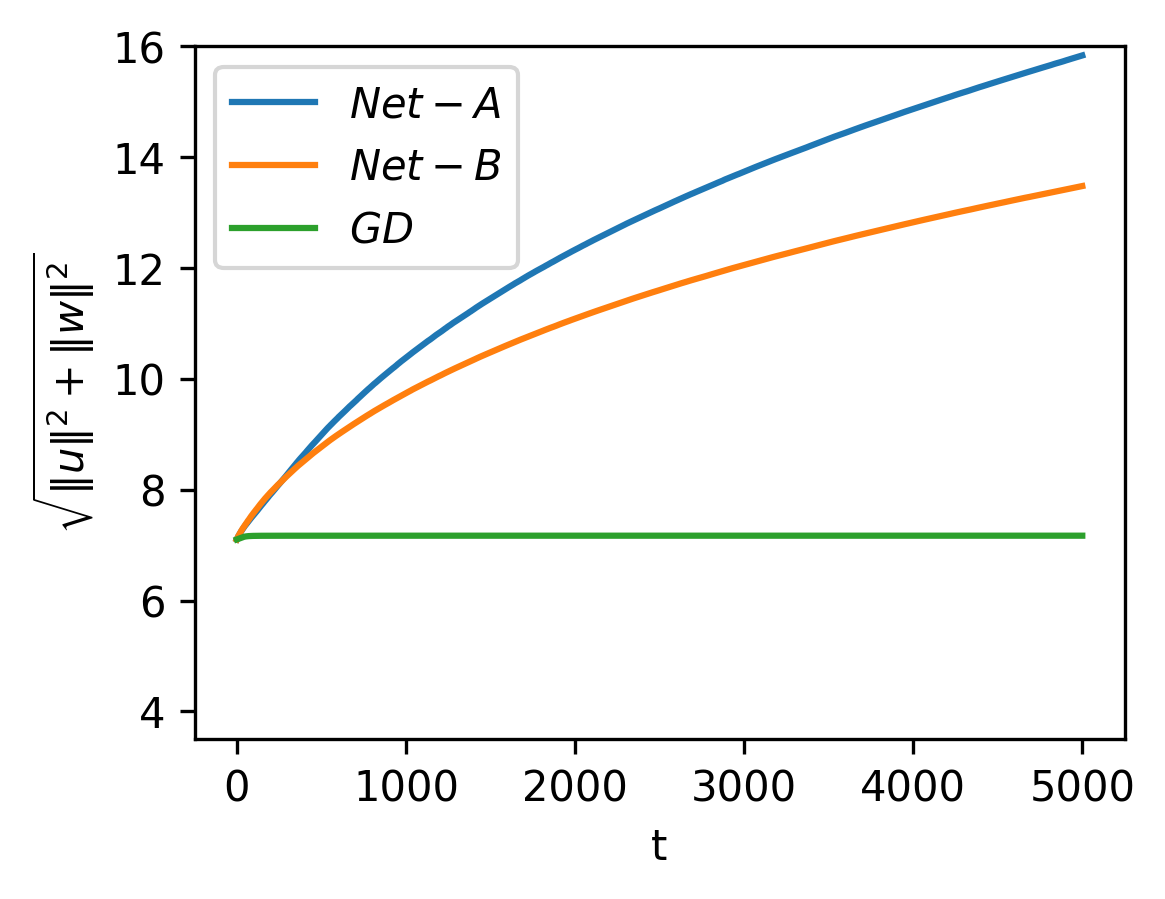}
    \includegraphics[width=0.3\linewidth]{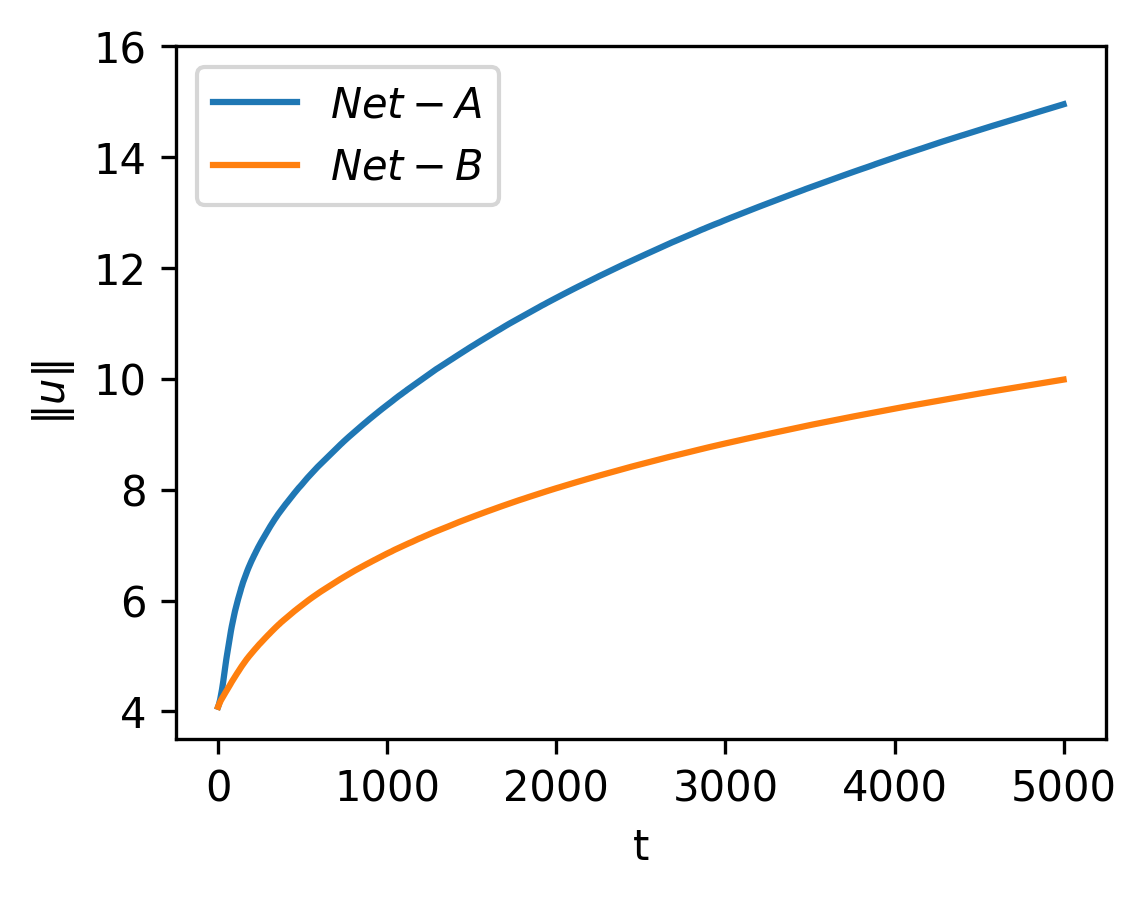}
    \includegraphics[width=0.3\linewidth]{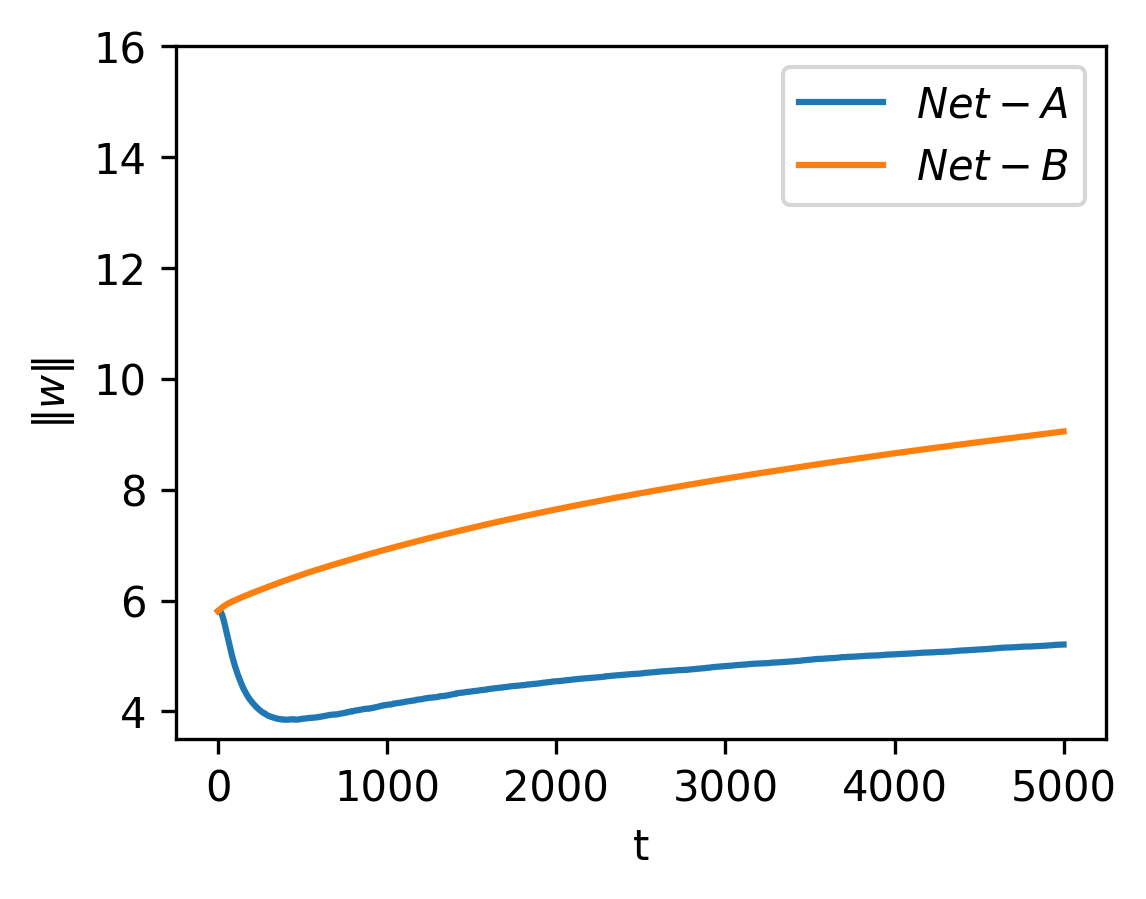}
    \caption{When there is scaling symmetry, the norm of the parameters increases monotonically under SGD but remains unchanged under GD. \textbf{Left}: evolution of the total model norm for two-layer nonlinear networks where there is a rescaling symmetry. \textbf{Mid}: evolution of the second layer. \textbf{Right}: evolution of the first layer. This shows that the evolution of each layer can be vastly different, but the total norm of the parameters with the scaling symmetry is always monotonically increasing. Also, note that for net $B$, each layer also has the rescaling symmetry, and so the norm of each layer for net-$B$ is also increasing. In contrast, net-$A$ does not have layer-wise symmetry, and the individual norms can be either increasing or decreasing.}
    \label{fig:scale invariance}
\end{figure*}

The scale invariance appears when common normalization techniques such as batch normalization \cite{ioffe2015batch} and layer normalization \cite{ba2016layer} are used. Let $\ell(\theta, x)$ denote a per-sample loss such that for any $\rho \in \mathbb{R}^{+}$: $\ell(\theta , x) = \ell(\rho \theta, x)$, where $\theta\in \mathbb{R}^d$. For this symmetry, $A=I$. Thus, by Eq.~\eqref{eq: quadratic dyanmics}, we have during SGD training that
\begin{equation}
    \frac{\dd}{\dd t}\|\theta_t\|^2 =-\gamma \|\theta_t\|^2+\sigma^2 \Tr [\Sigma(\theta_t)].
\end{equation}
Thus, without weight decay, the parameter norm increases monotonically and even diverges, particularly for under-parameterized models where the gradient noise is typically non-degenerate.

Here, we numerically compare two networks trained on GD and SGD: Net-A: $f(x)= \sum_j \frac{u_{j}}{\|w\|} \tanh ({w_{j}^{\top}}x/ {\|u\|_F})$; and Net-B: $f(x)= \sum_j \frac{u_{ij}}{\|u\|} \tanh ({w_{j}^{\top}} x /{\|w\|_F})$. Here, $w$ and $u$ are matrices and $w_j$ denotes the $j$-th row of $w$ and $u_j$ denotes the $j$-th column of $u$. 
The two networks are different functions of $u$ and $w$. However, both networks have the global scale invariance: if we scale both $U$ and $W$ by an arbitrary positive scalar $\rho$, the network output and loss function remain unchanged for any sample $x$. We train these two networks on simple linear Gaussian data with GD or SGD. Figure~\ref{fig:scale invariance} shows the result. Clearly, for SGD, both networks have a monotonically increasing norm, whereas the norm remains unchanged when the training proceeds with GD. What's more, Net-B has two additional layer-wise scale invariances where one can scale only $u$ (or only $w$) by $\rho$ while keeping the loss function unchanged. This means that both layers will have a monotonically increasing norm, which is not the case for Net-A. 

Recent works have studied the dynamics of SGD under the scale-invariant models when weight decay is present \cite{wan2021spherical,li2020reconciling}. Our result shows that the model parameters will diverge without weight decay, leading to potential numerical problems. Combining the two results, the importance of having weight decay becomes clear: {it prevents the divergence of models}.

\subsection{Experiment Detail for Alignment Dynamics of Matrix Factorization}
Here, we give the details for the experiment in Figure~\ref{fig:stability}. We train a two-layer linear net with $d_0 =d_2 = 30$ and $d= 40$. The input data is $x\sim \mathcal{N}(0,1)$, and $y=x+\epsilon$, where $\epsilon$ is i.i.d. Gaussian with unit variance. At the end of SGD training, every element of the matrix $U^{\top} \Gamma_U U$ is close to that of $W\Gamma_W W^{\top}$, and they are therefore very well aligned. Such a phenomenon does not happen for GD.

\begin{figure}
    \centering
    \vspace{-2em}
    \includegraphics[width=0.3\linewidth]{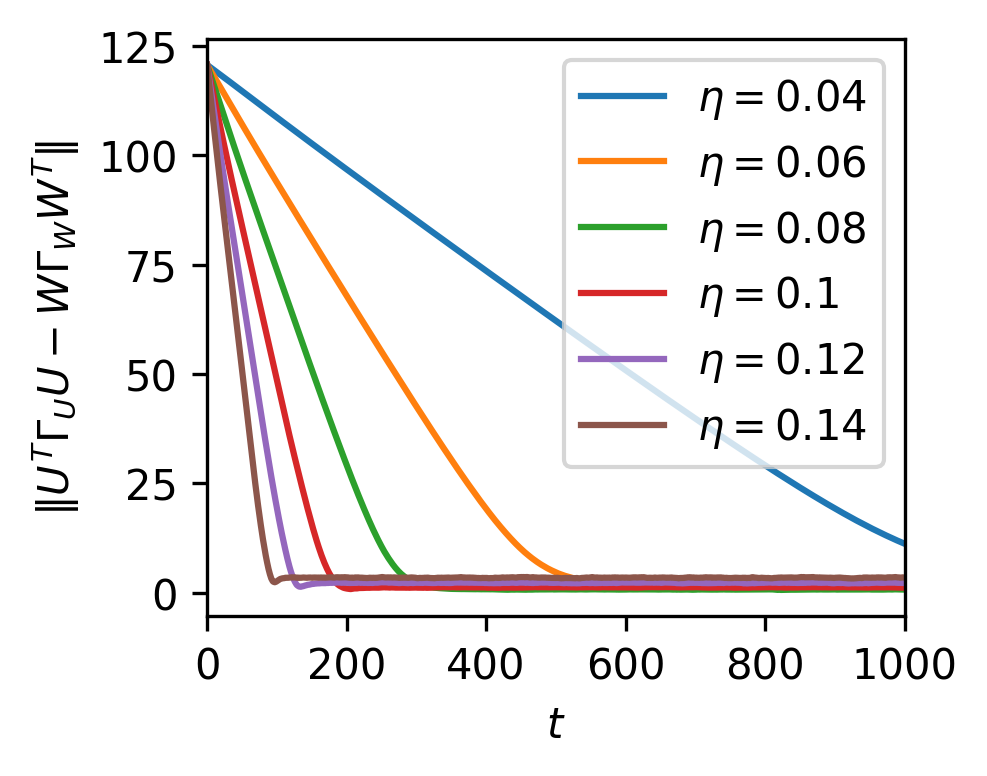}
    \includegraphics[width=0.3\linewidth]{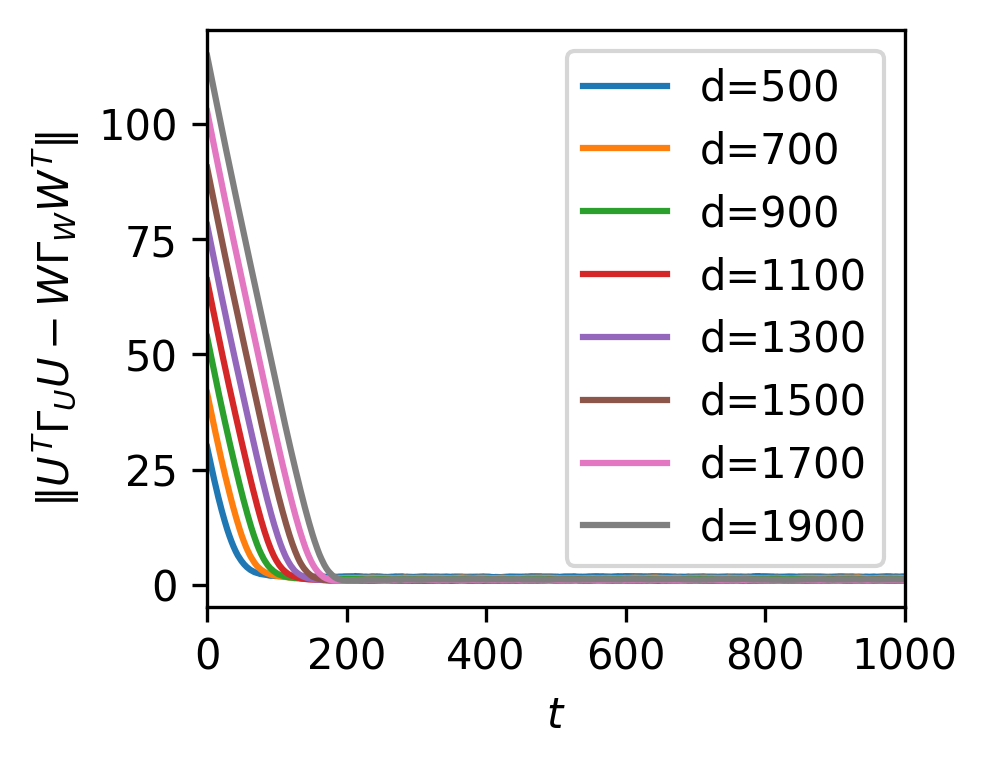}
    \includegraphics[width=0.3\linewidth]{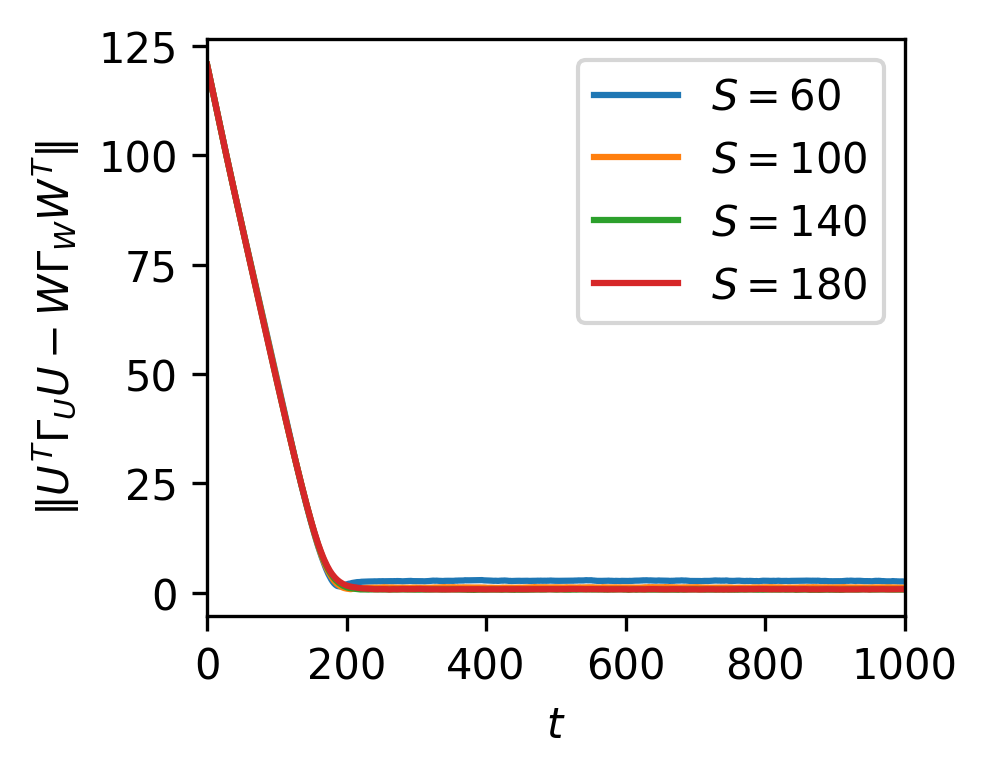}
    \vspace{-0.5em}
    \caption{\small The convergence of matrix factorization to the noise equilibria is robust against different hyperparameter settings. The task is an autoencoding task where $y=x\in\mathbb{R}^{100}$. The distribution of $x$ is controlled by a parameter $\phi_x$: $x_{1:50} \sim\mathcal{N}(0, \phi_x)$, $x_{51:100} \sim\mathcal{N}(0, 2 - \phi_x)$. This directly controls the overall covariance of $x$. The output noise covariance is set to be identity. Unless it is the independent variable, $\eta$, $S$ and $d$ are set to be $0.1$, $100$ and $2000$, respectively. \textbf{Left}: using different learning rates. \textbf{Mid}: different data dimension: $d_x = d_y =d$. \textbf{Right}: different batch size $S$.}
    \label{fig:mf robustness}
\end{figure}
\subsection{Convergence of Matrix Factorization to the Noise Equilibrium}

See Figure~\ref{fig:mf robustness}.

\subsection{Alignment in Nonlinear Networks}\label{app sec: nonlinear nets}

Here, we complement the experiment in the main text with other types of activations. The experimental setting is exactly the same except that we switch the activation to swish, ReLU, and Leaky-ReLU. This shows that the prediction of Proposition~\ref{eq: mf balance condition} may have a surprisingly wide applicability. See Figure~\ref{app fig: nonlinear nets}.

\begin{figure}
    \centering
    
    \includegraphics[width=0.3\linewidth]{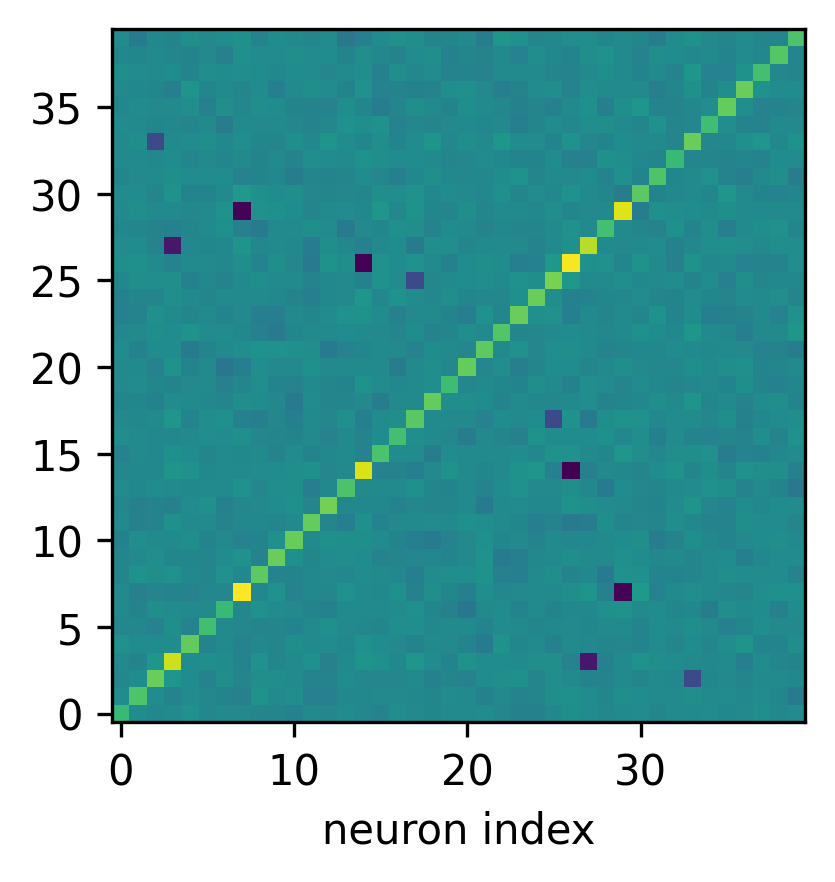}
    \includegraphics[width=0.3\linewidth]{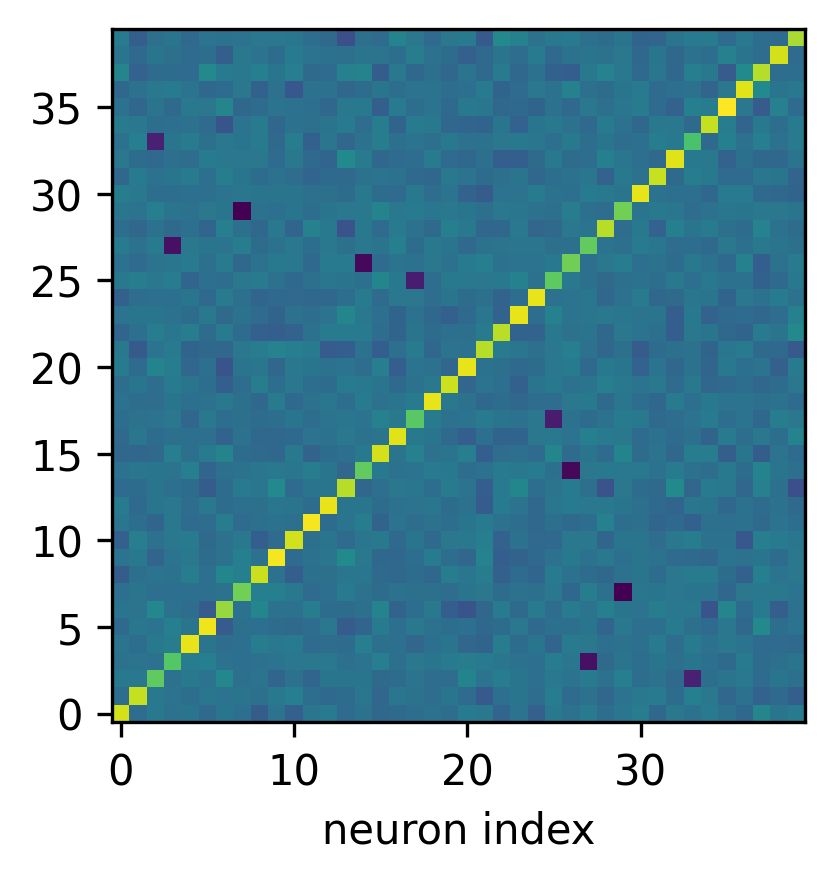}
    
    \includegraphics[width=0.3\linewidth]{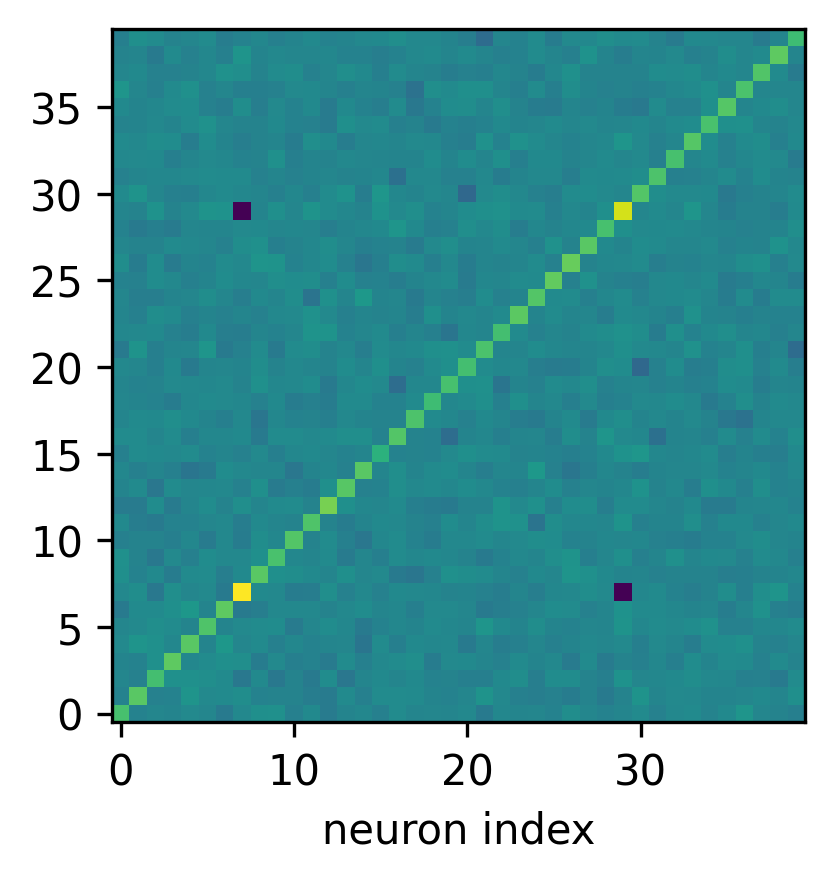}
    \includegraphics[width=0.3\linewidth]
    {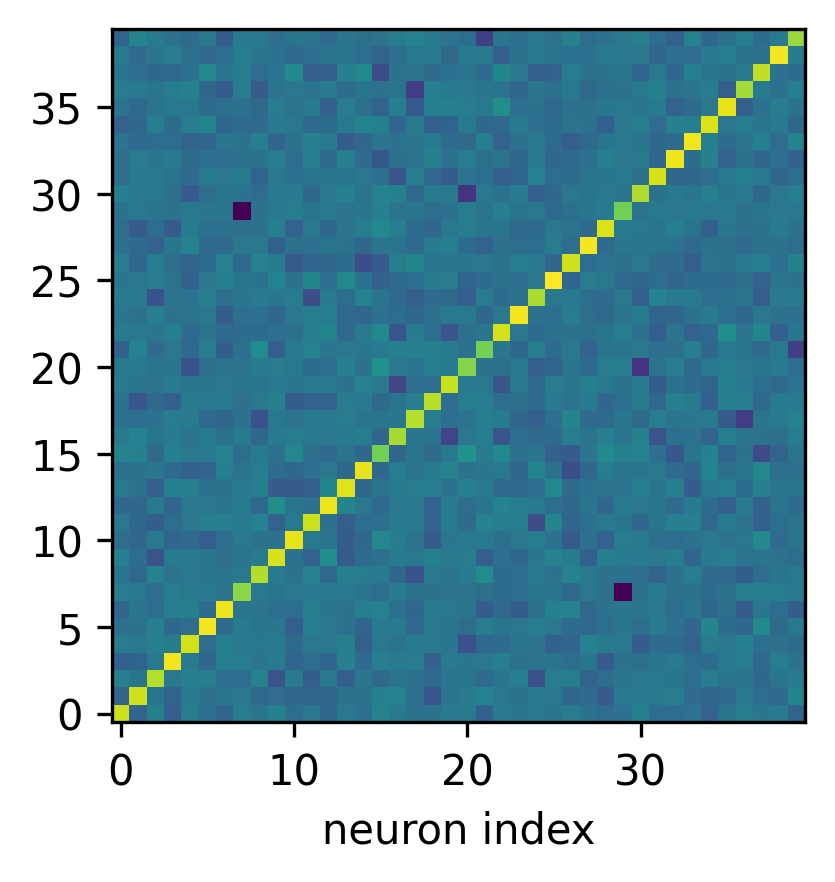}
    
    \includegraphics[width=0.3\linewidth]{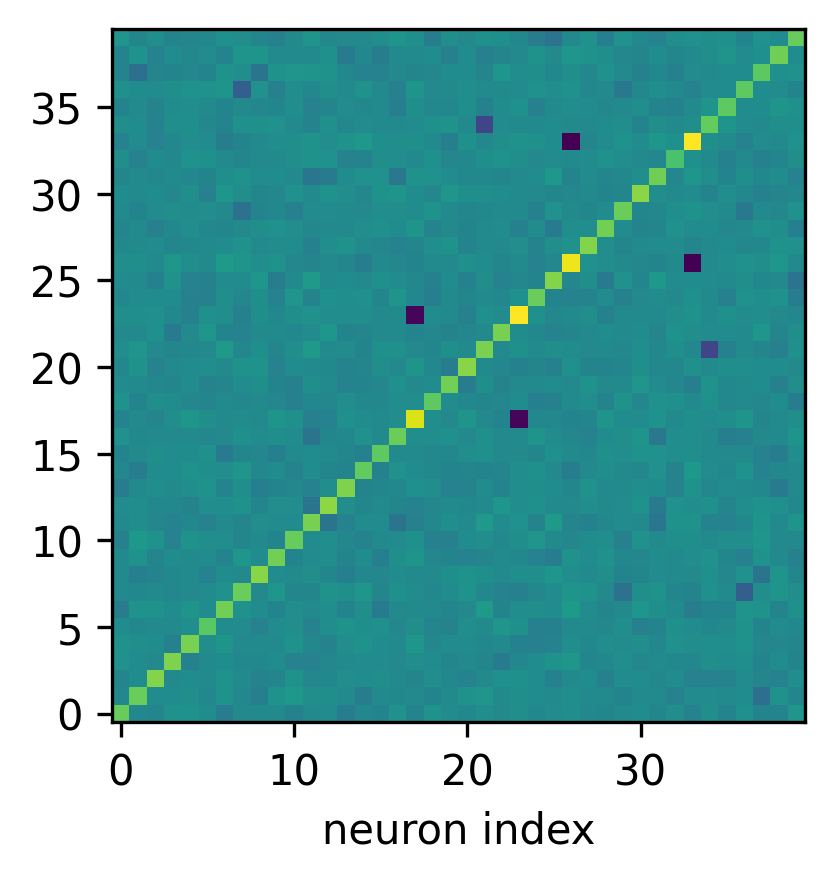}
    \includegraphics[width=0.3\linewidth]{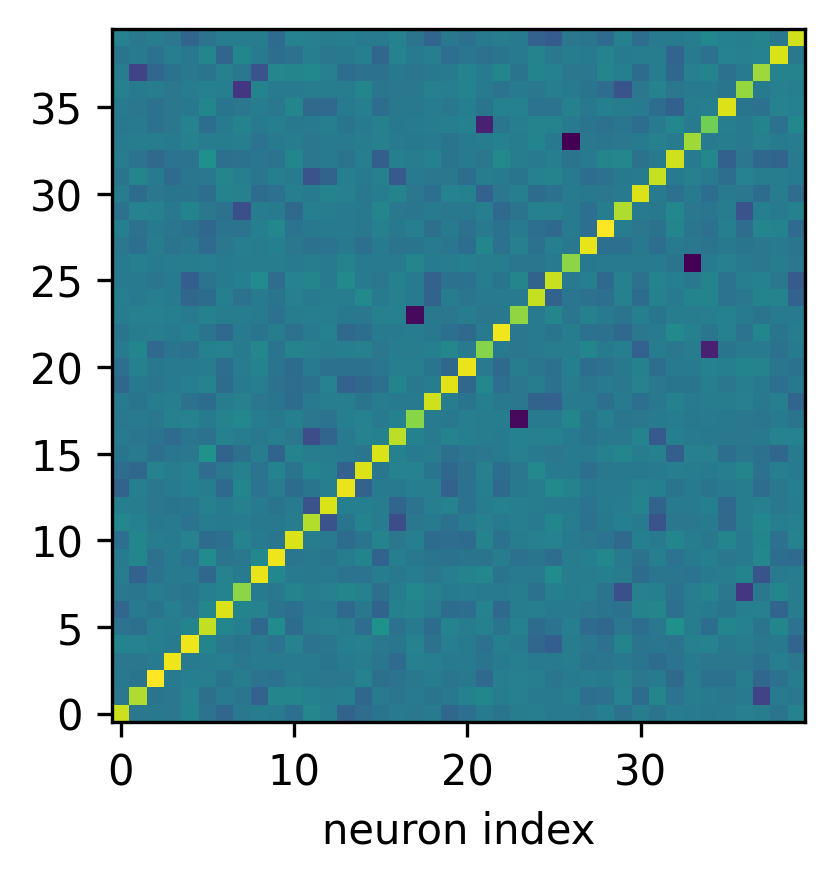}

    \caption{Activation patterns of nonlinear networks trained with SGD (\textbf{Upper to lower}: ReLU, leaky-ReLU, swish). \textbf{Left}: $W \Gamma_W W^{\top}$. \textbf{Right}: $U^{\top} \Gamma_U U$. The similarity between the two matrices is striking.}
    \label{app fig: nonlinear nets}
\end{figure}

\clearpage
\section{Proofs}\label{app sec: proofs}

\subsection{Ito's Lemma and Derivation of Eq.~\eqref{eqn: ito flow}}
Let a vector $X_t$ follow the following stochastic process:
\begin{equation}
    \dd X_t = \mu_t \dd t + G_t \dd W_t
\end{equation}
for a matrix $G_t$. Then, the dynamics of any function of $X_t$ can be written as (Ito's Lemma)
\begin{equation}
    \dd f(X_t) = \left(\nabla_X^\top f \mu_t + \frac{1}{2}\Tr[G_t^\top \nabla^2 f(X_t) G_t] \right)\dd t  + \nabla f(X_t)^\top G_t \dd W_t. 
\end{equation}
Applying this result to quantity $C(\theta)$ under the SGD dynamics, we obtain that
\begin{equation}
    dC = \left(\nabla^\top C \nabla L + \frac{\sigma^2}{2}\Tr[\Sigma(\theta)\nabla^2 C] \right) dt  + \nabla^\top C \sqrt{\sigma^2 \Sigma(\theta)} dW_t,
\end{equation}
where we have used $\mu_t= \nabla L$, $G_t = \sqrt{\sigma^2 \Sigma(\theta)}$. By Eq.~\eqref{eqn: x1}, we have that 
\begin{equation}
    \nabla^\top C \nabla L = \E [\nabla^\top C \nabla\ell] =0, 
\end{equation}
and 
\begin{equation}
    \nabla^\top C \Sigma = \E[\nabla^\top C \nabla \ell \nabla^\top \ell ] -  \E[\nabla^\top C \nabla \ell] \E[ \nabla^\top \ell ] = 0.
\end{equation}
Because ${\Sigma(\theta)}$ and $\sqrt{\Sigma(\theta)}$ share eigenvectors, we have that
\begin{equation}
    \nabla^\top C \sqrt{\sigma^2 \Sigma(\theta)} =0.
\end{equation}
Therefore, we have derived:
\begin{equation}
    dC = \frac{\sigma^2}{2}\Tr[\Sigma(\theta)\nabla^2 C] dt.
\end{equation}

\subsection{Proof of Theorem~\ref{theor: fixed point theorem}}

We first prove a lemma that links the gradient covariance at $\theta$ to the gradient covariance at $\theta_\lambda$. 

\begin{lemma}\label{lemma: covariance covariance}
\begin{equation}
    \Tr[\Sigma(\theta_\lambda)A] 
    = \Tr[e^{-2\lambda A}\Sigma(\theta)A].
\end{equation}

\end{lemma}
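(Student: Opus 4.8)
The plan is to track how the gradient noise covariance transforms under the symmetry-generated exponential map. The key observation is that the $A$-exponential symmetry relates the per-sample loss at $\theta_\lambda = e^{\lambda A}\theta$ back to the loss at $\theta$, and this should induce a correspondingly clean transformation of the per-sample gradients. First I would differentiate the symmetry identity $\ell(\theta,z) = \ell(Q_\rho(\theta),z)$ to higher order, or more directly use the exponential-map form $\ell(\theta,z) = \ell(e^{\lambda A}\theta, z)$ established earlier in the excerpt. Taking $\nabla_\theta$ of both sides via the chain rule, and using that $\tfrac{\partial}{\partial\theta}(e^{\lambda A}\theta) = e^{\lambda A}$ (with $A$ symmetric so $(e^{\lambda A})^\top = e^{\lambda A}$), I expect to obtain the relation
\begin{equation}
    \nabla_\theta \ell(\theta,z) = e^{\lambda A}\,\nabla_\theta \ell(\theta_\lambda, z),
\end{equation}
equivalently $\nabla \ell(\theta_\lambda,z) = e^{-\lambda A}\nabla \ell(\theta,z)$.

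Given this gradient transformation law, the rest is an algebraic computation on the covariance. I would substitute $\nabla\ell(\theta_\lambda,z) = e^{-\lambda A}\nabla\ell(\theta,z)$ into the definition
\begin{equation}
    \Sigma(\theta_\lambda) = \E_z[\nabla\ell(\theta_\lambda,z)\nabla\ell(\theta_\lambda,z)^\top] - \nabla L(\theta_\lambda)\nabla L(\theta_\lambda)^\top,
\end{equation}
factoring the matrix $e^{-\lambda A}$ out on the left and $e^{-\lambda A}$ (transposed, hence itself since $A$ is symmetric) on the right, which gives $\Sigma(\theta_\lambda) = e^{-\lambda A}\Sigma(\theta)e^{-\lambda A}$. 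Then I would form $\Tr[\Sigma(\theta_\lambda)A] = \Tr[e^{-\lambda A}\Sigma(\theta)e^{-\lambda A}A]$ and use cyclicity of the trace together with the fact that $e^{-\lambda A}$ commutes with $A$ (both being functions of $A$) to collapse the two exponential factors into $e^{-2\lambda A}$, yielding
\begin{equation}
    \Tr[\Sigma(\theta_\lambda)A] = \Tr[e^{-2\lambda A}\Sigma(\theta)A].
\end{equation}

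The main obstacle I anticipate is establishing the gradient transformation law rigorously rather than heuristically. The subtlety is that $\nabla L(\theta_\lambda)$ is the gradient evaluated at the transformed point, and one must carefully distinguish $\nabla_\theta[\ell(e^{\lambda A}\theta,z)]$ (the gradient of the composed function, which vanishes by symmetry) from $\nabla\ell$ evaluated at $\theta_\lambda$; the transformation law for the latter comes from the chain rule applied to the symmetry identity. A clean way to handle this is to note that $e^{\lambda A}$ commutes with $A$, so the degenerate direction $J(\theta_\lambda) = A\theta_\lambda = A e^{\lambda A}\theta = e^{\lambda A}A\theta$ transforms consistently, and the symmetry at every point along the orbit enforces the gradient relation. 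Once the gradient law is in hand, I would also confirm that the mean-gradient term transforms the same way, so that the subtracted outer product $\nabla L\nabla L^\top$ factors identically and the lemma holds for the full covariance, not merely its uncentered second moment.
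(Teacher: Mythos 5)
Your proposal is correct and follows essentially the same route as the paper's proof: differentiate the symmetry identity $\ell(\theta,z)=\ell(e^{\lambda A}\theta,z)$ to get the gradient transformation law $\nabla\ell(\theta_\lambda,z)=e^{-\lambda A}\nabla\ell(\theta,z)$, deduce $\Sigma(\theta_\lambda)=e^{-\lambda A}\Sigma(\theta)e^{-\lambda A}$, and collapse the exponentials inside the trace using commutativity of $e^{-\lambda A}$ with $A$. Your additional check that the mean-gradient term $\nabla L(\theta_\lambda)\nabla L(\theta_\lambda)^\top$ transforms identically is in fact slightly more careful than the paper, whose proof writes the covariance without the centering term.
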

\begin{proof}
    By the definition of the exponential symmetry, we have that for an arbitrary $\lambda$,
\begin{equation}
    \ell(\theta) = \ell(e^{\lambda A}\theta).
\end{equation}
Taking the derivative of both sides, we obtain that 
\begin{equation}
    \nabla_\theta \ell(\theta) = e^{\lambda A} \nabla_{\theta_\lambda}\ell(\theta_\lambda),
\end{equation}
The standard result of Lie groups shows that $e^{\lambda A}$ is full-rank and symmetric, and its inverse is $e^{-\lambda A}$. Therefore, we have
\begin{equation}
    e^{-\lambda A } \nabla_\theta \ell(\theta) = \nabla_{\theta_\lambda}\ell(\theta_\lambda).
\end{equation}

Now, we apply this relation to the trace of interest. By definition, 
\begin{align}
    \Sigma(\theta_\lambda) &= \E[\nabla_{\theta_\lambda} \ell(\theta_\lambda) \nabla_{\theta_\lambda}^\top \ell(\theta_\lambda)]\\
    &= e^{-\lambda A} \Sigma(\theta) e^{-\lambda A}.
\end{align}
Because $e^{\lambda A}$ is a function of $A$, it commutes with $A$. Therefore, 
\begin{align}
    \Tr[\Sigma(\theta_\lambda)A] &= \Tr[e^{-\lambda A}\Sigma(\theta)e^{-\lambda A}A]\\
    &= \Tr[e^{-2\lambda A}\Sigma(\theta)A].
\end{align}
\end{proof}

Now, we are ready to prove the main theorem.
\begin{proof}

First of all, it is easy to see that $C(\theta_\lambda)$ is a monotonically increasing function of $\lambda$. By definition,
\begin{align}
    C(\theta_\lambda) &= \theta^T  e^{\lambda A}A e^{\lambda A} \theta\\
    &= \theta^T(Z_+ + Z_-)\theta,
\end{align}
where we have decomposed the matrix $e^{\lambda A}A e^{\lambda A} = Z_+ + Z_-$ into two symmetric matrices such that $Z_+$ only contains nonnegative eigenvalues, and $Z_-$ only contains nonpositive eigevalues. Because $e^{\lambda A}$ commute with $A$, they share the eigenvectors. Using elementary Lie algebra shows that the eigenvalues of $Z_+$ are $a_+e^{\lambda a_+}$ and that of $Z_-$ are $a_-e^{\lambda a_-}$, where $a_+\geq 0$ and $a_0\leq 0$. This implies that $\theta^T Z_+\theta$ and $\theta^TZ_-\theta$ are monotonically increasing functions of $\lambda$.

Now, by Lemma~\ref{lemma: covariance covariance}, we have
\begin{equation}
    \Tr[\Sigma(\theta_\lambda)A] 
    = \Tr[e^{-2\lambda A}\Sigma(\theta)A].
\end{equation}
Similarly, the regularization term is 
\begin{equation}
    \gamma \theta_\lambda^\top A \theta_\lambda = \gamma \Tr[\theta \theta^\top A e^{2\lambda A}].
\end{equation}

Now, by assumption, if $G(\theta_\lambda)\neq 0$, we have either $\Tr[\Sigma(\theta) A] \neq 0$ or $\Tr[\theta\theta^{\top}A] \neq 0$.

If $\Tr[\Sigma(\theta) A]= \theta^\top A \theta =0$, we have already proved item (2) of the theorem. Therefore, let us consider the case when either (or both) $\Tr[\Sigma(\theta) A]\neq 0 $ or $\theta^\top A \theta \neq 0$

Without loss of generality, we assume $\gamma\geq 0$, and the case of $\gamma<0$ follows an analogous proof.
In such a case, we can write the trace in terms of the eigenvectors $n_i$ of $A$:
\begin{align*}
    -\gamma \theta_\lambda^\top A \theta_\lambda + \eta \Tr[\Sigma(\theta_\lambda)A] &= \underbrace{\eta\sum_{\mu_i>0} e^{-2\lambda |\mu_i|} |\mu_i| \sigma_i^2 + \gamma \sum_{\mu_i<0} e^{-2\lambda |\mu_i|} |\mu_i| \tilde{\theta}_i^2}_{I_1(\lambda)}  -  \underbrace{\left(\eta \sum_{\mu_i<0} e^{2\lambda |\mu_i|} |\mu_i| \sigma_i^2 + \gamma \sum_{\mu_i>0} e^{2\lambda |\mu_i|} |\mu_i| \tilde{\theta}_i^2\right)}_{I_2(\lambda)}\\ 
    &=: I(\lambda),
\end{align*}
where $\mu_i$ is the $i$-th eigenvalue of $A$, $\tilde{\theta}_i = (n_i^\top \theta_i)^2$, $\sigma_i^2 = n_i^\top \Sigma(\theta) n_i \geq 0$ is the norm of the projection of $\Sigma$ in this direction.

By definition, $I_1$ is either a zero function or strictly monotonically increasing function with $I_1(-\infty)=+\infty, I_1(+\infty)=0$
Likewise, $I_2$ is either a zero function or a strictly monotonically increasing function with $I_2(-\infty)=0, I_2(+\infty)=+\infty$. By the assumption $\Tr(\Sigma(\theta)A) \neq 0$ or $\Tr(\theta\theta^\top A) \neq 0$, we have that at least one of $I_1$ and $I_2$ must be a strictly monotonic function. 
\begin{itemize}
\item If $I_1$ or $I_2$ is zero, we can take $\lambda$ to be either $+\infty$ or $-\infty$ to satisfy the condition. 
\item If both $I_1$ and $I_2$ are nonzero, then $I=I_1-I_2$ is a strictly monotonically decreasing function with $I(-\infty)=+\infty$ and $I(+\infty)=-\infty$. Therefore,
there must exist only a unique $\lambda^*\in\RR$ such that $I(\lambda^*)=0$. 
\end{itemize}

For the proof of (4), we denote the multi-variable function $J(\theta;\lambda):=G(\theta_\lambda)$.
Given that $\Sigma(\theta)$ is differentiable, $\frac{\partial J}{\partial \theta}$ exists.

It is easy to see that $\frac{\partial J}{\partial \lambda}$ is continuous. Moreover, for any $\theta$ and $\lambda=\lambda^*(\theta)$,
\begin{align*}
    -\frac{\partial J}{2\partial \lambda}
    =\eta\sum_{\mu_i>0} e^{-2\lambda |\mu_i|} |\mu_i|^2 \sigma_i^2 + \gamma \sum_{\mu_i<0} e^{-2\lambda |\mu_i|} |\mu_i|^2 \tilde{\theta}_i^2 + \eta \sum_{\mu_i<0} e^{2\lambda |\mu_i|} |\mu_i|^2 \sigma_i^2 + \gamma \sum_{\mu_i>0} e^{2\lambda |\mu_i|} |\mu_i|^2 \tilde{\theta}_i^2 \ne 0.
\end{align*}

Consequently, according to the Implicit Function Theorem, the function $\lambda^*(\theta)$ is differentiable. Additionally, $\frac{\partial\lambda}{\partial\theta}=-\frac{\frac{\partial J}{\partial \theta}}{\frac{\partial J}{\partial \lambda}}$.

\end{proof}

\subsection{Convergence}\label{app sec: convergence to fixed point}
First of all, notice an important property, which follows from Theorem~\ref{theor: fixed point theorem}: $\lambda^* =0$ if and only if $C-C^*=0$. 

\begin{lemma}\label{lemma: rhs lemma}
For all $\theta(t)$,
\begin{equation}
    \frac{\dot{C}(\theta
    )}{\lambda^*(\theta)} \geq \begin{cases}
            2\sigma^2 \Tr[\Sigma(\theta^*)  A_+^2] &\text{if $\lambda^* > 0$;}\\
            2\sigma^2 \Tr[\Sigma(\theta^*)  A_-^2]
            &\text{if $\lambda^* < 0$.}
        \end{cases}
\end{equation}
\end{lemma}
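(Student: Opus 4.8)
The plan is to collapse everything onto the one-variable function $I(\lambda) := G(\theta_\lambda)$ that already appears in the proof of Theorem~\ref{theor: fixed point theorem}. By definition $\dot{C}(\theta) = G(\theta) = I(0)$, and $\lambda^*$ is characterized by $I(\lambda^*) = 0$, so the quantity to be bounded is simply $I(0)/\lambda^*$. Write the spectral decomposition $A = \sum_i \mu_i n_i n_i^\top$, set $\sigma_i^2 := n_i^\top \Sigma(\theta) n_i \ge 0$ and $\tilde\theta_i := (n_i^\top\theta)^2 \ge 0$, and let $A_+ = \sum_{\mu_i>0}\mu_i n_i n_i^\top$, $A_- = \sum_{\mu_i<0}\mu_i n_i n_i^\top$ be the positive and negative spectral parts. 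Differentiating the explicit expression $I = I_1 - I_2$ from that proof gives, for $\gamma \ge 0$, a sum of four manifestly nonnegative terms:
\[
-I'(\lambda) = 2\sigma^2\!\!\sum_{\mu_i>0}\!\mu_i^2 e^{-2\lambda|\mu_i|}\sigma_i^2 + 2\gamma\!\!\sum_{\mu_i<0}\!\mu_i^2 e^{-2\lambda|\mu_i|}\tilde\theta_i + 2\sigma^2\!\!\sum_{\mu_i<0}\!\mu_i^2 e^{2\lambda|\mu_i|}\sigma_i^2 + 2\gamma\!\!\sum_{\mu_i>0}\!\mu_i^2 e^{2\lambda|\mu_i|}\tilde\theta_i.
\]
In particular, dropping three of the four nonnegative terms yields $-I'(\lambda) \ge 2\sigma^2\sum_{\mu_i>0}\mu_i^2 e^{-2\lambda|\mu_i|}\sigma_i^2$ and $-I'(\lambda) \ge 2\sigma^2\sum_{\mu_i<0}\mu_i^2 e^{2\lambda|\mu_i|}\sigma_i^2$.

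Next I would turn the ratio into an average. Since $I(\lambda^*) = 0$, the fundamental theorem of calculus gives $\dot{C}(\theta)/\lambda^* = I(0)/\lambda^* = \frac{1}{\lambda^*}\int_0^{\lambda^*}(-I'(s))\,\mathrm{d}s$, which is the mean value of the nonnegative function $-I'$ over the interval joining $0$ and $\lambda^*$ (this is cleaner than invoking the mean value theorem, since it avoids producing an unknown intermediate point and works uniformly for either sign of $\lambda^*$). It therefore suffices to bound $-I'(s)$ below by the claimed constant uniformly for $s$ in that interval.

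The bridge to $\theta^*$ is the identity $\Sigma(\theta^*) = e^{-\lambda^* A}\Sigma(\theta)e^{-\lambda^* A}$, which is exactly the computation inside the proof of Lemma~\ref{lemma: covariance covariance} specialized to $\lambda = \lambda^*$. Since $A n_i = \mu_i n_i$, this gives $n_i^\top\Sigma(\theta^*)n_i = e^{-2\lambda^*\mu_i}\sigma_i^2$, hence $\Tr[\Sigma(\theta^*)A_+^2] = \sum_{\mu_i>0}\mu_i^2 e^{-2\lambda^*|\mu_i|}\sigma_i^2$ and $\Tr[\Sigma(\theta^*)A_-^2] = \sum_{\mu_i<0}\mu_i^2 e^{2\lambda^*|\mu_i|}\sigma_i^2$. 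For the case $\lambda^* > 0$, every $s \in [0,\lambda^*]$ satisfies $e^{-2s|\mu_i|} \ge e^{-2\lambda^*|\mu_i|}$, so $-I'(s) \ge 2\sigma^2\sum_{\mu_i>0}\mu_i^2 e^{-2\lambda^*|\mu_i|}\sigma_i^2 = 2\sigma^2\Tr[\Sigma(\theta^*)A_+^2]$; for $\lambda^* < 0$, every $s \in [\lambda^*,0]$ satisfies $e^{2s|\mu_i|} \ge e^{2\lambda^*|\mu_i|}$, giving $-I'(s) \ge 2\sigma^2\Tr[\Sigma(\theta^*)A_-^2]$. Averaging these uniform bounds over the interval then delivers the two cases of the lemma.

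I expect the only delicate part to be the sign bookkeeping: one must retain the correct spectral half of $A$ in each case and verify that $s$ lies on the side of $\lambda^*$ that makes the exponent at $s$ dominate the exponent at $\lambda^*$. Everything else is positivity of $\Sigma(\theta)$, nonnegativity of the weight-decay terms, and monotonicity of the exponential; no analytic input beyond Lemma~\ref{lemma: covariance covariance} is needed. Throughout I would work under the nondegeneracy hypothesis of part~(3) of Theorem~\ref{theor: fixed point theorem}, so that $\lambda^*$ is finite and the two displayed cases $\lambda^* > 0$ and $\lambda^* < 0$ cover the relevant situations (the excluded value $\lambda^* = 0$ corresponds to $\dot{C}(\theta) = 0$, where the ratio is indeterminate).
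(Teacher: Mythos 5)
Your proof is correct, and while it rests on the same two pillars as the paper's argument --- Lemma~\ref{lemma: covariance covariance} relating $\Sigma(\theta^*)$ to $\Sigma(\theta)$, and the spectral split of $A$ into $A_+$ and $A_-$ --- the mechanism is genuinely different. The paper never differentiates anything: it writes $\dot C = \sigma^2\Tr[\Sigma(\theta^*)e^{2\lambda^* A}A]$, subtracts the fixed-point identity $\Tr[\Sigma(\theta^*)A]=0$, and applies the matrix inequality $I+X \leq e^X$ blockwise to get a bound linear in $\lambda^*$, discarding the contribution of the opposite spectral block by positivity. You instead scalarize to $I(\lambda)=G(\theta_\lambda)$, use $I(\lambda^*)=0$ plus the fundamental theorem of calculus to express $\dot C/\lambda^*$ as an average of $-I'$ over the interval joining $0$ and $\lambda^*$, and lower-bound $-I'$ uniformly there by monotonicity of each exponential term; termwise this is the same inequality $e^x\geq 1+x$ in integrated form, but organized so that every discarded quantity is manifestly nonnegative. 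This buys you three things. First, generality in $\gamma$: the paper's proof silently assumes $\gamma=0$, since $\Tr[\Sigma(\theta^*)A]=0$ is the equilibrium condition only without weight decay, whereas your condition $I(\lambda^*)=0$ is the correct one for any $\gamma\geq 0$, and the weight-decay terms in $-I'$ are nonnegative and simply dropped. Second, economy: your derivative formula is exactly the one already computed in the proof of part (4) of Theorem~\ref{theor: fixed point theorem}, so no new matrix inequality is invoked. Third, robustness: the paper's write-up asserts that $\Tr[\Sigma(\theta^*)(e^{2\lambda^* A_-}-I)A_-]<0$ when $\lambda^*>0$, but this trace is in fact nonnegative (it is the trace of $\Sigma(\theta^*)$ against a product of two commuting negative-semidefinite matrices), and nonnegativity is precisely what justifies dropping it; your termwise bookkeeping avoids this sign slip altogether. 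Your restriction to finite $\lambda^*$ under the nondegeneracy hypothesis of part (3) matches what the paper's own proof implicitly requires, since $\theta^*$ and $\Sigma(\theta^*)$ are otherwise undefined.
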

\begin{proof}
As in the main text, let $\theta^*$ denote $\theta_{\lambda^*}$, $C=C(\theta)$ and $C^* = C(\theta^*)$. Thus,
\begin{align}
    \frac{dC}{dt}   &= \sigma^2 {\rm Tr}[\Sigma(\theta)A]\\
    &= \sigma^2 {\rm Tr}[\Sigma(\theta^*)e^{2\lambda^* A}A],
\end{align}
where the second equality follows from Lemma~\ref{lemma: covariance covariance}. One can decompose $A$ as a sum of two symmetric matrices
\begin{equation}
    A = \underbrace{Q\Sigma_+ Q^\top}_{:= A_+} + \underbrace{Q\Sigma_- Q^\top}_{:= A_-},
\end{equation}
where $Q$ is an orthogonal matrix, $\Sigma_+$ ($\Sigma_-$) is diagonal and contains only non-negative (non-positive) entries. Note that by the definition of $\lambda^*$, we have $\Tr[\Sigma(\theta^*)A]=0$ and, thus,
\begin{equation}
    \Tr[\Sigma(\theta^*)A_+] = -\Tr[\Sigma(\theta^*)A_-].
\end{equation}

Thus, 
\begin{align}
    \Tr[\Sigma(\theta)A] &= \Tr[\Sigma(\theta^*)e^{2\lambda^* A}A]\\
    &= \Tr[\Sigma(\theta^*)( e^{2\lambda^* A}-I)A]\\
    &= \Tr[\Sigma(\theta^*)(e^{2\lambda^* A_+}-I)A_+] + \Tr[\Sigma(\theta^*)(e^{2\lambda^* A_-}-I)A_-].
\end{align}
Using the inequality $I + A \leq e^A$ (namely, that $e^A - I - A$ is PSD), we obtain a lower bound
\begin{equation}
    \Tr[\Sigma(\theta)A] \geq 2\lambda^*\Tr[\Sigma(\theta^*)  A_+^2] - \Tr[\Sigma(\theta^*)(I - e^{2\lambda^* A_-})A_-] 
\end{equation}
If $\lambda^* > 0 $, $\Tr[\Sigma(\theta^*)(e^{2\lambda^* A_-}-I)A_-] <0$,
\begin{equation}
    \Tr[\Sigma(\theta)A] 
\geq 2\lambda^*\Tr[\Sigma(\theta^*)  A_+^2].
\end{equation}
Likewise, there is an upper bound, which simplifies to the following form if $\lambda^* < 0$:
\begin{equation}
    \Tr[\Sigma(\theta)A] \leq 2\lambda^*\Tr[\Sigma(\theta^*)  A_-^2].
\end{equation}

This finishes the proof.
\end{proof}

\begin{lemma}\label{lemma: lhs lemma}
    For any $\theta$,
    \begin{equation}
        - \frac{C - C^*}{\lambda^*} \leq  \begin{cases}
            2 (\theta^*)^\top A_+^2   \theta^* &\text{if $\lambda^* > 0$;}\\
            2 (\theta^*)^\top A_-^2   \theta^*
            &\text{if $\lambda^* < 0$.}
        \end{cases}
    \end{equation}
\end{lemma}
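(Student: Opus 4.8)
The plan is to mirror the proof of Lemma~\ref{lemma: rhs lemma}, but with the positive semidefinite matrix $\Sigma(\theta^*)$ there replaced by the rank-one form $\theta^*(\theta^*)^\top$: indeed $C-C^*$ is a quadratic form in $\theta^*$ of exactly the same shape as the trace controlling $\dot C$, so the two lemmas are structurally identical. First I would express $C-C^*$ through the fixed point $\theta^*$ alone. Writing $\theta = e^{-\lambda^* A}\theta^*$ and using that $A$ and $e^{\lambda^* A}$ commute and $e^{\lambda^* A}$ is symmetric (the Lie-group facts already used in Lemma~\ref{lemma: covariance covariance}), one obtains
\begin{equation}
    C - C^* = (\theta^*)^\top A\left(e^{-2\lambda^* A} - I\right)\theta^*,
\end{equation}
so that $-(C-C^*) = (\theta^*)^\top A(I - e^{-2\lambda^* A})\theta^*$. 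This is the exact analogue, up to the sign of the exponent, of the quantity $\Tr[\Sigma(\theta^*)(e^{2\lambda^* A}-I)A]$ treated in Lemma~\ref{lemma: rhs lemma}.

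Next I would split $A = A_+ + A_-$ into its nonnegative and nonpositive spectral parts, exactly as in the proof of Lemma~\ref{lemma: rhs lemma}. Since $A_+$, $A_-$ and $I$ are all functions of $A$, they are simultaneously diagonalizable, so the form splits as $-(C-C^*) = (\theta^*)^\top A_+(I - e^{-2\lambda^* A_+})\theta^* + (\theta^*)^\top A_-(I - e^{-2\lambda^* A_-})\theta^*$ and can be analyzed eigenvalue by eigenvalue. For $\lambda^* > 0$ I would bound the kept (positive) block using the elementary convexity inequality $1 - e^{-x}\le x$ for $x\ge 0$: applied with $x = 2\lambda^*\mu_i$ for each eigenvalue $\mu_i > 0$, it yields $A_+(I - e^{-2\lambda^* A_+}) \preceq 2\lambda^* A_+^2$ in the Loewner order, hence $(\theta^*)^\top A_+(I - e^{-2\lambda^* A_+})\theta^* \le 2\lambda^*(\theta^*)^\top A_+^2\theta^*$. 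Dividing by $\lambda^*>0$ then produces precisely the claimed bound $2(\theta^*)^\top A_+^2\theta^*$. The $\lambda^* < 0$ case is symmetric, with the roles of $A_+$ and $A_-$ interchanged and the same convexity estimate applied on the $A_-$ block (now with $1 - e^{x}\le -x$ for $x\le 0$).

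The remaining and crucial step is to discard the complementary block $(\theta^*)^\top A_-(I - e^{-2\lambda^* A_-})\theta^*$ in the correct direction, and this is where I expect the real difficulty to lie. Eigenvalue by eigenvalue its contribution is $\mu_i(1 - e^{-2\lambda^* \mu_i})$ with $\mu_i \le 0$, and the analysis of the monotone map $\mu \mapsto \mu(1 - e^{-2\lambda^*\mu})$ shows this competes in sign with the positive block; unlike Lemma~\ref{lemma: rhs lemma}, where the form is taken against the PSD matrix $\Sigma(\theta^*)$, here it is taken against $\theta^*(\theta^*)^\top$, so one cannot simply invoke positivity of a trace. The main obstacle is therefore to verify that this indefinite block can be removed without breaking the one-sided inequality. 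When $A\succeq 0$ (for instance the scaling symmetry $A=I$) the block is absent and the bound is immediate; in the genuinely indefinite case I would expect to need the ``well-behaved conditions'' on $\Sigma A$ invoked in the informal convergence theorem to control the sign and magnitude of the discarded term, rather than the convexity estimate on the kept block, which is routine.
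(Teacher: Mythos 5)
Your proposal follows the paper's own proof exactly up to the point where you stop: the paper likewise writes $C - C^* = (\theta^*)^\top A\left(e^{-2\lambda^* A} - I\right)\theta^*$, splits $A = A_+ + A_-$ into commuting spectral parts, and controls the $A_+$ block with $I + X \preceq e^X$ (your $1 - e^{-x}\le x$ is the same estimate), arriving at $C - C^* \ge -2\lambda^*(\theta^*)^\top A_+^2\theta^* + (\theta^*)^\top A_-\left(e^{-2\lambda^* A_-} - I\right)\theta^*$. The one remaining step in the paper is precisely the one you refused to take on faith: the paper asserts that for $\lambda^* > 0$ the leftover block satisfies $(\theta^*)^\top A_-\left(e^{-2\lambda^* A_-} - I\right)\theta^* \ge 0$ and discards it. Your hesitation is vindicated, because that assertion is false: for an eigenvalue $\mu<0$ of $A$ and $\lambda^*>0$ one has $e^{-2\lambda^*\mu}>1$, hence $\mu\left(e^{-2\lambda^*\mu}-1\right)<0$, so the block is negative semidefinite and dropping it breaks the lower bound rather than preserving it. Your diagnosis of \emph{why} this lemma differs from Lemma~\ref{lemma: rhs lemma} is the only part that is off: it is not that $\Sigma(\theta^*)$ is PSD while $\theta^*(\theta^*)^\top$ is rank-one (both are PSD, and positivity of a trace against either would serve equally well); it is that gradients and parameters transform contravariantly under the symmetry, $\Sigma(\theta) = e^{\lambda^* A}\Sigma(\theta^*)e^{\lambda^* A}$ but $\theta\theta^\top = e^{-\lambda^* A}\theta^*(\theta^*)^\top e^{-\lambda^* A}$, so in Lemma~\ref{lemma: rhs lemma} the discarded block is $\left(e^{+2\lambda^* A_-}-I\right)A_- \succeq 0$ (legitimately droppable), while here the flipped exponent makes the analogous block come out with the wrong sign.

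Consequently the gap you found is a genuine gap in the paper, not merely in your attempt, and in fact the lemma fails as stated whenever $A_-\theta^* \neq 0$. Concretely, take the rescaling symmetry $A=\diag(1,-1)$ and $\theta^*=(0,b)$ with $b\neq 0$ and $\lambda^*>0$: then $-\left(C-C^*\right)/\lambda^* = b^2\left(e^{2\lambda^*}-1\right)/\lambda^* > 0$ while $2(\theta^*)^\top A_+^2\theta^* = 0$. Nothing in the hypotheses excludes this configuration, since the proof never uses the defining property of $\lambda^*$ (the equilibrium condition $\Tr[\Sigma(\theta^*)A]=0$ constrains $\Sigma$, not the components of $\theta^*$). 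Your proposed repairs are the correct ones: the bound, and your proof of it, are valid whenever $A_-\theta^*=0$ (in particular for $A\succeq 0$, e.g.\ the scaling symmetry), and in the indefinite case one must retain the negative block on the right-hand side, i.e.\ $-\left(C-C^*\right)/\lambda^* \le 2(\theta^*)^\top A_+^2\theta^* + (\theta^*)^\top A_-\left(I-e^{-2\lambda^* A_-}\right)\theta^*/\lambda^*$, whose extra term grows exponentially in $\lambda^*$ and cannot be absorbed into a $\lambda^*$-independent constant. (Your last speculation is the one place you overreach: the ``well-behaved'' conditions of the informal theorem bound ratios $\Tr[\Sigma(\theta^*)A_\pm^2]/(\theta^*)^\top A_\pm^2\theta^*$ and enter only in the later mean-square estimate; they do not rescue this lemma, and the downstream corollary inherits the same issue in the indefinite case.)
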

\begin{proof}
    The proof is conceptually similar to the previous one. By definition, we have
    \begin{align}
        C -C^* &= (\theta^*)^\top e^{-2\lambda^* A} A \theta^* - (\theta^*)^\top  A \theta^*\\
        &= (\theta^*)^\top A (  e^{-2\lambda^*A} - I ) \theta^*\\
        & = (\theta^*)^\top A_+  (  e^{-2\lambda^*A_+} - I ) \theta^* + (\theta^*)^\top A_-  ( e^{-2\lambda^*A_-} - I) \theta^*.
    \end{align}
    By the inequality $I+ A \leq e^A$, we have an upper bound
    \begin{equation}
        C -C^* \geq  -2 \lambda^* (\theta^*)^\top A_+^2  \theta^* + (\theta^*)^\top A_-  (e^{-2\lambda^*A_-} - I) \theta^*.
    \end{equation}
    If $\lambda^* > 0 $, $(\theta^*)^\top A_-  (e^{-2\lambda^*A_-} - I) \theta^* \geq  0$,
    \begin{equation}
        C -C^* \geq  -2 \lambda^* (\theta^*)^\top A_+^2   \theta^*.
    \end{equation}
    Likewise, if $\lambda^* <0$, one can prove a lower bound:
    \begin{equation}
         C -C^* \leq - 2 \lambda^* (\theta^*)^\top A_-^2   \theta^*.
    \end{equation}

\end{proof}

Combining the above two lemmas, one can prove the following corollary.
\begin{corollary}
    \begin{equation}
        \frac{ \dot{C} }{C-C^*} \leq \begin{cases}
            -\sigma^2 \frac{\Tr[\Sigma(\theta^*) A_+^2 ]}{(\theta^*)^\top A_+^2 \theta^*}, & \text{if $C- C^* <0 $};\\
            -\sigma^2 \frac{\Tr[\Sigma(\theta^*) A_-^2] }{(\theta^*)^\top A_-^2 \theta^*}, & \text{if $C-C^* >0 $}.
        \end{cases}
    \end{equation}
\end{corollary}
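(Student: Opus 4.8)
The plan is to treat the desired bound as the quotient of the two preceding lemmas, exploiting the fact that $\lambda^*$ and $C-C^*$ always carry opposite signs. This sign relation is the pivot of the whole argument. Since Part (1) of Theorem~\ref{theor: fixed point theorem} shows $C(\theta_\lambda)$ is strictly increasing in $\lambda$, and since $\theta^*=\theta_{\lambda^*}$ while $\theta=\theta_0$, we have $C-C^* = C(\theta_0)-C(\theta_{\lambda^*})$, which is negative exactly when $\lambda^*>0$ and positive exactly when $\lambda^*<0$. Together with the remarked property that $\lambda^*=0 \iff C-C^*=0$, this justifies pairing the case ``$C-C^*<0$'' with ``$\lambda^*>0$'' and ``$C-C^*>0$'' with ``$\lambda^*<0$,'' so that $\dot{C}/(C-C^*)$ is well defined off the fixed point and I only ever divide by nonzero quantities.

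First I would handle the case $\lambda^*>0$ (equivalently $C-C^*<0$). Here Lemma~\ref{lemma: lhs lemma} reads $-(C-C^*)/\lambda^* \leq 2(\theta^*)^\top A_+^2\theta^*$, which I rearrange, using $\lambda^*>0$, into the lower bound $\lambda^* \geq -(C-C^*)/\big(2(\theta^*)^\top A_+^2\theta^*\big)$. Feeding this into Lemma~\ref{lemma: rhs lemma}, $\dot{C}\geq 2\sigma^2\lambda^*\Tr[\Sigma(\theta^*)A_+^2]$, and using $\Tr[\Sigma(\theta^*)A_+^2]=\Tr[A_+\Sigma(\theta^*)A_+]\geq 0$ (as $\Sigma$ is PSD), I obtain $\dot{C}\geq -\sigma^2(C-C^*)\Tr[\Sigma(\theta^*)A_+^2]/\big((\theta^*)^\top A_+^2\theta^*\big)$. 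Dividing through by $C-C^*<0$ flips the inequality and cancels the $-(C-C^*)$ factor, yielding precisely the claimed bound with $A_+$. The case $\lambda^*<0$ is entirely symmetric with $A_+$ replaced by $A_-$: now Lemma~\ref{lemma: rhs lemma} gives an \emph{upper} bound on $\dot{C}$ (because multiplying $\dot{C}/\lambda^*\geq 2\sigma^2\Tr[\Sigma(\theta^*)A_-^2]$ by the negative $\lambda^*$ reverses the direction), Lemma~\ref{lemma: lhs lemma} gives an \emph{upper} bound on $\lambda^*$, and dividing by $C-C^*>0$ preserves the direction, again reproducing the stated inequality.

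The routine algebra is harmless; the one thing to get right — and the step I expect to be the main obstacle — is the sign bookkeeping. Every multiplication or division is by $\lambda^*$ or by $C-C^*$, whose signs are case-dependent and mutually opposite, so each step either preserves or reverses the inequality, and a single misattributed flip would invert the conclusion. I would therefore make the sign of each factor explicit at the point of use and lean on the nonnegativity of $\Tr[\Sigma(\theta^*)A_\pm^2]$ together with the positivity of $(\theta^*)^\top A_\pm^2\theta^*$ (assumed strictly positive so the denominators are legitimate) to keep the directions under control throughout.
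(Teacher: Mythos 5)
Your proof is correct and is exactly the argument the paper intends: the paper simply asserts the corollary follows by ``combining the above two lemmas,'' and you have filled in that combination with the right sign bookkeeping, pairing $\lambda^*>0$ with $C-C^*<0$ via the monotonicity of $C(\theta_\lambda)$ and dividing through the two lemma bounds accordingly. No gaps; the assumptions you flag (nonzero denominators $(\theta^*)^\top A_\pm^2\theta^*$ and nonnegativity of $\Tr[\Sigma(\theta^*)A_\pm^2]$) are precisely the implicit ones in the paper's statement.
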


Now, one can prove that as long as $C^*$ is not moving too fast, $C$ converges to $C^*$ in mean square.

\begin{lemma}\label{lemma: expected distance}
    Let the dynamics of $C^*$ be a drifted Brownian motion: $dC^* = \mu dt + sdW$, where $W$ is a Brownian motion with variance $s^2$. If there exists $c_0 > 0$ such that $\frac{\Tr[\Sigma(\theta^*) A_+^2 ]}{(\theta^*)^\top A_+^2 \theta^*} \geq c_0$ and $\frac{\Tr[\Sigma(\theta^*) A_-^2 ]}{(\theta^*)^\top A_-^2 \theta^*} > c_0$, 
    \begin{equation}
        \E(C-C^*)^2 \leq \frac{2\mu^2 + s^2}{2\sigma^4 c_0^2}= O(1).
    \end{equation}
\end{lemma}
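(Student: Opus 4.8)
The plan is to reduce the statement to a Gronwall-type differential inequality for the second moment of the deviation $D_t := C(\theta_t) - C^*_t$. The crucial structural fact, established in the derivation of Eq.~\eqref{eqn: ito flow}, is that the Noether charge $C(\theta_t)$ carries \emph{no} diffusion term: because $\nabla^\top C \sqrt{\sigma^2\Sigma}=0$, its evolution is the pure drift $dC = \sigma^2\Tr[\Sigma(\theta_t)A]\,dt$. Hence all the stochasticity in $D_t$ enters through the moving target $C^*_t$, whose law is prescribed by the hypothesis $dC^*_t = \mu\,dt + s\,dW_t$.

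First I would combine the Corollary preceding this lemma (which packages Lemmas~\ref{lemma: rhs lemma} and~\ref{lemma: lhs lemma}) with the uniform lower bounds $\Tr[\Sigma(\theta^*)A_\pm^2]/((\theta^*)^\top A_\pm^2\theta^*)\ge c_0$ to collapse the two sign-dependent cases into a single sign-independent contraction estimate. Multiplying $\dot C/(C-C^*)\le -\sigma^2 c_0$ by $D_t^2\ge 0$ yields $D_t\,\dot C_t \le -\sigma^2 c_0\, D_t^2$. This says the deterministic drift of $C$ always pulls it toward $C^*$ at a rate proportional to the current squared distance, which is exactly the dissipation needed to counteract the diffusion of the target.

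Next I would apply Ito's lemma to $D_t^2$. Writing $dD_t = (\dot C_t - \mu)\,dt - s\,dW_t$, the quadratic variation contributes the constant $s^2\,dt$, so $d(D_t^2) = \big(2D_t(\dot C_t-\mu)+s^2\big)\,dt - 2sD_t\,dW_t$. Taking expectations annihilates the martingale term and, using the contraction bound, gives $\frac{d}{dt}\E[D_t^2]\le -2\sigma^2 c_0\,\E[D_t^2] + 2|\mu|\,\E[|D_t|] + s^2$. The mixed term $2|\mu|\E[|D_t|]$ is then absorbed by Young's inequality $2|\mu||D_t|\le \sigma^2 c_0 D_t^2 + \mu^2/(\sigma^2 c_0)$, leaving a linear inequality $\frac{d}{dt}\E[D_t^2]\le -\sigma^2 c_0\,\E[D_t^2] + \mu^2/(\sigma^2 c_0) + s^2$. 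Integrating this via Gronwall shows $\E[D_t^2]$ relaxes to a steady state bounded by an $O(1)$ constant; matching the precise numerical prefactor $(2\mu^2+s^2)/(2\sigma^4 c_0^2)$ is just a matter of the split chosen in Young's inequality.

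The main obstacle, conceptually, is recognizing that $C$ diffuses only through $C^*$ and that the sign-dependent two-case contraction of the Corollary can be compressed into one dissipative quadratic bound; once that is in hand, the rest is a standard second-moment estimate. A secondary technical point to verify is that $\E[|D_t|]$ and $\E[D_t^2]$ stay finite for every $t$, so that the expectation of the stochastic integral genuinely vanishes and the manipulations are legitimate; this can be justified by a localization (stopping-time) argument before passing to the limit.
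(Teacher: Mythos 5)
Your proof is correct, and it takes a genuinely different route from the paper's. Both arguments start from the same key ingredient, the contraction estimate $\dot C/(C-C^*) \le -\sigma^2 c_0$ packaged in the corollary, but the paper keeps the sign dichotomy: it treats $C-C^*>0$ and $C-C^*<0$ as separate cases, multiplies by the integrating factor $e^{\sigma^2 c_0 t}$, writes down the pathwise solution of the resulting SDE inequality, and bounds $\E[Z^2]$ via Ito's isometry. You instead compress the two cases into the single sign-independent dissipation $D_t\dot C_t \le -\sigma^2 c_0 D_t^2$, apply Ito's lemma to $D_t^2$ (correctly exploiting that $C$ itself carries no diffusion term, so all noise enters through $C^*$), absorb the cross term with Young's inequality, and close with Gronwall. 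Your route buys a real improvement in rigor: because your differential inequality holds uniformly in the sign of $D_t$, you avoid the delicate point in the paper's proof of integrating a sign-conditional inequality along paths that may cross the level $C=C^*$, and your localization remark supplies the martingale-expectation step that the paper leaves implicit. The paper's route, in exchange, gives pathwise (not just second-moment) control. One honest caveat on constants: your Young split $\epsilon=\sigma^2 c_0$ yields the steady-state bound $\mu^2/(\sigma^4 c_0^2)+s^2/(\sigma^2 c_0)$, and no choice of split reproduces the displayed prefactor $(2\mu^2+s^2)/(2\sigma^4 c_0^2)$ exactly; but the paper's own computation in fact produces $\mu^2/(\sigma^4 c_0^2)+s^2/(2\sigma^2 c_0)$ before an algebra slip in its last line, so neither derivation matches the stated constant literally, and in both cases the substance of the lemma --- an $O(1)$ bound on $\E[(C-C^*)^2]$ --- is intact. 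Finally, note that both you and the paper drop the initial-condition transient (your Gronwall bound carries an extra $e^{-\sigma^2 c_0 t}\E[D_0^2]$ term, the paper drops $Z_0$ when integrating), so the inequality should be read asymptotically or under $C_0=C_0^*$.
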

\begin{proof}
    By assumption, 
    \begin{equation}
        \frac{ \dot{C} }{C-C^*} \leq -\sigma^2c_0. 
    \end{equation}
    Let us first focus on the case when $C-C^* >0$.
    By the definition of $C^*$ and Ito's lemma, 
    \begin{equation}
        d(C-C^*) \leq  - \sigma^2 c_0 (C-C^*)dt - \mu dt - sdW.
    \end{equation}
    Let $Z = e^{\sigma^2 c_0 t}(C-C^*) $, we obtain that 
    \begin{align}
        dZ &=  e^{\sigma^2 c_0 t} d(C-C^*) + \sigma^2 c_0 e^{\sigma^2 c_0 t}(C-C^*) dt\\
        &\leq   -\mu e^{\sigma^2 c_0 t} dt  - s e^{\sigma^2 c_0 t}dW.
    \end{align}
    Its solution is given by 
    \begin{equation}
        Z \leq -\frac{\mu e^{\sigma^2 c_0 t}}{\sigma^2 c_0} - s \int e^{\sigma^2 c_0 t}dW.
    \end{equation}
    Alternatively, if $C-C^* <0$, we let $Z = e^{\sigma^2 c_0 t}(C^*- C)$, and obtain
    \begin{equation}
        Z \leq \frac{\mu e^{\sigma^2 c_0 t}}{\sigma^2 c_0} + s \int e^{\sigma^2 c_0 t}dW.
    \end{equation}
    Thus, 
    \begin{align}
        \E[Z^2] &\leq \frac{\mu^2 e^{2\sigma^2 c_0 t}}{\sigma^4 c_0^2} + s^2 \int e^{2\sigma^2 c_0 t}dt\\
        &=\frac{\mu^2 e^{2\sigma^2 c_0 t}}{\sigma^4 c_0^2} + \frac{s^2 e^{2\sigma^2 c_0 t}}{2\sigma^2 c_0}.
    \end{align}
    where we have used Ito's isometry in the first line. By construction,
    \begin{equation}
        \E[(C-C^*)^2] \leq \frac{2\mu^2 + s^2}{2\sigma^4 c_0^2}.
    \end{equation}
    The proof is complete.
\end{proof}

Let $\to_p$ denote convergence in probability. One can now prove the following theorem, the convergence of the relative distance to zero in probability.
\begin{theorem}
    Let the assumptions be the same as Lemma.~\eqref{lemma: expected distance}. Then, if $s = \mu =0$, $\E[(C-C^*)^2]  \to 0$. Otherwise, 
    \begin{equation}
        \frac{(C-C^*)^2}{(C^*)^2} \to_p 0.
    \end{equation}
\end{theorem}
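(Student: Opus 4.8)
The plan is to split the proof along the two regimes of the statement, since they call for genuinely different tools: the degenerate case $s=\mu=0$ reduces to a deterministic Gr\"onwall argument, while the generic case is a ratio statement that plays the boundedness of the numerator against the diffusive growth of the denominator.

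In the case $s=\mu=0$, the process $C^*$ is constant in time, so the corollary immediately preceding the theorem applies directly as a pathwise bound. First I would rewrite $\dot C/(C-C^*)\le-\sigma^2 c_0$ by multiplying through by $2(C-C^*)$ and checking both signs of $C-C^*$; this gives the pathwise differential inequality $\frac{d}{dt}(C-C^*)^2\le-2\sigma^2 c_0\,(C-C^*)^2$ along every realization. Gr\"onwall then yields $(C-C^*)^2\le(C_0-C^*)^2 e^{-2\sigma^2 c_0 t}$, and taking expectations (the initial distance having finite second moment) and letting $t\to\infty$ gives $\E[(C-C^*)^2]\to0$.

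For the generic case the two ingredients are (i) the uniform-in-time bound $\E[(C-C^*)^2]\le M:=(2\mu^2+s^2)/(2\sigma^4 c_0^2)$ supplied by Lemma~\ref{lemma: expected distance}, and (ii) the divergence $(C^*)^2\to\infty$ in probability. Granting these, the assembly is a routine Markov-plus-union-bound step: fixing $\epsilon,\delta>0$, I would pick $K$ with $M/K<\delta/2$, so that $P((C-C^*)^2>K)\le M/K<\delta/2$ by Markov's inequality, and observe that on the event $\{(C-C^*)^2\le K\}\cap\{(C^*)^2\ge K/\epsilon\}$ the ratio is at most $\epsilon$; the leftover probability $P((C^*)^2<K/\epsilon)$ vanishes as $t\to\infty$ by (ii), which yields $P\big((C-C^*)^2/(C^*)^2>\epsilon\big)<\delta$ for all large $t$.

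The main obstacle is establishing (ii) when $\mu=0$ but $s\neq0$. Here $C^*_t=C^*_0+sW_t$ is a recurrent one-dimensional Brownian motion, so $(C^*)^2$ does \emph{not} diverge almost surely---it returns arbitrarily close to zero infinitely often---which is precisely why the conclusion can only be stated in probability rather than almost surely. The remedy is a Gaussian anti-concentration estimate: since $C^*_t\sim N(C^*_0,s^2 t)$ has density bounded by $(2\pi s^2 t)^{-1/2}$, for each fixed $L$ one gets $P((C^*)^2<L)=P(|C^*_t|<\sqrt L)\le 2\sqrt L/\sqrt{2\pi s^2 t}\to0$. When $\mu\neq0$ the linear drift instead forces $|C^*_t|\to\infty$ almost surely, so (ii) is immediate. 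Combining these two subcases with the assembly step above completes the proof.
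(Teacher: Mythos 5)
Your proof is correct, and at the top level it follows the same strategy as the paper's: the uniform second-moment bound of Lemma~\ref{lemma: expected distance}, Markov's inequality for the numerator, divergence of the denominator, and an intersection (Fr\'echet/union) bound to combine the two. But your execution differs in a way that matters. The paper works with time-growing thresholds: it shows $\Pr(|C-C^*|>t^{1/4})\to 0$ and asserts $\Pr(|C^*|>\sqrt{t})\to 1$ from the Gaussianity of $C^*$. The latter holds only when $\mu\neq 0$; in the driftless case $\mu=0$, $s\neq 0$, the ratio $|C^*_t|/\sqrt{t}$ converges in distribution to $s|Z|$ for a standard normal $Z$, so $\Pr(|C^*|>\sqrt{t})\to\Pr(s|Z|>1)<1$ and the growing-threshold argument does not close. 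Your fixed-threshold assembly needs only $\Pr\bigl((C^*)^2<L\bigr)\to 0$ for each fixed $L$, and your anti-concentration estimate $\Pr(|C^*_t|<\sqrt{L})\le 2\sqrt{L}/\sqrt{2\pi s^2 t}$ supplies exactly that, so your argument covers the whole ``otherwise'' regime uniformly, which is what the fixed thresholds buy you. (Incidentally, the paper's final display bounds $\Pr\bigl(|C-C^*|/|C^*|>t^{-1/4}\bigr)$ from \emph{below} by a vanishing quantity, which as written proves nothing; the Fr\'echet inequality should be applied to the complementary events, which is precisely what your union-bound phrasing does.) For $s=\mu=0$, your pathwise Gr\"onwall argument is essentially the integrating-factor computation inside the proof of Lemma~\ref{lemma: expected distance} specialized to $\mu=s=0$, where the only surviving term is the exponentially decaying transient in the initial condition; the two treatments agree there.
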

\begin{proof}
    By Lemma~\ref{lemma: expected distance} and Markov's inequality:
    \begin{equation}
        \Pr(|C(t)-C^*(t)| > {t^{1/4}})  \to 0.
    \end{equation}
    Now, consider the distribution of $(C^*)^2$. Because $C^*$ is a Gaussian variable with mean $\mu t$ and variance $s^2t$, we have that 
    \begin{equation}
        \Pr(|C^*| > \sqrt{t}) \to 1.
    \end{equation}
    Now, 
    \begin{align}
        \Pr(|C(t)-C^*(t)|/|C^*| > t^{-1/4}) &\geq  \Pr(|C(t)-C^*(t)| > t^{1/4}\& |C^*| < \sqrt{t})\\
        &\geq \max\left(0,  \Pr(|C(t)-C^*(t)| > t^{1/4})  + \Pr ( |C^*| < \sqrt{t}) -1 \right)\\
        &\to  0,
    \end{align}
    where we have used the Frechet inequality in the second line. This finishes the proof.
\end{proof}

\subsection{Proof of Proposition~\ref{prop: sgd dynamics for mf}}

\begin{proof}
    Recall that $U=(\tilde{u}_1,\cdots,\tilde{u}_{d_2})^\top\in\bbR^{d_2\times d}$, $W=(\tilde{w}_1,\cdots,\tilde{w}_{d_0})\in\bbR^{d\times d_0}$, where $\tilde{u}_i,\tilde{w}_i\in\bbR^d$.
$\theta=\text{vec}(U,W)=(\tilde{u}_1^\top,\cdots,\tilde{u}_{d_2}^\top,\tilde{w}_1^\top,\cdots,\tilde{w}_{d_0}^\top)^\top\in\bbR^{(d_2+d_0)d}$.

\begin{align*}              \dot{C}=\eta\Tr(\Sigma(\theta)\nabla_{\theta\theta}^2 C),
\end{align*}

For $\nabla_{\theta\theta}^2 C$, it holds that
\begin{align*}
    \nabla^2_{\tilde{u}_i,\tilde{u}_j} C=\begin{cases}
        B,\ &i=j;
        \\
        0,\ &\text{otherwise}.
    \end{cases},
    \quad
    \nabla^2_{\tilde{w}_i,\tilde{w}_j} C=\begin{cases}
        -B,\ &i=j;
        \\
        0,\ &\text{otherwise}.
    \end{cases} 
    \quad
    \nabla^2_{\tilde{u}_i,\tilde{w}_j} C=0.
\end{align*}

Therefore, 
\begin{align*}
    &\Tr[\Sigma(\theta)\nabla_{\theta\theta}^2 C]=\sum_{i=1}^n\Tr\left[ \nabla_{\theta}\ell_i\nabla_{\theta}\ell_i^\top\nabla_{\theta\theta}^2 C\right]=\sum_{i=1}^n\left(\sum_{k=1}^{d_2}\Tr\left[ \nabla_{\tilde{u}_k}\ell_i\nabla_{\tilde{u}_k}\ell_i^\top B\right]-\sum_{l=1}^{d_0}\Tr\left[ \nabla_{\tilde{w}_l}\ell_i\nabla_{\tilde{w}_l}\ell_i^\top B\right]\right)
    \\=&
    \sum_{k=1}^{d_2}\Tr[\Sigma(\tilde{u}_k)B]-\sum_{l=1}^{d_0}\Tr[\Sigma(\tilde{w}_l)B].
\end{align*}
The proof is complete.
\end{proof}

\subsection{Proofs of Proposition~\ref{prop: mf sharpness}}
\begin{proof}
The loss function is 
\begin{align*}
    \ell=\|UWx-y\|^2+\gamma(\|U\|_F^2+\|W\|_F^2).
\end{align*}

Let us adopt the following notation: $U=(\tilde{u}_1,\cdots,\tilde{u}_{d_y})^\top\in\bbR^{d_y\times d}$, $W=(\tilde{w}_1,\cdots,\tilde{w}_{d_x})\in\bbR^{d\times d_x}$, where $\tilde{u}_i,\tilde{w}_i\in\bbR^d$.
$\theta=\text{vec}(U,W)=(\tilde{u}_1^\top,\cdots,\tilde{u}_{d_y}^\top,\tilde{w}_1^\top,\cdots,\tilde{w}_{d_x}^\top)^\top\in\bbR^{(d_x+d_y)d}$.

Due to
\begin{gather*}
    \nabla_{\tilde{u}_i}\ell=Wx(\tilde{u}_i^\top Wx-y_i)+2\gamma \tilde{u}_i,\quad\forall i\in[d_y];
    \\
    \nabla_{\tilde{w}_j}\ell=\sum_{i=1}^{d_y}\tilde{u}_i x_j\left(\tilde{u}_i^\top Wx-y_i\right)+2\gamma \tilde{w}_j,\quad\forall j\in[d_x];
\end{gather*}

the diagonal blocks of the Hessian $\nabla_{\theta\theta}^2\ell$ have the following form:
\begin{gather*}
    \nabla_{\tilde{u}_i,\tilde{u}_i}^2\ell=Wxx^\top W^\top+2\gamma I,\quad\forall i\in[d_y];
    \\
    \nabla_{\tilde{w}_j,\tilde{w}_j}^2\ell=x_j^2\sum_{i=1}^{d_y}\tilde{u}_i \tilde{u}_i^\top+2\gamma I,\quad\forall j\in[d_x].
\end{gather*}

The trace of the Hessian is a good metric of the local stability of the GD and SGD algorithm because the trace upper bounds the largest Hessian eigenvalue. For this loss function, the trace of the Hessian of the empirical risk is 
\begin{align*}
    &S(U,W):=\Tr[\nabla_{\theta\theta}^2\ell-2\gamma I]
    \\=&
    \sum_{i=1}^{d_y}\Tr[Wxx^\top W^\top]+\sum_{j=1}^{d_x}\Tr[x_j^2\sum_{i=1}^{d_y}\tilde{u}_i \tilde{u}_i^\top]
    \\=&
    d_y\Tr[W\Sigma_{x} W^\top]+\|U\|_F^2\Tr[\Sigma_x]
    =d_y\|W\Sigma_{x}^{1/2}\|_F^2+\|U\|_F^2\Tr[\Sigma_x],
\end{align*}
where $\Sigma_x=xx^\top$.
\end{proof}

\section{Proof of Theorem~\ref{theo: fixed point of standard mf}}

\begin{proof}
First, we split $U$ and $W$ like
$U=({u}_1,\cdots,{u}_{d})\in\bbR^{d_y\times d}$ and $W=({w}_1^\top,\cdots,{w}_{d}^\top)^\top\in\bbR^{d\times d_x}$.
The quantity under consideration is $C_B=\Tr[UBU^\top]-\Tr[W^\top BW]$ for an arbitrary symmetric matrix $B$. What will be relevant to us is the type of $B$ that is indexed by two indices $k$ and $l$ such that
\begin{equation}
\begin{cases}
    B^{(k,l)}_{ij} = B^{(k,l)}_{ji} = 1 &\text{if $i=k$ and $j=l$ or $i=l$ and $j=k$};\\
    B^{(k,l)}_{ij} = 0 & \text{otherwise}.
\end{cases}
\end{equation}

Specifically, for $k, l\in[d]$, we select $B^{(k, l)}_{i,j}=\delta_{i,k}\delta_{j,l} + \delta_{i,l} \delta_{j,k}$ in $C_B$. With this choice, for an arbitrary pair of $k$ and $l$,
\begin{align*}
    C_{B^{(k,l)}}={u}_k^\top u_l-{w}_k^\top w_l.
\end{align*}
and 
\begin{equation}
    W^\top B^{(k,l)} W = w_k w_l^\top + w_l w_k^\top,
\end{equation}
\begin{equation}
    U B^{(k,l)} U^\top = u_k u_l^\top + u_l u_k^\top.
\end{equation}
Therefore,
\begin{align}
    \E\sum_{i=1}^{d_y}\Tr\left[\Sigma(\tilde{u}_i)B^{(k)}\right]
    &=\E\sum_{i=1}^{d_y}(\tilde{u}_i^\top W x- y_i)^2\Tr\left[W x x^\top W^\top B^{(k)}\right]
    \\
    &= \E\left[\|r\|^2 \Tr[W x x^\top W^\top B^{(k,l)}]\right]\\
    &= \Tr\left[\E[\|r\|^2 x x^\top] W^\top B^{(k)} W\right]\\
    &= \Tr[\Sigma_W' (w_k w_l^\top + w_l w_k^\top)] \\
    &= 2 w_k^\top \Sigma_w' w_l
\end{align}
where we have defined $r_i = \tilde{u}_i^\top W x- y_i$ and $\Sigma_W' = \E[\|r\|^2 x x^\top]$.

Likewise, we have that
\begin{align*}
    \E \sum_{j=1}^{d_x}\Tr\left[\Sigma(\tilde{w}_j)B^{(k)}\right]
    &=\E \sum_{j=1}^{d_x}x_j^2\Tr\left[\left(\sum_{i=1}^{d_y}\tilde{u}_i(\tilde{u}_i^\top W x- y_i)\right)\left(\sum_{i=1}^{d_y}(\tilde{u}_i^\top W x- y_i)\tilde{u}_i^\top\right) B^{(k)}\right]\\
    &= \Tr\left[\E[ \|x\|^2 U^\top r r^\top U B^{(k,l)}  ]\right] \\
    &= \Tr[\Sigma_U' U B^{(k,l)}U^\top]\\
    &= 2 u_k^\top \Sigma_u' u_l.
\end{align*}
where we have defined $\Sigma_u' = \E[\|x\|^2  r r^\top]$. Therefore, we have found that for arbitrary pair of $k$ and $l$
\begin{equation}
    \dot{C}_{B^{(k,l)}} = - 2\gamma (u_k^\top u_l-w_k^\top w_l) + 2\eta (w_k^\top \Sigma_w' w_l - u_k^\top \Sigma_u' u_l).
\end{equation}
The fixed point of this dynamics is:
\begin{equation}
    w_k^\top \Sigma_w w_l = u_k^\top \Sigma_u u_l.
\end{equation}
where $\Sigma_w  = \eta \Sigma_w' +\gamma I$ and $\Sigma_U = \eta \Sigma_u' + \gamma I$. Because this holds for arbitrary $k$ and $l$, the equation can be written in a matrix form:
\begin{equation}
    W \Sigma_w W^\top = U^\top \Sigma_u U.
\end{equation}
Let $V=UW$. To show that a solution exists for an arbitrary $V$. Let $W' = W\sqrt{\Sigma_w}$ and $U'=\sqrt{\Sigma_u}U$, which implies that 
\begin{equation}
    U'W' = \sqrt{\Sigma_u} V \sqrt{\Sigma_w} := V',
\end{equation}
and 
\begin{equation}
    W'(W')^\top = (U')^\top U'.
\end{equation}
Namely, $U'$ and $(W')^\top$ must have the same right singular vectors and singular values. This gives us the following solution. Let $V'= LSR$ be the singular value decomposition of $V'$, where $L$ and $R$ are orthogonal matrices an $S$ is a positive diagonal matrix. Then, for an arbitrary orthogonal matrix $F$, the following choice of $U'$ and $W'$ satisfies the two conditions:
\begin{equation}
    \begin{cases}
        U' =  L \sqrt{S} F;\\
        W' = F^\top \sqrt{S} R.
    \end{cases}
\end{equation}
This finishes the proof.
\end{proof}

\section{Discrete-Time GD and SGD}\label{app sec: discrete time sgd}
In fact, our results hold in a similar form for discrete-time GD \textit{and} SGD. Let us focus on the exponential symmetries with the symmetric matrix $A$.

The following equation holds with probability $1$:
\begin{equation}
    0  = \nabla_\theta \ell (\theta, z) \cdot A \theta.
\end{equation}
For discrete-time SGD, it is notationally simpler and without loss of generality to regard $\ell(\theta)$ as the minibatch-averaged loss, which is the notation we adopt here. This is because if a symmetry holds for every per-sample loss, then it must also hold for every empirical average of these per-sample losses.

The dynamics of SGD gives
\begin{equation}
    \Delta \theta_t = -\eta \nabla_\theta \ell (\theta_t, z).
\end{equation}
This means that 
\begin{equation}
    \Delta \theta_t \cdot J(\theta) = 0.
\end{equation}

Therefore, we have that 
\begin{align}
    \Delta C_t &= 2\Delta \theta_t^\top A \theta_t  + \Delta \theta_t^\top A \Delta\theta_t\\
    &=\Delta \theta_t^\top A \Delta\theta_t.
\end{align}
Therefore,
\begin{equation}\label{eq: noether flow in discrete-time}
    \Delta C_t = \eta^2 \Tr[\tilde{\Sigma}_d(\theta) A], 
\end{equation}
where $\tilde{\Sigma}_d(\theta) = \nabla_\theta \ell(\theta) \nabla_\theta^\top \ell(\theta)$ is by definition PSD. Already, note the similarity between Eq.~\eqref{eq: noether flow in discrete-time} and its continuous-time version. The qualitative discussions carry over: if $A$ is PSD, $C_t$ increases monotonically. 

Now, while the first-order terms in $\eta$ also vanish in the r.h.s, the problem is that the r.h.s. becomes stochastic because $\Sigma_d(\theta_t)$ is different for every time step. However, one can still analyze the expected flow and show that the expected flow (over the sampling of minibatches) is zero at a unique point in a way similar to the continuous-time limit of the problem. Therefore, we define
\begin{equation}
    G_d(\theta_t) = \E_z[\Delta C_t],
\end{equation}
\begin{equation}
    \Sigma_d(\theta_t) = \E_z[\tilde{\Sigma}_d].
\end{equation}
We can now prove the following theorem. Note that this theorem applies for any batch size, and so it applies to both SGD and GD.

\begin{theorem}
    (Discrete-time fixed point theorem of SGD.) Let the per-sample loss satisfy the $A$ exponential symmetry and $\theta_\lambda := \exp[\lambda A] \theta$. 
    Then, for any $\theta$ and any $\gamma\in \mathbb{R}$, 
    \begin{enumerate}[noitemsep,topsep=1pt, parsep=0pt,partopsep=0pt]
        \item[(1)] $G_d(\theta_\lambda)$ is a monotonically decreasing function of $\lambda$;
        \item[(2)] there exists a $\lambda^* \in \mathbb{R}\cup \{\pm \infty\}$ such that $G_d(\theta_\lambda)= 0$;
        \item[(3)]  in addition, if $\Tr[\Sigma_d(\theta) A] \neq 0$ or $\Tr[\theta\theta^{\top}A] \neq 0$, $\lambda^*$ is unique and $G_d(\theta_\lambda)$ is strictly monotonic;
        \item[(4)] in addition to (3), if $\Sigma_d(\theta)$ is differentiable, $\lambda^*(\theta)$ is a differentiable function of $\theta$.
    \end{enumerate}
\end{theorem}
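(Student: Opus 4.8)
The plan is to mirror the continuous-time argument for Theorem~\ref{theor: fixed point theorem} almost verbatim, because the expected discrete increment $G_d$ turns out to have the same functional dependence on $\lambda$ as its continuous counterpart $G$. First I would establish the discrete analogue of Lemma~\ref{lemma: covariance covariance}, namely $\Tr[\Sigma_d(\theta_\lambda) A] = \Tr[e^{-2\lambda A}\Sigma_d(\theta) A]$. The proof is identical in spirit: differentiating the symmetry identity $\ell(\theta)=\ell(e^{\lambda A}\theta)$ gives the per-sample relation $\nabla_{\theta_\lambda}\ell(\theta_\lambda) = e^{-\lambda A}\nabla_\theta\ell(\theta)$, so the uncentered second moment transforms as $\Sigma_d(\theta_\lambda) = e^{-\lambda A}\Sigma_d(\theta)e^{-\lambda A}$; commuting $e^{-\lambda A}$ with $A$ yields the claim. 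The only new point to check is that dropping the mean subtraction (so that $\Sigma_d$ is the uncentered $\E_z[\nabla\ell\nabla\ell^\top]$ rather than the centered $\Sigma$) is harmless, which it is, since the transformation law uses only the per-sample gradient identity and not the subtracted mean.

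Next I would write $G_d(\theta_\lambda)$ explicitly in the eigenbasis of $A$. Recall from the discrete-time derivation that the first-order-in-$\eta$ contribution to $\Delta C$ cancels through the symmetry relation $\nabla\ell^\top A\theta=0$, leaving the weight-decay drift (of order $\eta\gamma\,C(\theta_\lambda)$) and the noise term $\eta^2\Tr[\Sigma_d(\theta_\lambda)A]$. Writing $A=\sum_i\mu_i n_i n_i^\top$, $\tilde\theta_i = n_i^\top\theta$, and $\sigma_{d,i}^2 = n_i^\top\Sigma_d(\theta) n_i \geq 0$, and invoking the lemma, $G_d(\theta_\lambda)$ becomes a sum over $i$ of terms proportional to $\mu_i e^{2\lambda\mu_i}\tilde\theta_i^2$ (from weight decay) and $\mu_i e^{-2\lambda\mu_i}\sigma_{d,i}^2$ (from noise). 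Up to constant rescalings of the two coefficients (essentially $\sigma^2\mapsto\eta^2$ for the noise term and a corresponding rescaling of $\gamma$), this is exactly the expression $I(\lambda)=I_1(\lambda)-I_2(\lambda)$ that appears in the proof of Theorem~\ref{theor: fixed point theorem}.

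From here all four parts follow by the same reasoning. For (1), splitting the sum by the sign of $\mu_i$ shows every summand is a monotonically decreasing function of $\lambda$ (a positive coefficient times a decaying exponential, or a negative coefficient times a growing one), so $G_d$ is decreasing. For (2) and (3), I would collect the terms that decay to zero into $I_1$ and those that grow from zero into $I_2$; under the hypothesis $\Tr[\Sigma_d(\theta)A]\neq 0$ or $\Tr[\theta\theta^\top A]\neq 0$ at least one of $I_1,I_2$ is strictly monotonic, so $I_1-I_2$ is strictly decreasing with limits $+\infty$ and $-\infty$ and therefore has a unique finite zero $\lambda^*$; when one of $I_1,I_2$ vanishes identically the root is pushed to $\pm\infty$, still giving existence. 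For (4), strict monotonicity forces $\partial G_d/\partial\lambda\neq 0$ at $\lambda^*$, and $\partial G_d/\partial\theta$ is continuous whenever $\Sigma_d$ is differentiable, so the Implicit Function Theorem yields differentiability of $\lambda^*(\theta)$.

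The main obstacle, such as it is, is not any single step but making the reduction to the continuous template airtight. Specifically, I would need to confirm that the expected increment $G_d$ genuinely collapses to the stated two-term form, i.e.\ that the higher-order cross terms between the weight-decay contribution $2\gamma\theta$ and the loss gradient inside $\Delta\theta^\top A\Delta\theta$ either vanish by the symmetry or are consistently absorbed into the definition of $G_d$, and that $\Sigma_d$, being an uncentered second moment, still inherits positive semidefiniteness and the transformation law above. Once $G_d(\theta_\lambda)$ is shown to match the continuous form as a function of $\lambda$, parts (1)--(4) are a direct transcription of the earlier proof with $\sigma^2$ replaced by $\eta^2$ and $\Sigma$ by $\Sigma_d$.
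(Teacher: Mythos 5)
Your proposal follows essentially the same route as the paper's own proof: the per-sample gradient identity $\nabla_{\theta_\lambda}\ell(\theta_\lambda)=e^{-\lambda A}\nabla_\theta\ell(\theta)$ gives the transformation law $\Sigma_d(\theta_\lambda)=e^{-\lambda A}\Sigma_d(\theta)e^{-\lambda A}$ (the uncentered moment causing no difficulty, exactly as you note), after which $G_d(\theta_\lambda)$ is expanded in the eigenbasis of $A$ and the continuous-time argument --- the $I_1-I_2$ decomposition for parts (1)--(3) and the Implicit Function Theorem for part (4) --- is transcribed verbatim with $\Sigma$ replaced by $\Sigma_d$. The ``obstacle'' you flag also resolves the way you expect: the cross terms between the weight-decay drift and the loss gradient inside $\Delta\theta^\top A\,\Delta\theta$ are proportional to $\nabla\ell^\top A\theta$ and hence vanish by the symmetry, leaving only a benign $O(\eta^2\gamma^2)$ rescaling of the decay coefficient, so the two-term form of $G_d$ holds and the proof goes through.
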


\begin{proof}
Similarly, let us establish the relationship between $\nabla\ell(\theta)$ and $\nabla \ell(\exp(\lambda A))$. By the definition of the exponential symmetry, we have that for an arbitrary $\lambda$,
\begin{equation}
    \ell(\theta) = \ell(e^{\lambda A}\theta).
\end{equation}
Taking the derivative of both sides, we obtain that 
\begin{equation}
    \nabla_\theta \ell(\theta) = e^{\lambda A} \nabla_{\theta_\lambda}\ell(\theta_\lambda),
\end{equation}
The standard result of Lie groups shows that $e^{\lambda A}$ is full-rank and symmetric, and its inverse is $e^{-\lambda A}$. Therefore, we have
\begin{equation}
    e^{-\lambda A } \nabla_\theta \ell(\theta) = \nabla_{\theta_\lambda}\ell(\theta_\lambda).
\end{equation}

Now, we apply this relation to the trace of interest. By definition, 
\begin{align}
    \Sigma_d(\theta_\lambda) &= \E[\nabla_{\theta_\lambda} \ell(\theta_\lambda) \nabla_{\theta_\lambda}^\top \ell(\theta_\lambda)]\\
    &= e^{-\lambda A} \Sigma_d(\theta) e^{-\lambda A}.
\end{align}
Because $e^{\lambda A}$ is a function of $A$, it commutes with $A$. Therefore, 
\begin{align}
    \Tr[\Sigma_d(\theta_\lambda)A] &= \Tr[e^{-\lambda A}\Sigma_d(\theta)e^{-\lambda A}A]\\
    &= \Tr[e^{-2\lambda A}\Sigma_d(\theta)A].
\end{align}

Similarly, the regularization term is 
\begin{equation}
    \gamma \theta_\lambda^\top A \theta_\lambda = \gamma \Tr[\theta \theta^\top A e^{2\lambda A}]
\end{equation}

Now, if $\Tr[\Sigma_d(\theta) A]= \theta^\top A \theta =0$, we have already proved item (2) of the theorem. Therefore, let us consider the case when either (or both) $\Tr[\Sigma_d(\theta) A]\neq 0 $ or $\theta^\top A \theta \neq 0$

Without loss of generality, we assume $\gamma\geq 0$, and the case of $\gamma<0$ follows an analogous proof.
In such a case, we can write the trace in terms of the eigenvectors $n_i$ of $A$:
\begin{align*}
    -\gamma \theta_\lambda^\top A \theta_\lambda + \eta \Tr[\Sigma_d(\theta_\lambda)A] &= \underbrace{\eta\sum_{\mu_i>0} e^{-2\lambda |\mu_i|} |\mu_i| \sigma_i^2 + \gamma \sum_{\mu_i<0} e^{-2\lambda |\mu_i|} |\mu_i| \tilde{\theta}_i^2}_{I_1(\lambda)}  -  \underbrace{\left(\eta \sum_{\mu_i<0} e^{2\lambda |\mu_i|} |\mu_i| \sigma_i^2 + \gamma \sum_{\mu_i>0} e^{2\lambda |\mu_i|} |\mu_i| \tilde{\theta}_i^2\right)}_{I_2(\lambda)}\\ 
    &=: I(\lambda),
\end{align*}
where $\mu_i$ is the $i$-th eigenvalue of $A$, $\tilde{\theta}_i = (n_i^\top \theta_i)^2$, $\sigma_i^2 = n_i^\top \Sigma_d(\theta) n_i \geq 0$ is the norm of the projection of $\Sigma_d$ in this direction.

By definition, $I_1$ is either a zero function or strictly monotonically increasing function with $I_1(-\infty)=+\infty, I_1(+\infty)=0$
Likewise, $I_2$ is either a zero function or a strictly monotonically increasing function with $I_2(-\infty)=0, I_2(+\infty)=+\infty$. By the assumption $\Tr(\Sigma_d(\theta)A) \neq 0$ or $\Tr(\theta\theta^\top A) \neq 0$, we have that at least one of $I_1$ and $I_2$ must be a strictly monotonic function. 
\begin{itemize}
\item If $I_1$ or $I_2$ is zero, we can take $\lambda$ to be either $+\infty$ or $-\infty$ to satisfy the condition. 
\item If both $I_1$ and $I_2$ are nonzero, then $I=I_1-I_2$ is a strictly monotonically decreasing function with $I(-\infty)=+\infty$ and $I(+\infty)=-\infty$. Therefore,
there must exist only a unique $\lambda^*\in\RR$ such that $I(\lambda^*)=0$. 
\end{itemize}

The proof of (4) follows from the Implicit Function Theorem, as in the continuous-time case.
\end{proof}

The final question is this: what does it mean for $\theta$ to reach a point where $G_d(\theta)=0$? An educated guess can be made: the fluctuation in $C$ does not vanish, but the flow takes $C$ towards this vanishing flow point -- something like a Brownian motion trapped in a local potential well \cite{wang1945theory}. However, it is difficult to say more without specific knowledge of the systems.

\section{Proof of Theorem \ref{theo: deep-linear}}\label{appendix: deep linear}

We first prove the following theorem, which applies to an arbitrary parameter that are not necessarily local minima of of the loss.
\begin{theorem}\label{theo: E}
    Let $r=W_D\cdots W_1x - y$, $\xi_{i+1}:=W_D\cdots W_{i+2}$ and $h_i:=W_{i-1}\cdots W_1$. For all layer $i$, the equilibrium is achieved at
    \begin{equation}\label{eq:theorem_1}
        W_{i+1}^{\top}\xi_{i+1}^{\top}C_0^i\xi_{i+1}W_{i+1}=W_ih_iC_1^ih_i^{\top}W_i^{\top},
    \end{equation}
    where $C_0^i=\E[\|h_ix\|^2rr^{\top}],C_1^i:=\E[\|\xi_{i+1}^{\top}r\|^2xx^{\top}]$. Or equivalently,
    \begin{equation}
        \xi_i^{\top}C_0^i\xi_i=h_{i+1}C_1^ih_{i+1}^{\top}.
    \end{equation}
\end{theorem}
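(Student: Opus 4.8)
The plan is to reduce the statement to the generalized matrix factorization dynamics already in hand. For each layer index $i$, the deep linear loss is invariant under the double rotation $W_{i+1}\mapsto W_{i+1}A$, $W_i\mapsto A^{-1}W_i$ for invertible $A$, so between the pair $(W_{i+1},W_i)$ we have exactly the symmetry of Eq.~\eqref{eq: generalized matrix factorization}, with $\xi_{i+1}$ and $h_i$ together with $x$ playing the role of the symmetry-irrelevant data $\theta'$. Identifying $U=W_{i+1}$ and $W=W_i$, Proposition~\ref{prop: sgd dynamics for mf} applies verbatim and gives, for every symmetric $B_i$,
\[
\dot C_{B_i} = \sigma^2\Big(\sum_{k}\Tr[\Sigma_{\tilde u_k}B_i]-\sum_{l}\Tr[\Sigma_{\tilde w_l}B_i]\Big),
\]
where $\tilde u_k$ runs over the rows of $W_{i+1}$ and $\tilde w_l$ over the columns of $W_i$. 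The theorem then reduces to evaluating these two sums.

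Next I would compute the per-parameter gradients. Writing $r=\xi_{i+1}W_{i+1}W_ih_ix-y$, abbreviating the post-$W_i$ activation $a:=W_ih_ix$ and the backpropagated residual $\tilde r:=\xi_{i+1}^\top r$, the chain rule gives $\nabla_{\tilde u_k}\ell = 2\tilde r_k\, a$ for the $k$-th row and $\nabla_{\tilde w_l}\ell = 2(h_ix)_l\,W_{i+1}^\top\tilde r$ for the $l$-th column. The key simplification is that summing the rank-one outer products collapses the coordinate indices into scalar weights: $\sum_k\nabla_{\tilde u_k}\ell\,\nabla_{\tilde u_k}\ell^\top \propto \|\xi_{i+1}^\top r\|^2\,aa^\top$ and $\sum_l\nabla_{\tilde w_l}\ell\,\nabla_{\tilde w_l}\ell^\top \propto \|h_ix\|^2\,(W_{i+1}^\top\tilde r)(W_{i+1}^\top\tilde r)^\top$. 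Taking expectations and substituting $a=W_ih_ix$ and $\tilde r=\xi_{i+1}^\top r$ produces $\sum_k\Tr[\Sigma_{\tilde u_k}B_i]\propto\Tr[W_ih_iC_1^ih_i^\top W_i^\top B_i]$ and $\sum_l\Tr[\Sigma_{\tilde w_l}B_i]\propto\Tr[W_{i+1}^\top\xi_{i+1}^\top C_0^i\xi_{i+1}W_{i+1}B_i]$, with $C_0^i$ and $C_1^i$ exactly as defined in the statement.

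Setting $\dot C_{B_i}=0$ equates the two traces for every symmetric $B_i$; since both contracted matrices are symmetric, the usual argument forces the matrix identity $W_{i+1}^\top\xi_{i+1}^\top C_0^i\xi_{i+1}W_{i+1}=W_ih_iC_1^ih_i^\top W_i^\top$, which is Eq.~\eqref{eq:theorem_1}. The equivalent form is then immediate from the telescoping identities $\xi_{i+1}W_{i+1}=\xi_i$ and $W_ih_i=h_{i+1}$, which turn the two sides into $\xi_i^\top C_0^i\xi_i$ and $h_{i+1}C_1^ih_{i+1}^\top$ respectively.

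The main obstacle I anticipate is bookkeeping rather than conceptual. One must run the chain rule correctly through the deep composition — keeping $\xi_{i+1}$ and $h_i$ constant under differentiation (legitimate, since neither contains $W_{i+1}$ or $W_i$), respecting the row-versus-column conventions of Proposition~\ref{prop: sgd dynamics for mf}, and verifying that the per-coordinate rank-one outer products genuinely telescope into the clean $\|\xi_{i+1}^\top r\|^2$ and $\|h_ix\|^2$ weights without leftover cross terms. The per-parameter covariances are evaluated exactly as in the proof of Theorem~\ref{theo: fixed point of standard mf}, via per-sample gradient outer products, so no analytic input beyond Proposition~\ref{prop: sgd dynamics for mf} is required.
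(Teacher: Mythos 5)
Your proposal is correct and follows essentially the same route as the paper's proof: both apply Proposition~\ref{prop: sgd dynamics for mf} to the adjacent pair $(W_{i+1},W_i)$, compute the per-sample gradients with $\xi_{i+1}$ and $h_i$ held fixed, take expectations of the rank-one outer products to obtain $W_ih_iC_1^ih_i^\top W_i^\top$ and $W_{i+1}^\top\xi_{i+1}^\top C_0^i\xi_{i+1}W_{i+1}$, and conclude the matrix identity from the arbitrariness of the symmetric $B_i$. Your per-row/per-column bookkeeping and the explicit telescoping identities $\xi_{i+1}W_{i+1}=\xi_i$, $W_ih_i=h_{i+1}$ (which the paper asserts without comment) are only presentational differences from the paper's matrix-derivative computation.
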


\begin{proof}
    By Proposition~\ref{prop: sgd dynamics for mf},
    \begin{equation}\label{eq:dynamics}
        \frac{d}{dt}C_B^i=\sigma^2\left(\E\Tr\left[\frac{\partial\ell}{\partial W_{i+1}}B\left(\frac{\partial\ell}{\partial W_{i+1}}\right)^{\top}\right]-\E\Tr\left[\left(\frac{\partial\ell}{\partial W_i}\right)^{\top}B\frac{\partial\ell}{\partial W_i}\right]\right).
    \end{equation}
    The derivatives are
    \begin{align}
        \frac{\partial\ell}{\partial W_{i+1}}&=\xi_{i+1}^{\top}r(W_ih_ix)^{\top},\\
        \frac{\partial\ell}{\partial W_i}&=\xi_{i+1}^{\top}W_{i+1}^{\top}r(h_ix)^{\top}.
    \end{align}
    Therefore, the two terms on R.H.S of Eq. \eqref{eq:dynamics} are given by
    \begin{align}
        \E\Tr\left[\frac{\partial\ell}{\partial W_{i+1}}B\left(\frac{\partial\ell}{\partial W_{i+1}}\right)^{\top}\right]&=\E\Tr[\xi_{i+1}^{\top}r(W_ih_ix)^{\top}B(W_ih_ix)r^{\top}\xi_{i+1}],\nonumber\\
        &=\E\|\xi_{i+1}^{\top}r\|^2\Tr[h_ixx^{\top}h_i^{\top}W_i^{\top}BW_i]\\
        \E\Tr\left[\left(\frac{\partial\ell}{\partial W_i}\right)^{\top}B\frac{\partial\ell}{\partial W_i}\right]&=\Tr[W_{i+1}^{\top}\xi_{i+1}^{\top}r(h_ix)^{\top}B(h_ix)r^{\top}\xi_{i+1}W_{i+1}]\nonumber\\
        &=\E[\|h_ix\|^2\Tr[W_{i+1}^{\top}\xi_{i+1}^{\top}rr^{\top}\xi_{i+1}W_{i+1}B]].
    \end{align}
    Because the matrix $B$ is arbitrary, we can let $B_{i,j}=\delta_{i,k}\delta_{j,l}+\delta_{i,l}\delta_{j,k}$. Then, the two terms become
    \begin{align}\label{eq: dynamics_2}
        \E\Tr\left[\frac{\partial\ell}{\partial W_{i+1}}B\left(\frac{\partial\ell}{\partial W_{i+1}}\right)^{\top}\right]&=2\E[\|\xi_{i+1}^{\top}r\|^2\tilde{w}_{i,k}^{\top}h_ixx^{\top}h_i^{\top}\tilde{w}_{i,l}],\\ \label{eq: dynamics_3}
        \E\Tr\left[\left(\frac{\partial\ell}{\partial W_i}\right)^{\top}B\frac{\partial\ell}{\partial W_i}\right]&=2\E[\|h_ix\|^2\tilde{w}_{i+1,k}^{\top}\xi_{i+1}^{\top}rr^{\top}\xi_{i+1}\tilde{w}_{i+1,l}].
    \end{align}
    Here, we define the vectors $W_i=(\tilde{w}_{i,1}^{\top},\cdots,\tilde{w}_{i,d}^{\top})^{\top}$ and $W_{i+1}=(\tilde{w}_{i+1,1},\cdots,\tilde{w}_{i+1,d})$. Because Eq. \eqref{eq: dynamics_2} and \eqref{eq: dynamics_3} hold for arbitrary $k,l$, we have
    \begin{align}
        \E\Tr\left[\frac{\partial\ell}{\partial W_{i+1}}B\left(\frac{\partial\ell}{\partial W_{i+1}}\right)^{\top}\right]&=2W_ih_i\E[\|\xi_{i+1}^{\top}r\|^2xx^{\top}]h_i^{\top}W_i^{\top},\\ 
        \E\Tr\left[\left(\frac{\partial\ell}{\partial W_i}\right)^{\top}B\frac{\partial\ell}{\partial W_i}\right]&=2W_{i+1}^{\top}\xi_{i+1}^{\top}\E[\|h_ix\|^2rr^{\top}]\xi_{i+1}W_{i+1}.
    \end{align}
    For Eq. \eqref{eq:dynamics} to be $0$, we must have 
    \begin{equation}
        W_ih_i\E[\|\xi_{i+1}^{\top}r\|^2xx^{\top}]h_i^{\top}W_i^{\top}=W_{i+1}^{\top}\xi_{i+1}^{\top}\E[\|h_ix\|^2rr^{\top}]\xi_{i+1}W_{i+1},
    \end{equation}
    which is Eq. \eqref{eq:theorem_1}. The proof is complete.
\end{proof}

We are now ready to prove Theorem \ref{theo: deep-linear}.

\begin{proof}
It suffices to specialize Theorem~\ref{theo: E} to the global minimum. At the global minimum, we can define
\begin{equation}
    r=W_D^*\cdots W_1^*x-y=\epsilon.
\end{equation}
Then, Eq. \eqref{eq:theorem_1} can be written as
\begin{equation}\label{eq: general_i}
    W_{i+1}^{\top}\frac{W_{i+2}^{\top}\cdots W_D^{\top}\Sigma_{\epsilon}W_D\cdots W_{i+2}}{\Tr[W_{i+2}^{\top}\cdots W_D^{\top}\Sigma_{\epsilon}W_D\cdots W_{i+2}]} W_{i+1}=W_i\frac{W_{i-1}\cdots W_1\Sigma_x W_1^{\top}\cdots W_{i-1}^{\top}}{\Tr[W_{i-1}\cdots W_1\Sigma_x W_1^{\top}\cdots W_{i-1}^{\top}]}W_i^{\top}.
\end{equation}
To solve Eq. \eqref{eq: general_i}, we substitute $W_D$ and $W_1$ with $W_1'=W_1\sqrt{\Sigma_x}$ and $W_D'=\sqrt{\Sigma_{\epsilon}}W_D$, which transform Eq. \eqref{eq: general_i} into
\begin{equation}\label{eq: equilibrium_i}
    W_{i+1}^{\top}\frac{W_{i+2}^{\top}\cdots W_D'^{\top}W_D'\cdots W_{i+2}}{\Tr[W_{i+2}^{\top}\cdots W_D'^{\top}W_D'\cdots W_{i+2}]} W_{i+1}=W_i\frac{W_{i-1}\cdots W_1'W_1'^{\top}\cdots W_{i-1}^{\top}}{\Tr[W_{i-1}\cdots W_1'W_1'^{\top}\cdots W_{i-1}^{\top}]}W_i^{\top}.
\end{equation}
The global minimum condition can be written as
\begin{equation}
    W_D'W_{D-1}\cdots W_2W_1'=\sqrt{\Sigma_{\epsilon}}V\sqrt{\Sigma_x}:=V'.
\end{equation} 
Then, we can decompose the matrices $W_1',\cdots,W_D'$ as
\begin{equation}
    W_D'=L\Sigma_D U_{D-1}^{\top},\ W_i=U_{i}\Sigma_i U_{i-1}^{\top}(i\neq 1,D),\ W_1'=U_1\Sigma_1 R,
\end{equation}
where $\Sigma_D,\cdots,\Sigma_1\in\mathbb{R}^{d\times d}$, $L\in \mathbb{R}^{d_y\times d}$, $U_i\in\mathbb{R}^{d_i\times d}$, $R\in\mathbb{R}^{d\times d_x}$ with $d:=\text{rank}(V')$ and arbitrary $d_i$. The matrices $U_i$ satisfy $U_i^{\top}U_i=I_{d\times d}$. By substituting the decomposition into Eq. \eqref{eq: equilibrium_i}, we have
\begin{equation}\label{eq: equilibrium_2}
    \frac{\Sigma_{i+1}\cdots\Sigma_D\Sigma_D\cdots\Sigma_{i+1}}{\Tr[\Sigma_{i+2}\cdots\Sigma_D\Sigma_D\cdots\Sigma_{i+2}]}=\frac{\Sigma_i\cdots\Sigma_1\Sigma_1\cdots\Sigma_i}{\Tr[\Sigma_{i-1}\cdots\Sigma_1\Sigma_1\cdots\Sigma_{i-1}]}.
\end{equation}
Since these diagonal matrices commute with each other, we can see $\Sigma_i=cI_{d\times d}$. Then we move on to fix the parameter $c$. By taking $i=1$ and $i=D-1$ in Eq. \eqref{eq: equilibrium_2}, we obtain
\begin{align}\label{eq: Sigma_1}
    \frac{\Sigma_2^2\cdots\Sigma_D^2}{\Tr[\Sigma_3^2\dots\Sigma_D^2]}=c^2\frac{\Sigma_D^2}{\Tr[\Sigma_D^2]}&=\frac{\Sigma_1^2}{d},\\ \label{eq: Sigma_2}
    \frac{\Sigma_D^2}{d}&=\frac{\Sigma_1^2\cdots\Sigma_{D-1}^2}{\Tr[\Sigma_1^2\dots\Sigma_{D-1}^2]}=c^2\frac{\Sigma_1^2}{\Tr[\Sigma_1^2]},
\end{align}
where $d$ represents the dimension of the learning space. By taking trace to both sides of Eqs. \eqref{eq: Sigma_1} and \eqref{eq: Sigma_2}, we can see $\Tr[\Sigma_1^2]=\Tr[\Sigma_D^2]$ and hence, $\Sigma_1=\Sigma_D$. The parameter $c$ is given by
\begin{equation}
    c=\sqrt{\frac{\Tr[\Sigma_1^2]}{d}}.
\end{equation}
With the SVD decomposition $V'=LS'R$, we have
\begin{equation}
    \Sigma_1^2c^{D-2}=S'.
\end{equation}
Therefore, the solutions for $c$ and $\Sigma_1$ are
\begin{equation}
    c=\left(\frac{\Tr S'}{d}\right)^{1/D},\ \Sigma_1=\frac{\sqrt{S'}}{c^{(D-2)/2}}=\left(\frac{d}{\Tr S'}\right)^{(D-2)/2D}\sqrt{S'}.
\end{equation}

The scaling of the diagonal matrices are shown as
\begin{align}
    \Tr[\Sigma_1^2]=d^{1-2/D}(\Tr S')^{2/D},\ \Tr[\Sigma_i^2]=(\Tr S')^{2/D}d^{1-2/D}=\Tr[\Sigma_1^2].
\end{align}
The proof is complete.
\end{proof}

\end{document}